\newcommand{\paragrapha}[2][5pt]{\noindent\textbf{#2}}
\newtheorem{assumption}[theorem]{Assumption}
\newcommand{\bs}{\boldsymbol}
\newcommand{\dif}{\mathrm d}
\newcommand{\bigO}{\mathcal O}
\renewcommand{\l}{\lambda}
\DeclareMathOperator*{\argmin}{arg\,min}
\DeclareMathOperator{\Pred}{Predictor}
\DeclareMathOperator{\Corr}{Corrector}
\newcommand{\NFE}{\mathrm{NFE}}
\newcommand{\CFG}{\mathrm{CFG}}
\newcommand{\textbfg}[1]{\cellcolor{Gray}\textbf{#1}}
\newcommand{\ours}{DC-Solver}
\definecolor{Gray}{gray}{0.9}
\newcolumntype{C}[1]{>{\centering\arraybackslash}p{#1}}
\begin{document}
\title{DC-Solver: Improving Predictor-Corrector Diffusion Sampler via Dynamic Compensation} 

\titlerunning{DC-Solver}

\author{Wenliang Zhao\orcidlink{0000-0002-0920-1576} \and Haolin Wang\orcidlink{0009-0003-8852-174X}
 \and Jie Zhou\orcidlink{0009-0009-6880-7058} \and Jiwen Lu\orcidlink{0000-0002-6121-5529}\thanks{Corresponding author}}

\authorrunning{W.~Zhao et al.}

\institute{Department of Automation, Tsinghua University, China \and
Beijing National Research Center for Information Science and Technology, China\\
\email{zhaowl20@mails.tsinghua.edu.cn, wanghowlin@gmail.com, \{jzhou,lujiwen\}@tsinghua.edu.cn
}
}

\maketitle

\begin{abstract}
Diffusion probabilistic models (DPMs) have shown remarkable performance in visual synthesis but are computationally expensive due to the need for multiple evaluations during the sampling. Recent predictor-corrector diffusion samplers have significantly reduced the required number of function evaluations (NFE), but inherently suffer from a misalignment issue caused by the extra corrector step, especially with a large classifier-free guidance scale (CFG). In this paper, we introduce a new fast DPM sampler called DC-Solver, which leverages dynamic compensation (DC) to mitigate the misalignment of the predictor-corrector samplers. The dynamic compensation is controlled by compensation ratios that are adaptive to the sampling steps and can be optimized on only 10 datapoints by pushing the sampling trajectory toward a ground truth trajectory. We further propose a cascade polynomial regression (CPR) which can instantly predict the compensation ratios on unseen sampling configurations. Additionally, we find that the proposed dynamic compensation can also serve as a plug-and-play module to boost the performance of predictor-only samplers. Extensive experiments on both unconditional sampling and conditional sampling demonstrate that our DC-Solver can consistently improve the sampling quality over previous methods on different DPMs with a wide range of resolutions up to 1024$\times$1024. Notably, we achieve 10.38 FID (NFE=5) on unconditional FFHQ and 0.394 MSE (NFE=5, CFG=7.5) on Stable-Diffusion-2.1. Code is available at \url{https://github.com/wl-zhao/DC-Solver}.

  \keywords{Diffusion Model \and Fast Sampling \and Visual Generation}
\end{abstract}

\begin{figure}[!t]
    \centering
    \includegraphics[width=\linewidth]{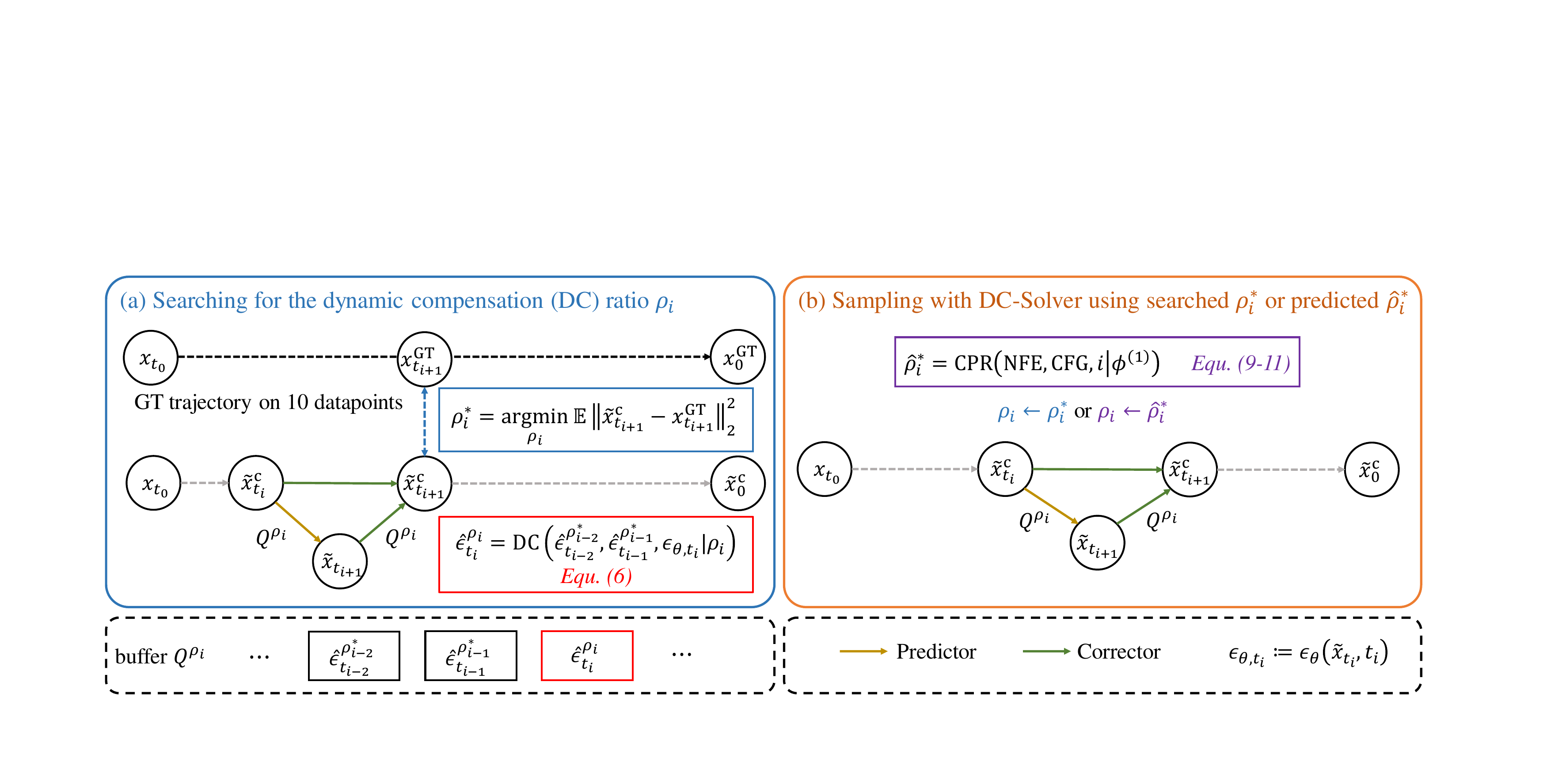}
    \caption{\textbf{The main idea of \textit{DC-Solver}.} \textbf{(a) Searching.} We propose dynamic compensation (DC) to mitigate the misalignment issue in the predictor-corrector diffusion sampler. The compensation is controlled by the ratios $\{\rho_i\}$ which are adaptive to the sampling step and can be optimized by pushing the sampling trajectory toward the ground truth trajectory on only 10 datapoints. \textbf{(b) Sampling.} The compensation ratios can be either efficiently searched as in (a) or instantly predicted by the cascade polynomial regression (CPR) given the desired NFE and CFG.}
    \label{fig:teaser}
\end{figure}

\section{Introduction}

Diffusion probabilistic models (DPMs)~\cite{sohl2015deep,ho2020denoising,song2021score,rombach2022high} have emerged as the new state-of-the-art generative models, demonstrating remarkable quality in various visual synthesis tasks~\cite{dhariwal2021diffusion,ho2022video,nichol2021glide,gu2022vector,zhang2023adding,mou2023t2i,ruiz2023dreambooth,liu2023zero,poole2022dreamfusion,wang2023prolificdreamer,mokady2023null,hertz2022prompt,parmar2023zero,gal2022image,shi2023dragdiffusion,meng2021sdedit,rombach2022high,brooks2023instructpix2pix}. Recent advances in large-scale pre-training of DPMs on image-text pairs also allow the generation of high-fidelity images given the text prompts~\cite{rombach2022high}. However, sampling from DPMs requires gradually performing denoising from Gaussian noises, leading to multiple evaluations of the denoising network $\bs\epsilon_\theta$, which is computationally expensive and time-consuming. Therefore, it is of great interest to design fast samplers of DPMs~\cite{zhang2022fast_deis,lu2022dpmsolver,lu2022dpmsolverpp,zhao2023unipc} to improve the sampling quality with few numbers of function evaluations (NFE).

Recent efforts on accelerating the sampling of DPMs can be roughly divided into training-based methods~\cite{salimans2022progressive,watson2021learning,nichol2021improved,song2023consistency,liu2022flow} and training-free methods~\cite{song2020denoising_ddim,lu2022dpmsolver,lu2022dpmsolverpp, zhang2022fast_deis,liu2022pseudo, zhang2022gddim,zhao2023unipc}. The latter families of approaches are generally preferred in applications because they can be applied to any pre-trained DPMs without the need for fine-tuning or distilling the denoising network. Modern training-free DPM samplers~\cite{lu2022dpmsolver,lu2022dpmsolverpp,zhang2022fast_deis,zhao2023unipc} mainly focus on solving the diffusion ODE instead of SDE~\cite{ho2020denoising, song2021score,bao2022analytic,zhang2022gddim}, since the stochasticity would deteriorate the sampling quality with few NFE. Specifically, \cite{lu2022dpmsolverpp,zhang2022fast_deis} adopt the exponential integrator~\cite{hochbruck_ostermann_2010} to significantly reduce the approximation error of the sampling process. More recently, Zhao~\etal~\cite{zhao2023unipc} proposed a predictor-corrector framework called UniPC, which can enhance the sampling quality without extra model evaluations. However, the extra corrector step will cause a misalignment between the intermediate corrected result $\tilde{\bs x}_{t_i}^{\rm c}$ and the reused model output $\bs\epsilon_\theta(\tilde{\bs x}_{t_i}, t_i)$. The influence of the misalignment has been witnessed in an analysis of UniPC~\cite{zhao2023unipc}, and it has been proven that re-computing the $\bs\epsilon_\theta(\tilde{\bs x}_{t_i}^{\rm c}, t_i)$ to ensure the alignment is indeed beneficial. However, naively re-computing $\bs\epsilon_\theta(\tilde{\bs x}_{t_i}^{\rm c}, t_i)$ would bring extra evaluations of the $\bs\epsilon_\theta$ and double the total computational costs.

In this paper, we propose a new fast sampler for DPMs called DC-Solver, which leverages dynamic compensation (DC) to mitigate the misalignment issue in the predictor-corrector framework. Specifically, we adopt the Lagrange interpolation of previous model outputs at a new timestep, which is controlled by a learned compensation ratio $\rho_i^*$. The compensation ratios are optimized by minimizing the $\ell_2$-distance between the intermediate sampling results and a ground truth trajectory, which can be achieved in less than 5min on only 10 datapoints. By examining the learned compensation ratios on different numbers of function evaluations (NFE) and classifier-free guidance scale (CFG), we further propose a cascade polynomial regression (CPR) that can instantly predict the desired compensation ratios on unseen NFE/CFG. Equipped with CPR, our DC-Solver allows users to freely adjust the configurations of CFG/NFE and substantially accelerates the sampling process. We also illustrate our method in~\Cref{fig:teaser}.

We perform extensive experiments on both unconditional sampling and conditional sampling tasks, where we show that DC-Solver consistently outperforms previous methods by large margins in 5$\sim$10 NFE. In the experiments on the state-of-the-art Stable-Diffusion~\cite{rombach2022high} (SD), we find DC-Solver can obtain the best sampling quality on different CFG (1.5$\sim$7.5), NFE (5$\sim$10) and pre-trained models (SD1.4, SD1.5, SD2.1, SDXL). Notably, DC-Solver achieves 0.394 MSE on SD2.1 with a guidance scale of 7.5 and only 5 NFE. By performing the cascade polynomial regression to the compensation ratios searched on only a few configurations, our DC-Solver can generalize to unseen NFE/CFG and surpass previous methods. Besides, we find the proposed dynamic compensation can also serve as a plug-and-play component to boost the performance of predictor-only solvers like~\cite{song2020denoising_ddim,lu2022dpmsolverpp}. We provide some qualitative comparisons between our DC-Solver and previous methods in~\Cref{fig:viz}, where it can be clearly observed that DC-Solver can generate high-resolution and photo-realistic images with more details in only 5 NFE.

\captionsetup[subfigure]{justification=centering}
\begin{figure*}[!t]
\begin{subfigure}{0.25\linewidth}
\centering
\includegraphics[width=.95\linewidth]{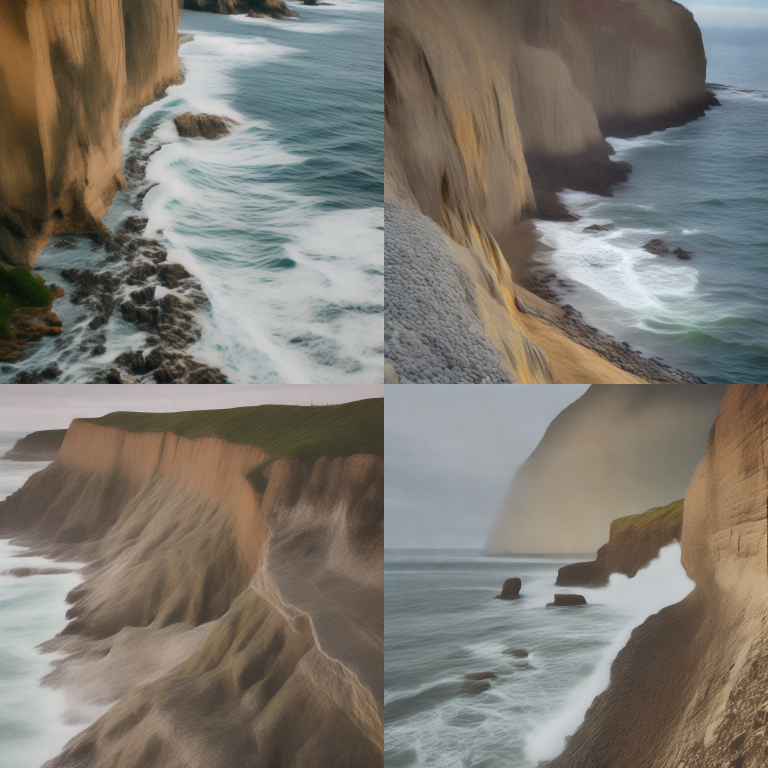}
\caption{DPM-Solver++~\cite{lu2022dpmsolverpp} \\ (MSE 0.443)}
\end{subfigure}%
\begin{subfigure}{0.25\linewidth}
\centering
\includegraphics[width=.95\linewidth]{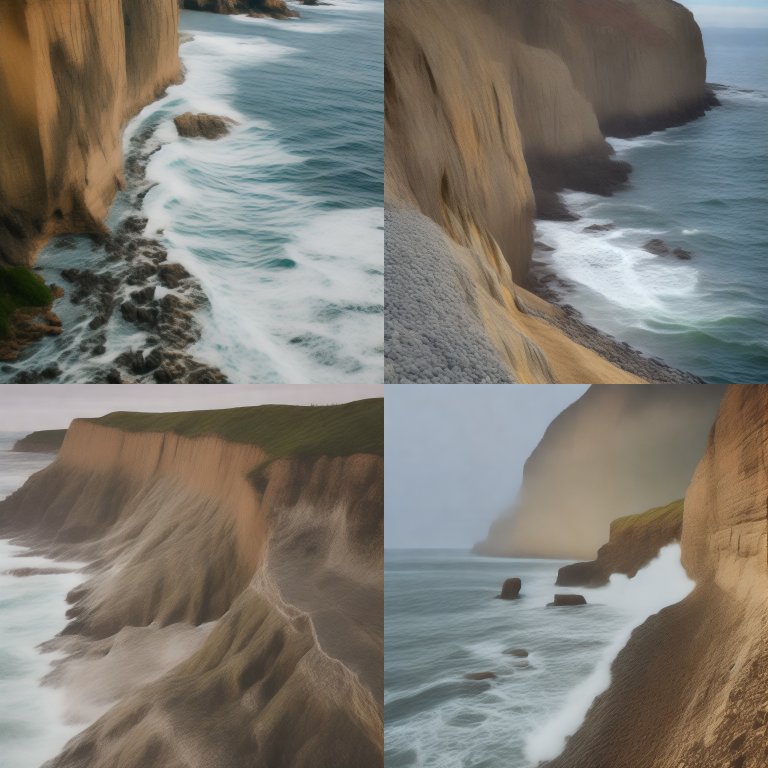}
\centering
\caption{DEIS~\cite{zhang2022fast_deis}\\ (MSE 0.436)}
\end{subfigure}%
\begin{subfigure}{.25\linewidth}
\centering
\includegraphics[width=.95\linewidth]{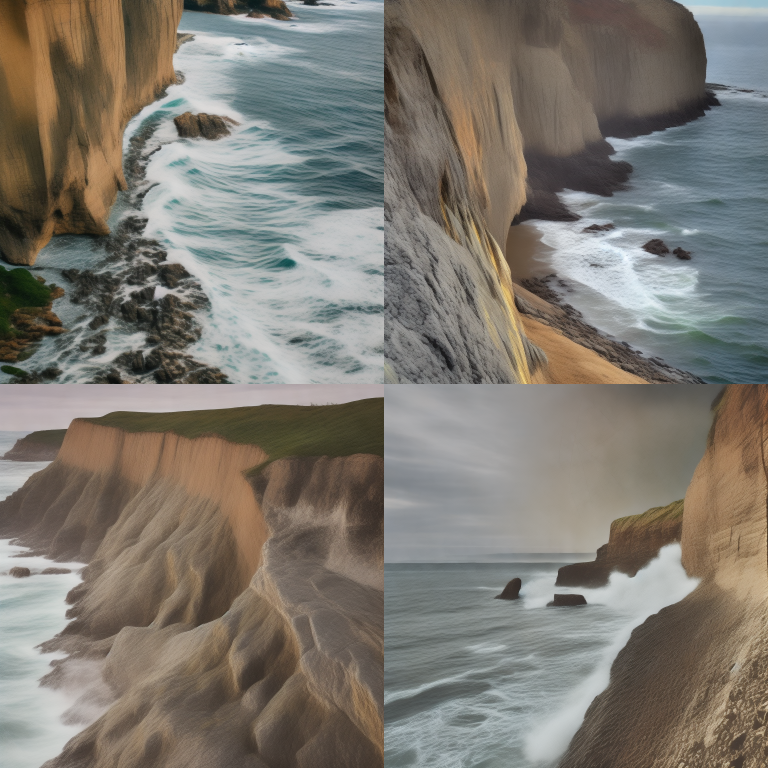}
\caption{UniPC~\cite{zhao2023unipc} \\(MSE 0.434)}
\end{subfigure}%
\begin{subfigure}{.25\linewidth}
\centering
\includegraphics[width=.95\linewidth]{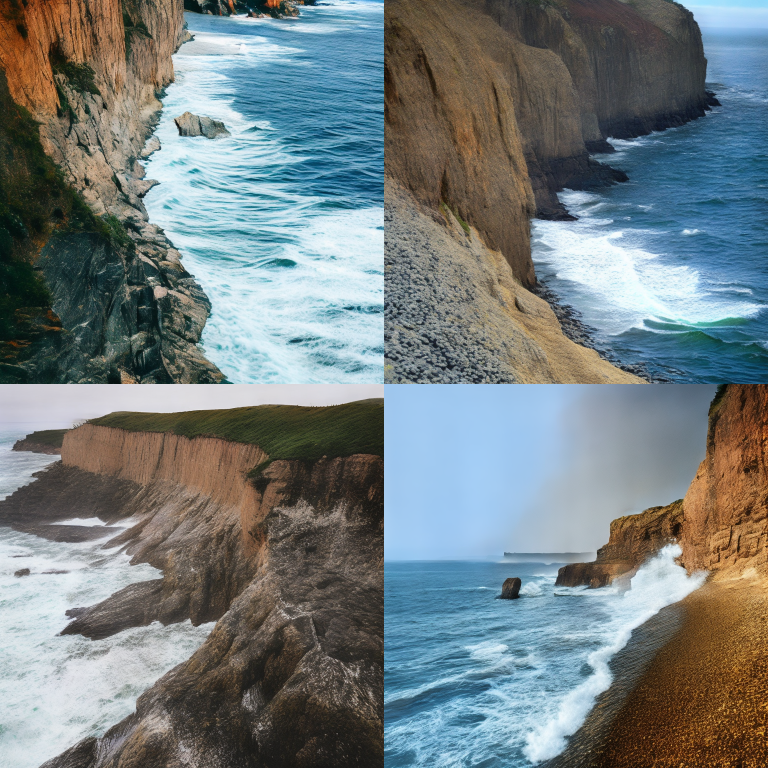}
\caption{DC-Solver (\textbf{Ours}) \\(MSE 0.394)}
\end{subfigure}%
\caption{\textbf{Qualitative comparisons on Stable-Diffusion-2.1.} Images above are sampled from SD2.1 (768$\times$768) using the text prompt ``\textit{A photo of a serene coastal cliff with waves crashing against the rocks below}" with a classifier-free guidance scale of 7.5 and only \textbf{5} number of function evaluations (NFE).  We provide the generated images from 4 random initial noises for each method. We show that DC-Solver is able to generate high-resolution and photo-realistic images with more details. Best viewed in color.
}\label{fig:viz}
\end{figure*}

\section{Related Work}
\paragrapha{Diffusion probabilistic models.} Diffusion probabilistic models (DPMs), originally proposed in~\cite{sohl2015deep,ho2020denoising,song2021score}, have demonstrated impressive ability in high-fidelity visual synthesis. The basic idea of DPMs is to train a denoising network $\bs\epsilon_\theta$ to learn the reverse of a Markovian diffusion process~\cite{ho2020denoising} through score-matching~\cite{song2021score}. To reduce the computational costs in high-resolution image generation and add more controllability, Rombach~\etal~\cite{rombach2022high} propose to learn a DPM on latent space and adopt the cross-attention~\cite{vaswani2017attention} to inject conditioning inputs. Based on the latent diffusion models~\cite{rombach2022high}, a series of more powerful DPMs called Stable-Diffusion~\cite{rombach2022high} are released, which are trained on a large-scale text-image dataset LAION-5B~\cite{schuhmann2021laion} and soon become famous for the high-resolution text-to-image generation. In practical usage, classifier-free guidance~\cite{ho2022classifier} (CFG) is usually adopted to encourage the adherence between the text prompt and the generated image. Despite the impressive synthesis quality of DPMs, they suffer from heavy computational costs during the inference due to the need for multiple evaluations of the denoising network. In this paper, we focus on designing a fast sampler that can accelerate the sampling process of a wide range of DPMs and is suitable to different CFG, thus promoting the application of DPMs.

\paragrapha{Fast DPM samplers.} Developing fast samplers for DPMs has gained increasing attraction since the prevailing of Stable Diffusion~\cite{rombach2022high}. Modern fast samplers of DPMs usually work by discretizing the diffusion ODE or SDE. Among those, ODE-based methods~\cite{song2020denoising_ddim,lu2022dpmsolver,lu2022dpmsolverpp,zhao2023unipc} are shown to be more effective in few-step sampling due to the absence of stochasticity. The widely used DDIM~\cite{song2020denoising_ddim} can be viewed as a 1-order approximation of the diffusion ODE. DPM-Solver~\cite{lu2022dpmsolver} and DEIS~\cite{zhang2022fast_deis} adopt exponential integrator to develop high-order solvers and significantly reduce the sampling error. DPM-Solver++~\cite{lu2022dpmsolverpp} investigates the data-prediction parameterization and multistep high-order solver which are proven to be useful in practice, especially for conditional sampling. UniPC~\cite{zhao2023unipc} borrows the merits of the predictor-corrector paradigm~\cite{hochbruck2005explicit} in numeral analysis and finds the corrector can substantially improve the sampling quality in the few-step sampling. However, UniPC~\cite{zhao2023unipc} suffers from a misalignment issue caused by the extra corrector step, which is observed also and mentioned in their original paper. In this work, we aim to mitigate the misalignment through a newly proposed approach called dynamic compensation.

\section{Method}
\subsection{Preliminaries: Fast Sampling of DPMs}
We start by briefly reviewing the basic ideas of diffusion probabilistic models (DPMs) and how to efficiently sample from them. DPMs aim to model the data distribution $q_0(\bs x_0)$ by learning the reverse of a forward diffusion process. Given the noise schedule $\{\alpha_t,\sigma_t\}_{t=0}^{T}$, the diffusion process gradually adds noise to a clean data point $\bs x_0$ and the equivalent transition can be computed by $\bs x_t = \alpha_t\bs x_0 + \sigma_t\bs \epsilon, \bs\epsilon \in \mathcal{N}(\bs 0, \bs I)$, and the resulting distribution $q_T(\bs x_T)$ is approximately Gaussian. During training, a network $\bs \epsilon_\theta$ is learned to perform score matching~\cite{batzolis2021conditional} by estimating the $\bs \epsilon$ given the current $\bs x_t$, timestep $t$ and the condition $c$. Specifically, the training objective is to minimize:
\begin{equation}
    \mathbb{E}_{\bs{x}_0, \bs{\epsilon}, t}\left[w(t)\|\bs\epsilon_\theta (\bs x_t, t, c) - \bs\epsilon\|^2_2\right].
\end{equation}
The above simple objective makes it more stable to train DPMs on large-scale image-text pairs and enables the generation of high-fidelity visual content. However, sampling from DPMs is computationally expensive due to the need for multiple evaluations of the denoising network $\bs\epsilon_\theta$ (\eg, 200 steps for DDIM~\cite{song2020denoising_ddim}). 

Modern fast samplers for DPMs~\cite{lu2022dpmsolver,lu2022dpmsolverpp,zhang2022fast_deis} significantly reduce the required number of function evaluations (NFE) by solving the diffusion ODE with a multistep paradigm, which leverages the model outputs of previous points to improve convergence. Recently, UniPC~\cite{zhao2023unipc} proposes to use a corrector to refine the result at each sampling step, which can further improve the sampling quality. Denote the sampling timesteps as $\{t_i\}_{i=0}^M$ and let $Q$ be the buffer to store previous model outputs of the denoising network, the update logic of modern samplers of DPMs from $t_{i-1}$ to $t_i$ can be summarized as follows:
\begin{align}
     &\tilde{\bs x}_{t_i} \leftarrow \Pred(\tilde{\bs x}_{t_{i-1}}^{\rm c}, Q),\label{equ:predictor_step}\\
     &\tilde{\bs x}_{t_i}^{\rm c} \leftarrow \Corr(\tilde{\bs x}_{t_i},  \bs\epsilon_\theta(\tilde{\bs x}_{t_i}, t_i), Q)\quad(\text{optional})\label{equ:corrector_step}\\
     &Q \overset{\mathrm{buffer}}{\leftarrow} \bs\epsilon_\theta(\tilde{\bs x}_{t_i}, t_i),\label{equ:buffer_Q}
\end{align}
where $\tilde{\bs x}_{t_i}^{\rm c}$ denote the refined result after the corrector and $\tilde{\bs x}_{t_i}^{\rm c}=\tilde{\bs x}_{t_i}$ if no corrector is used as in~\cite{zhang2022fast_deis,lu2022dpmsolverpp}.

\subsection{Better Alignment via Dynamic Compensation}\label{sec:dc}
Although the extra corrector step~\eqref{equ:corrector_step} can improve the theoretical convergence order, there exists a misalignment between $\tilde{\bs x}_{t_i}^{\rm c}$ and $\bs\epsilon_\theta(\tilde{\bs x}_{t_i}, t_i)$, \ie, the $\bs\epsilon_\theta(\tilde{\bs x}_{t_i}, t_i)$ pushed into the buffer $Q$ is not computed from the corrected intermediate result $\tilde{\bs x}_{t_i}^{\rm c}$. It is also witnessed in~\cite{zhao2023unipc} that replacing the $\bs\epsilon_\theta(\tilde{\bs x}_{t_i}, t_i)$ with $\bs\epsilon_\theta(\tilde{\bs x}_{t_i}^c, t_i)$ (which would bring an extra forward of $\bs\epsilon_\theta$) can further improve the sampling quality. The effects of the misalignment will be further amplified by the large guidance scale in the widely used classifier-free guidance~\cite{ho2022classifier} (CFG) for conditional sampling:
\begin{equation}
    \bar{\bs\epsilon}_\theta (\bs x_t, t, c) = s\cdot \bs\epsilon_\theta (\bs x_t, t, c) + (1-s)\cdot \bs\epsilon_\theta (\bs x_t, t, \O),\label{equ:conditional}
\end{equation}
where $s>1$ is the guidance scale and $s=7.5$ is usually adopted in text-to-image synthesis on Stable-Diffusion~\cite{rombach2022high}.

\begin{figure*}
\begin{minipage}{.49\textwidth}
\begin{algorithm}[H]
\setstretch{1.053}
\caption{Searching.}
\label{alg:search}
\begin{algorithmic}
\small
   \STATE {\bfseries Require:} current timestep $t_i$, a ground truth trajectory ${\bs x}_t^{\mathrm{GT}, N}$, the (corrected) intermediate results $\tilde{\bs x}_{t_i}^{\mathrm{c}, N}$, a buffer $Q$, learning rate $\alpha$, number of iterations  $L$.

    \STATE $\rho_i \leftarrow 1.0$, $Q^{\rm copy}\leftarrow Q$
    \FOR{$l=1$ {\bfseries to} $L$}
    \STATE compute $\hat{\bs\epsilon}^{\rho_i}(\tilde{\bs x}_{t_i}^{\mathrm{c}, N}, t_i)$ via~\eqref{equ:dynamic_compensation}
    \STATE $Q^{\rho_i} \leftarrow [Q_{[:-1]}^{\rm copy}, \hat{\bs\epsilon}^{\rho_i}(\tilde{\bs x}_{t_i}^{\mathrm{c}, N}, t_i)]$
    \STATE $\tilde{\bs x}_{t_{i+1}}^N \leftarrow {\rm Pred}(\tilde{\bs x}_{t_{i}}^{\mathrm{c}, N}, Q^{\rho_i})$
    \STATE ${\bs x}_{t_{i+1}}^{\mathrm{c}, N} \leftarrow {\rm Corr}(\tilde{\bs x}_{t_{i+1}}^N,  \bs\epsilon_\theta(\tilde{\bs x}_{t_i}^N, t_i), Q^{\rho_i})$
    \STATE $\rho_i \leftarrow \rho_i - \alpha \nabla_{\rho_i} \|{\bs x}_{t_{i+1}}^{\mathrm{c}, N} - {\bs x}_t^{\mathrm{GT}, N}\|_2^2$
    \ENDFOR
    \STATE {\bfseries return:} $\rho_i$, $Q^{\rho_i}$
\end{algorithmic}
\end{algorithm}
   \vspace{1pt}
\end{minipage}\hfill
\begin{minipage}{.49\textwidth}
\begin{algorithm}[H]
\setstretch{1.0825}
\caption{Sampling.}
\label{alg:sampling}
\begin{algorithmic}
   \small
   \STATE {\bfseries Require:} sampling timesteps $\{t_i\}_{i=0}^{M}$, initial noise $\tilde{\bs x}_{t_0}^{\mathrm{c}}\sim \mathcal{N}(\bs 0, \bs I)$, compensation ratios $\{\rho_i^*\}_{i=0}^{M-1}$ either searched by~\eqref{equ:search} or directly predicted by~\eqref{equ:polyregress}.

   \FOR{$i=0$ {\bfseries to} $M-1$}
   \IF{$i \ge K$} 
        \STATE compute $\hat{\bs\epsilon}^{\rho_i^*}(\tilde{\bs x}_{t_i}^{\mathrm{c}}, t_i)$ via~\eqref{equ:dynamic_compensation}
       \STATE $Q \leftarrow [Q_{[:-1]}, \hat{\bs\epsilon}^{\rho_i^*}(\tilde{\bs x}_{t_i}^{\mathrm{c}}, t_i)]$
   \ENDIF
       \STATE $\tilde{\bs x}_{t_{i+1}} \leftarrow {\rm Pred}(\tilde{\bs x}_{t_{i}}^{\mathrm{c}}, Q)$
       \STATE ${\bs x}_{t_{i+1}}^{\mathrm{c}} \leftarrow {\rm Corr}(\tilde{\bs x}_{t_{i+1}},  \bs\epsilon_\theta(\tilde{\bs x}_{t_i}, t_i), Q)$
   \ENDFOR

   \STATE {\bfseries return:} ${\bs x}_{t_{M}}^{\mathrm{c}}$

\end{algorithmic}
\end{algorithm}
\vspace{1pt}
\end{minipage}
\vspace{-30pt}
\end{figure*}

\paragrapha{Dynamic compensation.} The aforementioned misalignment issue motivates us to seek for a better method to approximate $\bs\epsilon_\theta(\tilde{\bs x}_{t_i}^c, t_i)$ after~\eqref{equ:corrector_step} with no extra NFE. To achieve this, we propose a new method called dynamic compensation (DC) that leverages the previous model outputs stored in the buffer $Q$ to approach the target $\bs\epsilon_\theta(\tilde{\bs x}_{t_i}^c, t_i)$. Given a ratio $\rho_i$, let $t_i' = \rho_i t_{i} + (1 - \rho_i)t_{i-1}$, we adopt the following estimation based on Lagrange interpolation:
\begin{align}
    &\hat{\bs\epsilon}^{\rho_i}(\tilde{\bs x}_{t_i}^c, t_i) = \sum_{k=0}^{K}\prod_{\substack{0\le l\le K\\l\neq k}}\frac{t'_i - t_{i-l}}{t_{i-k} - t_{i-l}}\bs\epsilon_{\theta}(\tilde{\bs x}_{t_{i-k}}, t_{i-k}),\label{equ:dynamic_compensation}
\end{align}
where $K$ represents the order of the Lagrange interpolation and $\{\bs\epsilon_{\theta}(\tilde{\bs x}_{t_{i-k}}, t_{i-k})\}_{k=0}^K$ are previous model outputs retrieved from buffer $Q$. The above estimation is then used to replace the last item in $Q$ to obtain a new buffer:
\begin{equation}
Q^{\rho_i} \leftarrow [Q_{[:-1]}, \hat{\bs\epsilon}^{\rho_i}(\tilde{\bs x}_{t_i}^c, t_i)],
\end{equation}
where $Q_{[:-1]}$ denotes the elements in $Q$ except the last one. Note that when $\rho_i=1.0$ we have $\hat{\bs\epsilon}^{\rho_i}(\tilde{\bs x}_{t_i}^c, t_i)=\bs\epsilon_\theta(\tilde{\bs x}_{t_{i}}, t_{i})$, which implies that the buffer $Q$ is not updated. By varying the $\rho_i$, we can obtain a trajectory of $\hat{\bs\epsilon}^{\rho_i}(\tilde{\bs x}_{t_i}^c, t_i)$ and our goal is to find an optimal $\rho_i^*$ which can minimize the local error to push the sampling trajectory toward the ground truth trajectory. Since the optimal compensation ratio $\rho_i^*$ is different across the sampling timesteps, we name our method dynamic compensation.

\paragrapha{Searching for the optimal $\rho_i^*$.} The optimal compensation ratios $\{\rho_i^*\}$ can be viewed as learnable parameters and optimized through backpropagation. Given a DPM, we first obtain ground truth trajectories $\{\bs x_t^{{\rm GT}}\}$ of $N$ initial noises. During each sampling step, we minimize the following objective:
\begin{equation}
    \rho_i^* = \argmin_{\rho_i} \mathbb{E}\|\tilde{\bs x}_{t_{i+1}}^{\rm c}(\tilde{\bs x}_{t_{i}}^{\rm c}, Q^{\rho_i}) - \bs x_{t_{i+1}}^{{\rm GT}}\|_2^2,\label{equ:search}
\end{equation}
where $\tilde{\bs x}_{t_{i+1}}^{\rm c}$ is computed similar to~\eqref{equ:predictor_step} and \eqref{equ:corrector_step}, and the expectation is approximated over the $N$ datapoints. The above objective ensures that the local approximation error on the selected $N$ datapoints is reduced with an optimal compensation ratio $\rho^*_i$. We find in our experiments that $N=10$ is sufficient in order to learn the optimal $\{\rho^*_i\}_{i=1}^M$ which also works well on any other initial noises. Besides, we show that both the local and global convergence of DC-Solver are guaranteed under mild conditions (see Supplementary). When an optimal $\rho_i^*$ is searched, we replace the buffer $Q$ with $Q^{\rho_i^*}$ and move to the next sampling step. We also list the detailed searching procedure in \Cref{alg:search}.

\begin{figure}[!t]
    \centering
    \begin{subfigure}{\linewidth}
    \includegraphics[width=.33\linewidth]{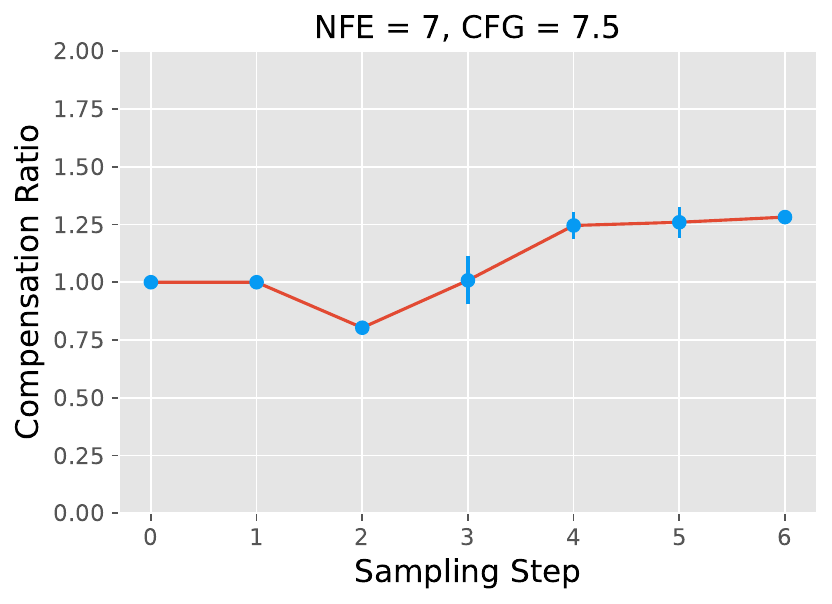}%
    \includegraphics[width=.33\linewidth]{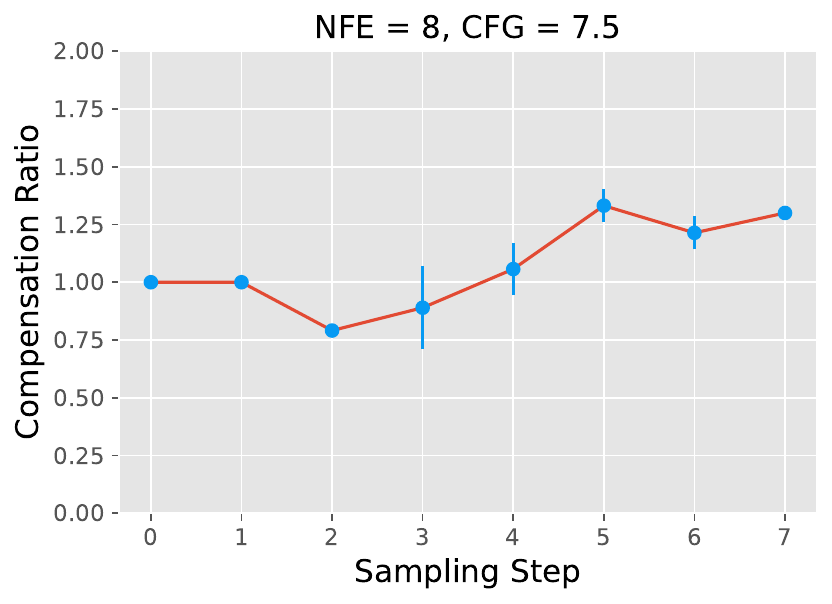}%
    \includegraphics[width=.33\linewidth]{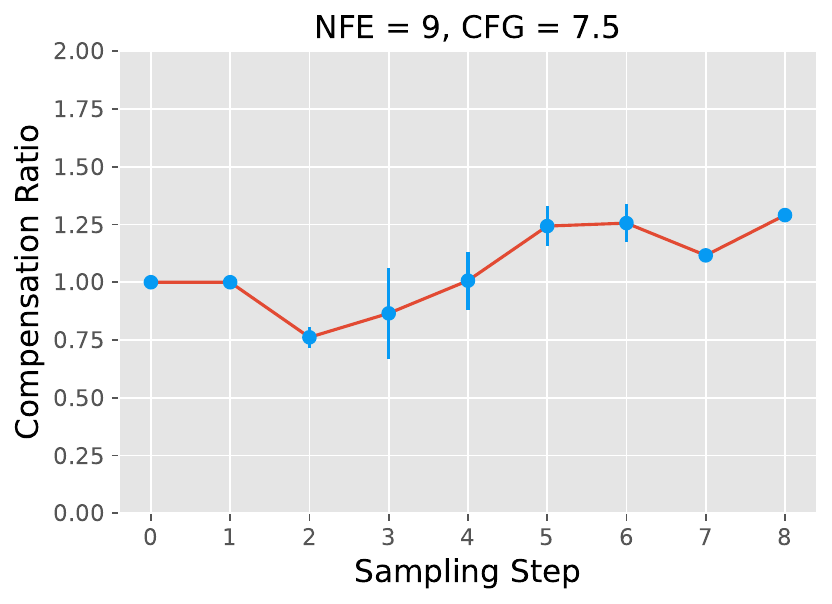}
    \caption{$\CFG =7.5, \NFE\in[7, 8, 9]$}
    \end{subfigure}
    \begin{subfigure}{\linewidth}
    \includegraphics[width=.33\linewidth]{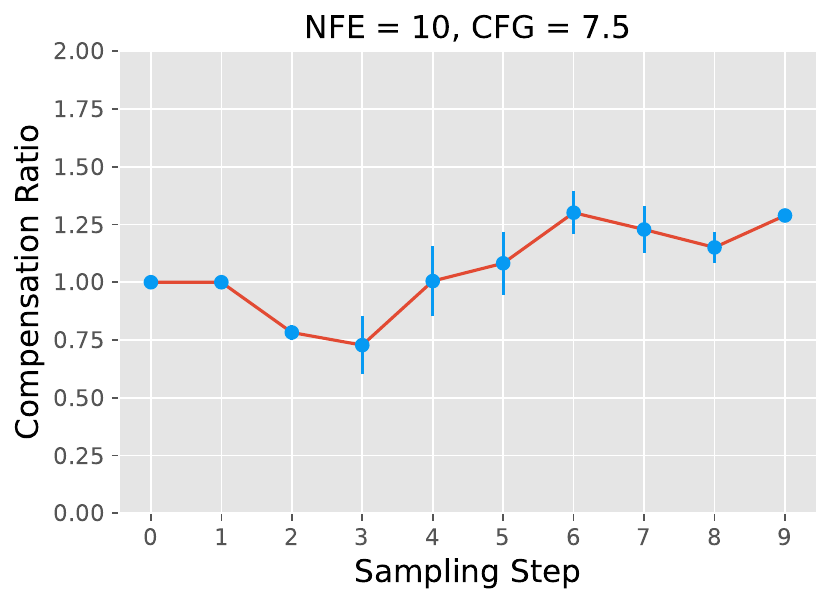}%
    \includegraphics[width=.33\linewidth]{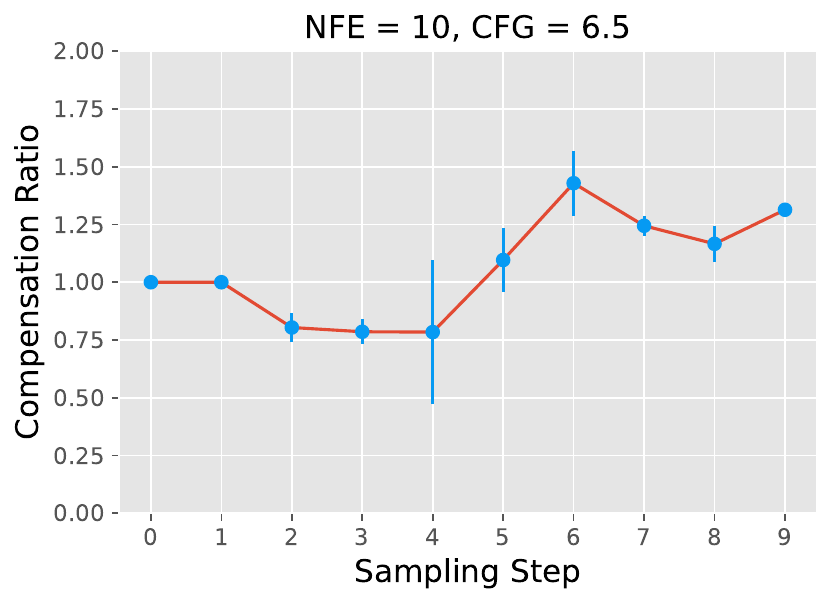}%
    \includegraphics[width=.33\linewidth]{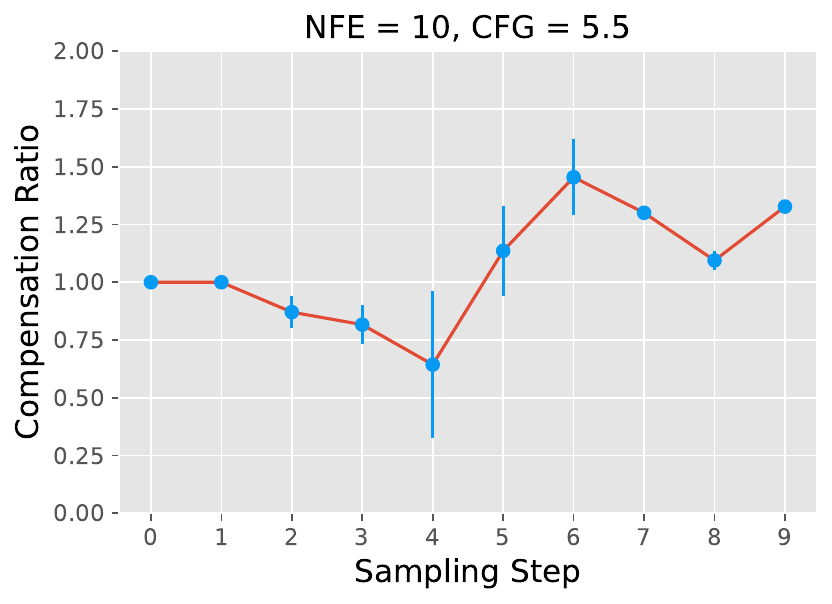}%
    \caption{$\NFE = 10, \CFG\in[7.5, 6.5, 5.5]$}
    \end{subfigure}
    \caption{\textbf{Relationship between compensation ratios and CFG/NFE.}  We adopt the widely used Stable-Diffusion-1.5~\cite{rombach2022high} and search for the optimal compensation ratios for different CFG and NFE and find that the compensation ratios evolve continuously with the variations in CFG/NFE.}
    \label{fig:ratio_relation}
\end{figure}

\paragrapha{Sampling with DC-Solver.} After obtaining the optimal compensation ratios $\{\rho_i^*\}$, we can directly apply them in our DC-Solver to sample from the pre-trained DPM. Similar to the searching stage, we update the buffer with $Q^{\rho_i^*}$ after each sampling step to improve the alignment between the intermediate result and the model output (see \Cref{alg:sampling} for details). Note that the dynamic compensation~\eqref{equ:dynamic_compensation} does not introduce any extra NFE, thus the overall computational costs are almost unchanged.

\subsection{Generalization to Unseen NFE \& CFG}\label{sec:generalization}
Although the compensation ratio $\rho_i^*$ can be obtained via~\eqref{equ:search}, the optimization still requires extra time costs (about 1min for NFE=5). Since the $\rho_i^*$ is specifically optimized for a diffusion ODE, the optimal choice for $\rho_i^*$ is different when NFE or CFG varies. This issue would limit the application of conditional sampling~\eqref{equ:conditional}, where the users may try different combinations of NFEs and CFGs. Therefore, it is vital to design a method to estimate the optimal compensation ratios without extra time costs of searching. To this end, we propose a technique called cascade polynomial regression that can instantly compute the desired compensation ratios given the CFG and NFE.

\paragrapha{Cascade polynomial regression.} To investigate how to efficiently estimate the compensation ratios, we start by searching for the optimal compensation ratios on the widely used Stable-Diffusion-1.5~\cite{rombach2022high} for different configurations of CFG and NFE and plot the relationship between the compensation ratios and CFG/NFE in \Cref{fig:ratio_relation}. For each configuration, we perform the search for 10 runs and report the averaged results as well as the corresponding standard deviation. Our key observation is that the learned optimal compensation ratios evolve almost continuously when CFG/NFE changes. Inspired by the shapes of the curves in~\Cref{fig:ratio_relation}, we propose a cascade polynomial regression to directly predict the compensation ratios. Formally, define the $p$-order polynomial with the coefficients $\bs\phi\in \mathbb{R}^{p+1}$ as $f^{(p)}(a | \bs\phi) = \sum_{j=0}^p \bs\phi_j a^j$, we predict the compensation ratios as follows:
\begin{align}
    \bs\phi^{(2)}_{j,k} &= f_1^{(p_1)}(\NFE | \bs\phi^{(1)}_{j,k}),  0\le j \le p_3, 0\le k \le p_2 \label{equ:phi1}\\
    \bs\phi^{(3)}_j &= f_2^{(p_2)}(\CFG | \bs\phi^{(2)}_j), 0\le j\le p_3\\
    \hat{\rho}_i^* &= f_3^{(p_3)}(i|\bs\phi^{(3)}), 2\le i\le \NFE - 1\label{equ:polyregress}
\end{align}
The above formulation indicates that we model the change of compensation ratios \wrt sampling steps via a polynomial, whose coefficients are determined by the CFG, NFE, and the $\bs\phi^{(1)}\in\mathbb{R}^{(p_3+1)\times (p_2 + 1)\times (p_1 + 1)}$. As we will show in \Cref{sec:analysis}, $\bs\phi^{(1)}$ can be obtained by applying the off-the-shelf regression toolbox (such as \texttt{curve\_fit} in \texttt{scipy}) on the pre-computed optimal compensation ratios of few configurations of NFE/CFG. With cascade polynomial regression, we can efficiently compute the compensation ratios with neglectable extra costs, making our DC-Solver more practical in real applications.

\subsection{Discussion}
Recently, a concurrent work DPM-Solver-v3~\cite{zheng2023dpmv3} proposes to learn several coefficients called empirical model statistics (EMS) of the pre-trained model to obtain a better parameterization during sampling. Our DC-Solver has several distinctive advantages: 1) DPM-Solver-v3 requires extensive computational resources to optimize and save the EMS parameters (\eg, 1024 datapoints, $~$11h on 8 GPUs, $~$125MB disk space), while our DC-Solver only needs a \textit{scalar} compensation ratio $\rho_i$ for each step and can be searched more efficiently in both time and memory (10 datapoints, $<$5min on a single GPU). 2) The EMS is specific to different CFG, and adjusting CFG requires another training of EMS to obtain good results. Our DC-Sovler adopts cascade polynomial regression to predict the desired compensation ratios on unseen CFG/NFE \textit{instantly}. 3) Our proposed dynamic compensation is a more general technique that can boost the performance of both predictor-only and predictor-corrector samplers.

\begin{figure*}[!ht]
    \centering
    \begin{subfigure}{.32\linewidth}
    \includegraphics[width=\linewidth]{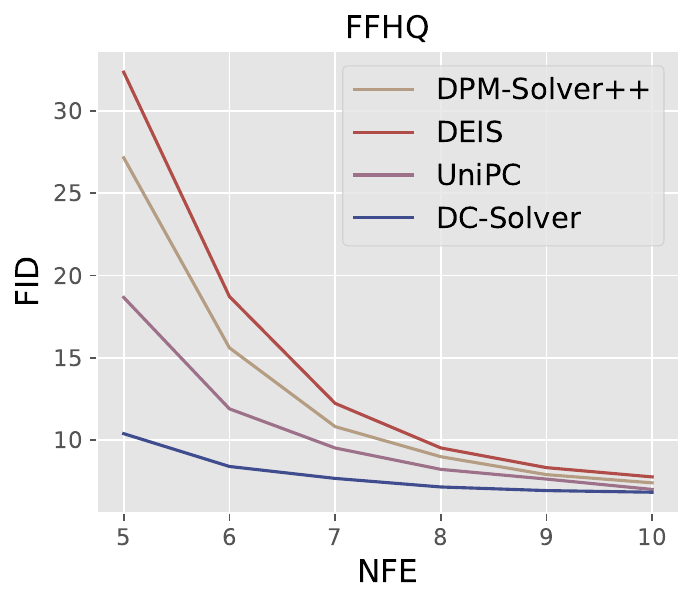}
    \label{fig:cifar}
    \end{subfigure}\hfill
    \begin{subfigure}{.32\linewidth}
    \includegraphics[width=\linewidth]{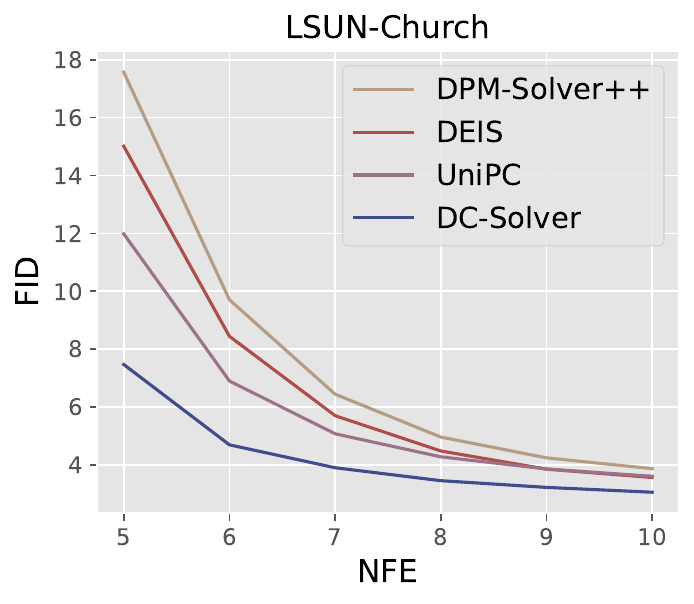}
    \label{fig:ffhq}
    \end{subfigure}\hfill
    \begin{subfigure}{.32\linewidth}
    \includegraphics[width=\linewidth]{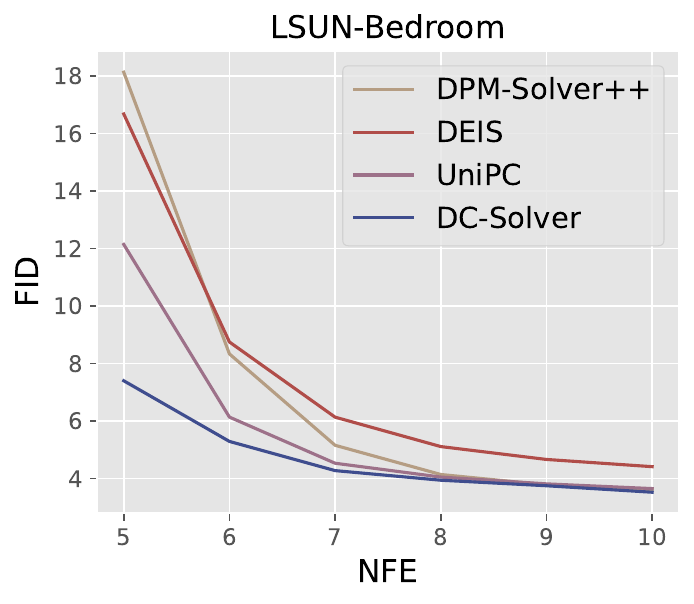}
    \label{fig:lsun}
    \end{subfigure}
    \caption{\textbf{Unconditional sampling results.} We compare our \ours{} with previous methods on FFHQ~\cite{karras2019ffhq}, LSUN-Church~\cite{yu2015lsun}, and LSUN-Bedroom~\cite{yu2015lsun}. The FID$\downarrow$ on different numbers of function evaluations (NFE) is used to measure the sampling quality. We show that \ours{} significantly outperforms other methods, especially with few NFE.
    }
    \label{fig:main_uncond}
\end{figure*}

\section{Experiments}

\subsection{Implementation Details}
Our DC-Solver follows the predictor-corrector paradigm by applying the dynamic compensation to UniPC~\cite{zhao2023unipc}. We set $K=2$ in~\eqref{equ:dynamic_compensation} and skip the compensation when $i<K$, which is equivalent to $\rho_0=\rho_1=1.0$. During the searching stage, we set the number of datapoints $N=10$. We use a 999-step DDIM~\cite{song2020denoising_ddim} to generate the ground truth trajectory $\bs x_t^{\rm GT}$ in the conditional sampling while we found a 200-step DDIM is enough for unconditional sampling. We use AdamW~\cite{adamw} to optimize the compensation ratios for only $L=40$ iterations, which can be finished in 5min on a single GPU. We use $p_1=p_2=2$ and $p_3=4$ for the cascade polynomial regression.

\subsection{Main Results}\label{sec:main_results}

We perform extensive experiments on both unconditional and conditional sampling on different datasets to evaluate our DC-Solver. Following common practice~\cite{lu2022dpmsolverpp,zhao2023unipc}, we use FID$\downarrow$ of the generated images in unconditional sampling and MSE$\downarrow$ between the generated latents and the ground truth latents on 10K prompts in conditional sampling. Our experiments demonstrate that our DC-Solver achieves better sampling quality than previous methods including DPM-Solver++~\cite{lu2022dpmsolverpp}, DEIS~\cite{zhang2022fast_deis} and UniPC~\cite{zhao2023unipc} both qualitatively and quantitatively.

\paragrapha{Unconditional sampling.} We start by comparing the unconditional sampling quality of different methods. We adopt the widely used latent-diffusion models~\cite{rombach2022high} pre-trained on FFHQ~\cite{karras2019ffhq}, LSUN-Bedroom~\cite{yu2015lsun}, and LSUN-Church~\cite{yu2015lsun}. We use the 3-order version for all the methods and report the FID$\downarrow$ on 5$\sim$10 NFE, as shown in \Cref{fig:main_uncond}. We find our DC-Solver consistently outperforms previous methods on different datasets. With the dynamic compensation, DC-Solver improves over UniPC significantly, especially with fewer NFE. Compared with UniPC, DC-Solver reduces the FID by 8.28, 4,51, 4.75 on FFHQ, LSUN-Church, and LSUN-Bedroom respectively when NFE=5.

\begin{figure*}[!ht]
    \centering
    \begin{subfigure}{.32\linewidth}
    \includegraphics[width=\linewidth]{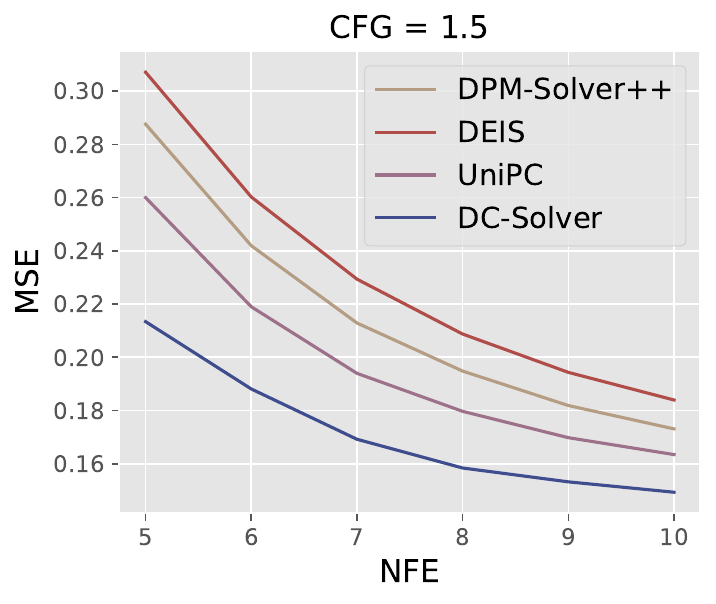}
    \label{fig:cond_s1.5}
    \end{subfigure}\hfill
    \begin{subfigure}{.32\linewidth}
    \includegraphics[width=\linewidth]{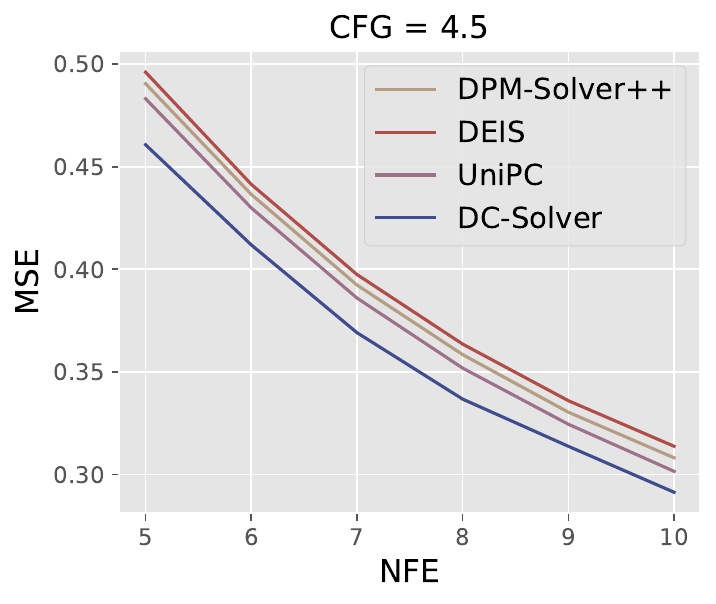}
    \label{fig:cond_s4.5}
    \end{subfigure}\hfill
    \begin{subfigure}{.32\linewidth}
    \includegraphics[width=\linewidth]{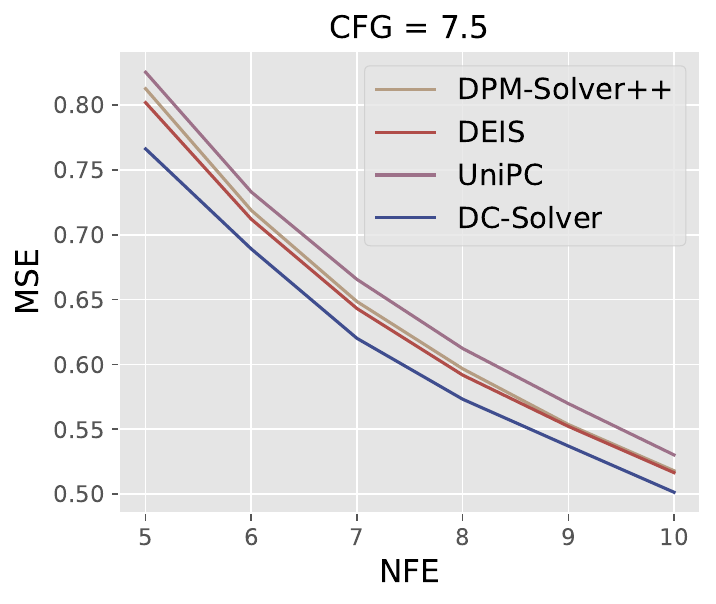}
    \label{fig:cond_s7.5}
    \end{subfigure}
    \caption{\textbf{Conditional sampling results.} We compare the sampling quality of different methods using the Stable-Diffusion-1.5 with classifier-free guidance (CFG) varying from 1.5 to 7.5. The sampling quality is measured by the mean squared error (MSE$\downarrow$) between the generated latents and the ground truth latents obtained by a 999-step DDIM. We randomly select 10K captions from MS-COCO2014 as the text prompts. We observe that \ours{} consistently achieves better sampling quality on different NFE/CFG.
    }
    \label{fig:main_cond}
\end{figure*}

\paragrapha{Conditional sampling.} We conduct experiments on Stable-Diffusion-1.5~\cite{rombach2022high} to compare the conditional sampling performance of different methods. Following common practice~\cite{lu2022dpmsolverpp,zhao2023unipc}, we report the mean squared error (MSE) between the generated latents and the ground truth latents (obtained by a 999-step DDIM~\cite{song2020denoising_ddim}) on 10K samples. The input prompts for the diffusion models are randomly sampled from MS-COCO2014 validation dataset~\cite{lin2014microsoft}. Apart from the default guidance scale CFG for Stable-Diffusion-1.5, we also conducted experiments with CFG=1.5/4.5. The results in \Cref{fig:main_cond} demonstrate that our DC-Solver achieves the lowest MSE on all of the three guidance scales. Notably, we find that the performance enhancement over UniPC achieved by DC-Solver surpasses the differences observed among those three previous methods.

\subsection{Ablation study} 

We conduct ablation studies on the design of our method and the hyper-parameters on FFHQ~\cite{karras2019ffhq}. The comparisons of the sampling quality measured by FID$\downarrow$ of different configurations are summarized in \Cref{tab:ablations}.

\paragrapha{Compensation methods.} Firstly, we evaluate the effectiveness of the proposed dynamic compensation in~\Cref{abl:compensation_method}. We start from the baseline method UniPC~\cite{zhao2023unipc} and apply different compensation methods. As discussed in~\Cref{sec:dc}, the baseline with no compensation is equivalent to $\rho_i\equiv 1.0, \forall i$. We then conduct experiments by setting $\rho_i$ to other constants, \ie, $\rho_i\equiv 0.9$ or $\rho_i\equiv 1.1$, which also corresponds to performing interpolation or extrapolation in~\eqref{equ:dynamic_compensation}. Since the compensation ratio is constant across the sampling steps, we call these ``static compensation''. We find that adjusting the $\rho_i$ can indeed influence the performance significantly, and the static compensation with $\rho_i\equiv 1.1$ outperforms the baseline method. As shown in the last row, our proposed dynamic compensation further improves the sampling quality by large margins. 

\paragrapha{Number of datapoints.} We investigate how the number of datapoints would affect the performance of our DC-Solver. We compare the sampling quality when using 5,10,20,30 datapoints and list the results in~\Cref{abl:datapoints}. We also provide the memory costs during the searching stage. We demonstrate that $N=10$ is enough to obtain satisfactory results while further increasing the number of datapoints will not bring significant improvement.

\paragrapha[3pt]{Order of dynamic compensation.} According to~\eqref{equ:dynamic_compensation}, the order $K$ controls how the $\hat{\bs\epsilon}^{\rho_i}(\tilde{\bs x}_{t_i}^c, t_i)$ varies with $\rho_i$. The results in~\Cref{abl:dc_order} indicate that $K=2$ can produce the best sampling quality, indicating that performing Lagrange interpolation on a parabola-like trajectory is the optimal choice.

\paragrapha{Number of optimization iterations.} We now examine how many iterations are required to learn the dynamic compensation ratios. In~\Cref{abl:iterations}, we report the FID of different optimization iterations as well as the time costs for each sampling step. We find the optimization converges after about 40 iterations. In this case, the actual time cost for each NFE is around $(\NFE - 2)\times22.2{\rm s}$ since we do not need to learn for the first two steps ($\rho_0=\rho_1=1.0$). Note that the time costs in the searching stage will not affect the inference speed since we can directly predict the compensation ratios using the CPR described in~\Cref{sec:generalization}.

\begin{table*}[t]
\caption{\textbf{Ablation studies.} We perform ablation studies on the design of our method and the hyper-parameters. Sampling quality is measured by FID$\downarrow$ on FFHQ~\cite{karras2019ffhq}. The configurations with the best trade-offs are selected and highlighted in {\hl{gray}}. 
}\label{tab:ablations}
\centering
\begin{subtable}{.476\textwidth}
\caption{Compensation method.}\label{abl:compensation_method}
\adjustbox{width=\linewidth}{
\begin{tabular}{p{4.0 cm}*{4}{C{1.0cm}}}\toprule
\multicolumn{1}{l}{\multirow{2}[0]{*}{Compensation Method}} & \multicolumn{4}{c}{NFE} \\\cmidrule{2-5}
      & 5     & 6     & 8     & 10 \\\midrule
Baseline~\cite{zhao2023unipc}  &   18.66    &    11.89   &   8.21    &  6.99\\
Static ($\rho_i\equiv 0.9$) &  26.43   &  16.50   &   9.84  & 7.84 \\
Static ($\rho_i\equiv 1.1$) &  13.99   &  10.21  &  7.86   & 6.90 \\
\rowcolor{Gray} Dynamic ($\rho_i=\rho^*_i$) &  10.38 &  8.39   &   7.14 & 6.82 \\\bottomrule
\end{tabular}%
}
\end{subtable}%
\begin{subtable}{.51\textwidth}
\caption{Number of datapoints.}\label{abl:datapoints}
\adjustbox{width=\linewidth}{
\begin{tabular}{C{2.0cm}C{2.4cm}*{4}{C{1.0cm}}}\toprule
\multicolumn{1}{c}{\multirow{2}[0]{*}{\#Datapoints}} & \multicolumn{1}{c}{Memory} & \multicolumn{4}{c}{NFE}  \\\cmidrule{3-6}
      &    (GB)   & 5     & 6     & 8     & 10    \\\midrule
5     &   9.15 & 12.39 & 9.79  &   7.05  &  6.84 \\
\rowcolor{Gray} 10    &   12.10 &  10.38  &  8.39 & 7.14  & 6.82 \\
20    &   18.61 &  10.37 &  8.31 &  7.01 & 6.63 \\
30    &   22.44  &  10.93  &  8.40 &  6.95  & 6.70 \\\bottomrule
\end{tabular}
}
\end{subtable}

\begin{subtable}{.476\textwidth}
\caption{Order of dynamic compensation.}\label{abl:dc_order}
\adjustbox{width=\linewidth}{
\begin{tabular}{C{4.0cm}*{4}{C{1.0cm}}}\toprule
\multirow{2}[0]{*}{DC Order $K$} & \multicolumn{4}{c}{NFE} \\\cmidrule{2-5}
      & 5     & 6     & 8     & 10 \\\midrule
1 & 12.70  &   9.44   & 7.07   &  6.55 \\
\rowcolor{Gray} 2 &  10.38  &  8.39 & 7.14  & 6.82 \\
3 & 11.63  &   8.89   & 6.98   &  6.72 \\\bottomrule
\end{tabular}%
}
\end{subtable}%
\begin{subtable}{.51\textwidth}
\caption{Number of optimization iterations.}\label{abl:iterations}
\adjustbox{width=\linewidth}{
\begin{tabular}{C{2.0cm}C{2.4cm}*{4}{C{1.04cm}}}\toprule
\multicolumn{1}{c}{\multirow{2}[0]{*}{\#Iterations}} & \multicolumn{1}{c}{{Time}} & \multicolumn{4}{c}{NFE}        \\\cmidrule{3-6}
      &  (s) & 5     & 6     & 8     & 10  \\\midrule
20    &  11.4 &   11.34 & 8.69 &  6.96 &  6.55 \\
\rowcolor{Gray} 40    &  22.2 &  10.38  &  8.39 & 7.14  & 6.82 \\
60    &  33.4 &  10.63 & 8.38 &  7.00 &  6.65 \\\bottomrule
\end{tabular}%
}
\end{subtable}
\end{table*}

\subsection{More Analyses}\label{sec:analysis}
In this section, we will provide in-depth analyses of DC-Solver, including some favorable properties and more quantitative/qualitative results.

\paragrapha{Comparisons with different pre-trained DPMs.} In our main results~\Cref{sec:main_results}, we have evaluated the effectiveness of DC-Solver on conditional sampling using Stable-Diffusion-1.5. We now provide comparisons on more different pre-trained DPMs in~\Cref{tab:sd_version}, where we report the MSE between the generated latents to the ground truth similar to~\Cref{fig:main_cond}. Specifically, we consider three versions of Stable-Diffusion (SD): 1) SD1.4 is the previous version of SD1.5, which is widely used in~\cite{lu2022dpmsolverpp,zhao2023unipc} to evaluate the conditional sampling quality; 2) SD2.1 is trained using another parameterization called $v$-prediction~\cite{salimans2022progressive} and can generate 768$\times$768 images; 3) SDXL is the latest Stable-Diffusion model that can generate realistic images of 1024$\times$1024. Note that we use the default CFG for all the models (CFG=7.5 for SD1.4 and SD2.1, CFG=5.0 for SDXL). We demonstrate that DC-Solver consistently outperforms previous methods with 5$\sim$10 NFE, indicating that our method has a wide application and can be applied to any pre-trained DPMs to accelerate the sampling.

\begin{table}[!t]
  \centering
  \caption{\textbf{Comparisons with different DPMs.} We compare the sampling quality between DC-Solver and previous methods using different pre-trained Stable-Diffusion (SD) models including SD1.4, SD2.1, and SDXL, which can generate images of various resolutions from 512$\times $512 to 1024$\times$1024. We compare the MSE$\downarrow$ with 5$\sim$10 NFE with the default classifier-free guidance scale of each model. We show that our \ours{} consistently outperforms previous methods by large margins.}
  \adjustbox{width=0.9\linewidth}{
    \begin{tabular}{p{4.0cm}*{6}{C{1.5cm}}}\toprule
    \multicolumn{1}{l}{\multirow{2}[0]{*}{Method}} & \multicolumn{6}{c}{NFE} \\\cmidrule{2-7}
          & 5     & 6     & 7     & 8     & 9     & 10 \\\toprule
    \multicolumn{7}{l}{\textit{SD1.4, $\epsilon$-prediction, CFG=7.5, 512$\times$512}} \\\midrule
    DPM-Solver++~\cite{lu2022dpmsolverpp} & 0.803  & 0.711  & 0.642  & 0.590  & 0.547  & 0.510  \\
    DEIS~\cite{zhang2022fast_deis}  & 0.795  & 0.706  & 0.636  & 0.586  & 0.544  & 0.508  \\
    UniPC~\cite{zhao2023unipc} & 0.813  & 0.724  & 0.658  & 0.607  & 0.563  & 0.525  \\
    \rowcolor{Gray} DC-Solver (Ours) & \textbf{0.760}  & \textbf{0.684}  & \textbf{0.615}  & \textbf{0.565}  & \textbf{0.527}  & \textbf{0.496}  \\
     \toprule
    \multicolumn{7}{l}{\textit{SD2.1, $v$-prediction, CFG=7.5, 768$\times$768}} \\\midrule
    DPM-Solver++~\cite{lu2022dpmsolverpp} & 0.443  & 0.421  & 0.404  & 0.390  & 0.379  & 0.370  \\
    DEIS~\cite{zhang2022fast_deis}  & 0.436  & 0.416  & 0.400  & 0.387  & 0.376  & 0.368  \\
    UniPC~\cite{zhao2023unipc} & 0.434  & 0.415  & 0.400  & 0.390  & 0.381  & 0.373  \\
    \rowcolor{Gray} DC-Solver (Ours) & \textbf{0.394}  & \textbf{0.364}  & \textbf{0.336}  & \textbf{0.309}  & \textbf{0.315}  & \textbf{0.294}  \\\midrule
    \multicolumn{7}{l}{\textit{SDXL, $\epsilon$-prediction, CFG=5.0, 1024$\times$1024}}\\\midrule   
    DPM-Solver++~\cite{lu2022dpmsolverpp} & 0.745  & 0.659  & 0.601  & 0.558  & 0.527  & 0.502 \\
    DEIS~\cite{zhang2022fast_deis}  &  0.778  & 0.683  & 0.619  & 0.571  & 0.538  & 0.511 \\
    UniPC~\cite{zhao2023unipc} & 0.718  & 0.645  & 0.593  & 0.553  & 0.524  & 0.500 \\
    \rowcolor{Gray} DC-Solver (Ours) & \textbf{0.689}  & \textbf{0.626}  & \textbf{0.574}  & \textbf{0.529}  & \textbf{0.510}  & \textbf{0.487} \\\bottomrule
    \end{tabular}%
}
\label{tab:sd_version}%
\end{table}

\begin{table}
  \centering
  \caption{\textbf{Generalization to unseen NFE \& CFG.} By performing the cascade polynomial regression to the compensation ratios searched on $\CFG\in[1.5, 4.5, 7.5, 10.5]$ and $\NFE\in[10, 15, 20]$, our DC-Solver can generalize to unseen NFE and CFG and outperform previous methods by large margins. The sampling quality is measured by the MSE$\downarrow$ between the generated latents and the ground truth on SD2.1~\cite{rombach2022high}.} \label{tab:unseen}
    \adjustbox{width=0.9\linewidth}{
    \begin{tabular}{C{2.0cm}p{4.0cm}*{5}{C{1.5cm}}}\toprule
    \multirow{2}[0]{*}{CFG} & \multicolumn{1}{c}{\multirow{2}[0]{*}{Method}} & \multicolumn{4}{c}{NFE} \\\cmidrule{3-6}
          &       & 12    & 14    & 16    & 18 \\\midrule
    \multirow{4}[0]{*}{3.0} & DPM-Solver++~\cite{lu2022dpmsolverpp}~~~~~~ & 0.212  & 0.209  & 0.198  & 0.196  \\
          & DEIS~\cite{zhang2022fast_deis} & 0.215  & 0.210  & 0.199  & 0.198  \\
          & UniPC~\cite{zhao2023unipc} & 0.211  & 0.208  & 0.206  & 0.205  \\
          &\cellcolor{Gray}DC-Solver (Ours) & \textbfg{0.103}  & \textbfg{0.093}  & \textbfg{0.087}  & \textbfg{0.083} \\\midrule
    \multirow{4}[0]{*}{6.0} & DPM-Solver++\cite{lu2022dpmsolverpp} & 0.312  & 0.304  & 0.293  & 0.289  \\
          & DEIS~\cite{zhang2022fast_deis} & 0.312  & 0.305  & 0.293  & 0.290 \\
          & UniPC~\cite{zhao2023unipc} & 0.311  & 0.304  & 0.298  & 0.296  \\
          & \cellcolor{Gray}DC-Solver (Ours) & \textbfg{0.215}  & \textbfg{0.196}  & \textbfg{0.182}  & \textbfg{0.169}  \\\midrule
    \multirow{4}[0]{*}{9.0} & DPM-Solver++\cite{lu2022dpmsolverpp} & 0.404  & 0.393  & 0.385  & 0.377  \\
          & DEIS~\cite{zhang2022fast_deis} & 0.402  & 0.391  & 0.380  & 0.374 \\
          & UniPC~\cite{zhao2023unipc} & 0.406  & 0.394  & 0.386  & 0.377  \\
          & \cellcolor{Gray}DC-Solver (Ours) & \textbfg{0.338}  & \textbfg{0.314}  & \textbfg{0.293}  & \textbfg{0.275}  \\\bottomrule
    \end{tabular}%
    }
\end{table}

\paragrapha{Generalization to unseen NFE \& CFG.} Based on the observation of the optimal compensation ratios and the proposed cascade polynomial regression (CPR) in~\Cref{sec:generalization}, our DC-Solver can be applied to unseen NFE and CFG without extra time costs for the searching stage. This is important because the users might frequently adjust the NFE and CFG to generate the desired images. To evaluate the effectiveness of the CPR, we first search the optimal compensation ratios for $\CFG\in[1.5, 4.5, 7.5, 10.5]$ and $\NFE\in[10, 15, 20]$ (which covers most of the use cases in real applications). We then use the \texttt{curve\_fit} in the \verb|scipy| library to obtain the $\bs\phi^{(1)}$ in~\eqref{equ:phi1} and predict the compensation ratios $\hat{\rho}_i^*$ on unseen configurations where $\CFG\in[3.0, 6.0, 9.0]$ and $\NFE\in [12, 14, 16, 18]$. The results of DC-Solver with the predicted compensation ratios on unseen NFE and CFG on SD2.1 can be found in~\Cref{tab:unseen}, where we also provide the results of previous methods~\cite{lu2022dpmsolverpp,zhang2022fast_deis,zhao2023unipc} for comparisons. We observe that DC-Solver with the compensation ratios predicted by CPR can still achieve lower MSE on all the unseen configurations. These results indicate that in order to use DC-Solver in real scenarios, we only need to perform CPR on sparsely selected configurations of CFG and NFE.

\paragrapha{Enhance any solver with dynamic compensation.} Although our DC-Solver was originally designed to mitigate the misalignment issue in the predictor-corrector frameworks, we will show that the dynamic compensation (DC) can also boost the performance of predictor-only DPM samplers. Similar to~\eqref{equ:search}, we can also search for an optimal $\rho_i^*$ to minimize $\|\tilde{\bs x}_{t_{i+1}}(\tilde{\bs x}_{t_{i}}, Q^{\rho_i}) - \bs x_{t_{i+1}}^{{\rm GT}}\|_2^2$. To verify this, we conduct experiments on DDIM~\cite{song2020denoising_ddim} and DPM-Solver++~\cite{lu2022dpmsolverpp} by applying the DC to them and the results are shown in~\Cref{tab:dc_any}. The FID$\downarrow$ on FFHQ~\cite{karras2019ffhq} is reported as the evaluation metric. We show that DC can significantly improve the sampling quality of the two baseline predictor-only solvers. These results indicate that our dynamic compensation can serve as a plug-and-play module to enhance any existing solvers of DPMs.

\paragrapha{Visualizations.} We now provide some qualitative comparisons between our DC-Solver and previous methods on SD2.1 with CFG=7.5 and NFE=5, as shown in~\Cref{fig:viz}. The images sampled from 4 random initial noises are displayed. We find that while other methods tend to produce blurred images with few NFE, our DC-Solver can generate photo-realistic images with more details.

\paragrapha{Inference speed and memory.} We compare the inference speed and memory of DC-Solver with previous methods, as shown in~\Cref{tab:speed}. For all the methods, we sample from the Stable-Diffusion-2.1~\cite{rombach2022high} using a single NVIDIA RTX 3090 GPU with a batch size of 1 and NFE=5/10/15. Our results show that DC-Solver achieves similar speed and memory to previous methods, indicating that DC-Solver can improve the sample quality without introducing noticeable extra computational costs during the inference.

\paragrapha{Limitations.} Despite the effectiveness of DC-Solver, it cannot be used with SDE-based samplers~\cite{xue2023sa} because of the stochasticity. How to apply DC-Solver to SDE samplers requires future investigation of a stochasticity-aware metric instead of the $\ell_2$-distance in~\eqref{equ:search}.

\begin{table}[!t]
  \centering
  \caption{\textbf{Applying DC to predictor-only solvers.} We compare the FID$\downarrow$ on FFHQ~\cite{karras2019ffhq} using two methods DDIM~\cite{song2020denoising_ddim} and DPM-Solver++~\cite{lu2022dpmsolverpp} as the baselines. We show that dynamic compensation (DC) can also significantly boost the performance of predictor-only solvers.}
  \adjustbox{width=0.9\linewidth}{
    \begin{tabular}{p{4.3cm}*{6}{C{1.5cm}}}\toprule
    \multicolumn{1}{l}{\multirow{2}[0]{*}{Method}} & \multicolumn{6}{c}{NFE} \\\cmidrule{2-7}
          & 5     & 6     & 7     & 8     & 9     & 10 \\\toprule
    DDIM~\cite{song2020denoising_ddim} & 57.92  & 42.67  & 32.82  & 26.96  & 23.25  & 19.09  \\
    \rowcolor{Gray} \quad + DC (Ours) & 16.56  & 15.50  & 12.51  & 11.33  & 9.62  & 9.21  \\\midrule
    DPM-Solver++~\cite{lu2022dpmsolverpp} & 27.80  & 16.01  & 11.16  & 9.17  & 8.04  & 7.40  \\
    \rowcolor{Gray}\quad + DC (Ours) & 11.97  & 8.64  & 7.70  & 7.32  & 7.10  & 6.94  \\\bottomrule
    \end{tabular}%
}
  \label{tab:dc_any}%
\end{table}
\begin{table}[!t]
  \centering
  \caption{\textbf{Comparisons of inference speed and memory.} We compare the inference speed and memory cost of different sampling methods with batch size 1 on SD2.1~\cite{rombach2022high} using a single NVIDIA RTX 3090 GPU. For inference time, we report the mean and std of 10 runs for each method and NFE. Our DC-Solver achieves similar speed to previous methods with the same NFE.}   \label{tab:speed}%
    \adjustbox{width=0.9\linewidth}{
    \begin{tabular}{p{3.5cm}*{4}{C{2.5cm}}}\toprule
    \multicolumn{1}{l}{\multirow{2}[0]{*}{Method}} & \multicolumn{1}{c}{{Memory}} & \multicolumn{3}{c}{Inference Time (s)} \\\cmidrule{3-5}
          &    (GB)   & \multicolumn{1}{c}{NFE = 5} & \multicolumn{1}{c}{NFE = 10} & \multicolumn{1}{c}{NFE = 15} \\\midrule
    DPM-Solver++~\cite{lu2022dpmsolverpp} & 14.21 & 1.515($\pm$0.003) & 2.833($\pm$0.007) & 4.168($\pm$0.005) \\
    UniPC~\cite{zhao2023unipc} & 14.37 & 1.533($\pm$0.004) & 2.865($\pm$0.004) & 4.203($\pm$0.003) \\
    \rowcolor{Gray} DC-Solver (Ours) & 14.37 & 1.532($\pm$0.003) & 2.867($\pm$0.005) & 4.203($\pm$0.004) \\\bottomrule
    \end{tabular}%
    }
\end{table}

\section{Conclusions}
In this paper, we have proposed a new fast sampler of DPMs called DC-Solver, which leverages the dynamic compensation to effectively mitigate the misalignment issue in previous predictor-corrector samplers. We have shown that the optimal compensation ratios can be either searched efficiently using only 10 datapoints on a single GPU in 5min, or instantly predicted by the proposed cascade polynomial regression on unseen CFG/NFE. Extensive experiments have demonstrated that DC-Solver significantly outperforms previous methods in 5$\sim$10 NFE, and can be applied to different pre-trained DPMs including SDXL. We have also found that the proposed dynamic compensation can also serve as a plug-and-play module to boost the performance of predictor-only methods. We hope our investigation on dynamic compensation can inspire more effective approaches in the few-step sampling of DPMs.

\section*{Acknowledgements}
This work was supported in part by the National Key Research and Development Program of China under Grant 2022ZD0160102, and in part by the National Natural Science Foundation of China under Grant 62125603, Grant 62321005, Grant 62336004.

\begin{appendix}
\section{Detailed Background of Diffusion Models}
\subsection{Diffusion Models}
In this section, we will provide a detailed background of diffusion probabilistic models (DPMs)~\cite{ho2020denoising,song2021score}. DPMs usually contain a forward diffusion process that gradually adds noise to the clean data and a backward denoising process that progressively removes the noise to obtain the cleaned data. The diffusion process can be defined either discretely~\cite{ho2020denoising} or continuously~\cite{song2021score}. We will focus on the latter since continuous DPMs are usually used in the context of DPM samplers~\cite{lu2022dpmsolver,lu2022dpmsolverpp,zhao2023unipc}. Let $\bs x_0$ be a random variable from the data distribution $q_0(\bs x_0)$, the forward (diffusion) process gradually adds noise via:
\begin{equation}
    q_{t|0}(\bs x_t | \bs x_0) = \mathcal{N}(\bs x_t | \alpha_t \bs x_0, \sigma_t^2 \bs I),
\end{equation}
where $\alpha_t,\sigma_t$ control the noise schedule and the signal-to-noise-ratio $\alpha_t^2/\sigma_t^2$ is decreasing w.r.t $t$. The noise schedule is designed such that the resulting distribution $q_T(\bs x_T)$ is approximately Gaussian. The forward process can be also formulated via an SDE~\cite{kingma2021variational}:

\begin{equation}
    \dif \bs x_t  = f(t) \bs x_t \dif t  + g(t)\dif \bs w_t,\quad \bs x_0\sim q_0(\bs x_0)\label{equ:sde}
\end{equation}
where $f(t) = \frac{\dif \log \alpha_t}{\dif t}$, $g^2(t)=\frac{\dif \sigma_t^2}{\dif t} - 2\frac{\dif \log \alpha_t}{\dif t}\sigma_t^2$ and $\bs w_t$ is the standard Wiener process. The reverse process can be analytically computed under some conditons~\cite{song2021score}:

\begin{align}
  \dif \bs x_t = [f(t)\bs x_t - g^2(t)\nabla_{\bs x} \log q_t(\bs x_t)]\dif t + g(t)\dif \bar{\bs w}_t,
\end{align}
where $ \bar{\bs w}_t$ is the standard Winer process in the reverse time. DPM is trained to estimate the scaled score function $-\sigma_t\nabla_{\bs x} \log q_t(\bs x_t)$ via a neural network $\bs \epsilon_\theta$, and the corresponding SDE during sampling is 
\begin{equation}
    \dif \bs x_t = \left[f(t)\bs x_t + \frac{g^2(t)}{\sigma_t}\bs \epsilon_\theta(\bs x_t, t)\right]\dif t + g(t)\dif \bar{\bs w}_t. \label{equ:reverse_sde}
\end{equation}

\subsection{ODE-based DPM samplers}
Although one can numerally solve the diffusion SDE by discretizing~\eqref{equ:reverse_sde}, the stochasticity would harm the sampling quality especially when the step size is large. On the contrary, the probability flow ODE~\cite{song2021score} is more practical:
\begin{equation}
    \frac{\dif \bs x_t}{\dif t} = f(t) \bs x_t - \frac{g^2(t)}{2}\nabla_{\bs x}\log q_t(\bs x_t).
\end{equation}
Modern fast samplers of DPMs~\cite{lu2022dpmsolver,lu2022dpmsolverpp,zhao2023unipc} aim to efficiently solve the above ODE with small numbers of function evaluations (NFE) by introducing several useful techniques such as the exponential integrator~\cite{lu2022dpmsolver,zhang2022fast_deis}, the multi-step method~\cite{lu2022dpmsolverpp,zhang2022fast_deis}, data-prediction~\cite{lu2022dpmsolverpp}, and predictor-corrector paradigm~\cite{zhao2023unipc}. For example, the deterministic version of DDIM~\cite{song2020denoising_ddim} can be viewed as a 1-order discretization of the diffusion probability flow ODE. DPM-Solver~\cite{lu2022dpmsolver} leverages an insightful parameterization (logSNR) and exponential integrator to achieve a high-order solver. DPM-Solver++~\cite{lu2022dpmsolverpp} further adopts the multi-step method to estimate high-order derivatives. Specifically, one can use a buffer to store the outputs of $\bs\epsilon_\theta$ on previous points and use them to increase the order of accuracy. PNDM~\cite{liu2022pseudo} modified classical multi-step numerical methods to corresponding pseudo numerical methods for DPM sampling. UniPC~\cite{zhao2023unipc} introduces a predictor-corrector framework that also uses the model output at the current point to improve the sampling quality, and bypasses the extra model evaluations by re-using the model outputs at the next sampling step.  Generally speaking, the formulation of existing DPM samplers can be summarized as follows:
\begin{align}
    \tilde{\bs x}_{t_i} = A_{t_{i-1}}^{t_i} \tilde{\bs x}_{t_{i-1}}^{\rm c} + \sum_{m=1}^{p-1} B_{t_{i-m}}^{t_{i}} \bs \beta_\theta(\tilde{\bs x}_{t_{i-m}}, t_{i-m}),\\ 
    \tilde{\bs x}_{t_i}^{\rm c} = C_{t_{i-1}}^{t_i} \tilde{\bs x}_{t_{i-1}}^{\rm c} + \sum_{m=0}^{p-1} D_{t_{i-m}}^{t_{i}} \bs \beta_\theta(\tilde{\bs x}_{t_{i-m}}, t_{i-m}),\label{proof:general_corrector}
\end{align}
where the corrector step~\eqref{proof:general_corrector} is optional and $\bs x_{t_i}^{\rm c}=\bs x_{t_i}$ if no corrector is used. We use $\bs \beta_\theta$ to represent different parameterizations during the sampling, such as the noise-prediction $\bs\epsilon_\theta$~\cite{lu2022dpmsolver,zhang2022fast_deis}, data-prediction $\bs x_\theta$~\cite{lu2022dpmsolverpp,zhao2023unipc}, $v$-prediction $\bs v_\theta$~\cite{salimans2022progressive}, or the learned parameterization~\cite{zheng2023dpmv3}. The coefficients ($A,B,C,D$) are determined by the specific sampler and differ across the sampling steps.

\section{Convergence of DC-Solver}\label{app:convergence}

In this section, we shall show that if the original sampler has the convergence order $p+1$ under mild conditions, then the same order of convergence is maintained when combined with our Dynamic Compensation. We will prove for both predictor-only samplers~\cite{lu2022dpmsolverpp,song2020denoising_ddim} and predictor-corrector samplers~\cite{zhao2023unipc}. For the sake of simplicity, we use the $\ell-2$ norm by default to study the convergence.

\subsection{Assumptions}\label{app:assumptions}
We introduce some assumptions for the convenience of subsequent proofs. These assumptions are either common in ODE analysis or easy to satisfy.
\begin{assumption}\label{app:Lip}
The prediction model ${\bs \beta}_{\theta}(x,t)$ is Lipschitz continuous w.r.t. $x$.
\end{assumption}


\begin{assumption}\label{app:huniform}
    $h=\max_{1\leq i \leq M} h_i= \bigO (1/M)$, where $h_i$ denotes the sampling step size, and $M$ is the total number of sampling steps.
\end{assumption}


\begin{assumption} \label{app:coefforder}
    The coefficients in \eqref{proof:general_corrector} satisfy that $0 < C_1 \leq \|A_{t_{i-1}}^{t_i}\|_2 \leq C_2 $, $0<C_3h \leq \|B_{t_{i-m}}^{t_i}\|_2 \leq C_4h$, $0 < C_5 \leq \|C_{t_{i-1}}^{t_i}\|_2 \leq C_6 $ and $0<C_7h \leq \|D_{t_{i-m}}^{t_i}\|_2 \leq C_8h$ for sufficiently small $h$.
\end{assumption}
Assumption \ref{app:Lip}  is common in the analysis of ODEs.  Assumption \ref{app:huniform} assures that the step size is basically uniform.

Assumption \ref{app:coefforder} can be easily verified by the formulation of the samplers. For example, in data-prediction mode of UniPC~\cite{zhao2023unipc}, we have $A_{t_{i-1}}^{t_i}=\alpha_{t_i}/\alpha_{t_{i-1}}$, which are constants independent of $h_i$. Note that $B_{t_{i-1}}^{t_i} = \sigma_{t_i}(e^{h_i}-1)\left[ \sum_{m=1}^{p}\frac{a_m}{r_m}-1 \right]$ and $B_{t_{i-m}}^{t_i}=-\sigma_{t_i}(e^{h_i}-1)\frac{a_m}{r_m},m\neq1$, where $a_m,r_m\in \mathcal{O}(1)$, we have $B_{t_i-m}^{t_i}=\mathcal{O}(h)$. For $C_{t_{i-1}}^{t_i}$ and $D_{t_{i-m}}^{t_i}$, we can analogically derive the bound for the two coefficients. By examining the analytical form of other existing solvers~\cite{song2020denoising_ddim,lu2022dpmsolver,lu2022dpmsolverpp,zhang2022fast_deis,liu2022pseudo,zhao2023unipc}, we can similarly find that~\Cref{app:coefforder} always holds.


\subsection{Local Convergence}
\begin{theorem}\label{app:thmlocal}
    For any DPM sampler of $p+1$-th order of accuracy, \ie, $\mathbb{E}\|\tilde{\bs x}_{t_{i+1}}^{\rm c} - \tilde{\bs x}_{t_{i+1}}\|_2 \le C h_i^{p+2}$, applying dynamic compensation with the ratio $\rho_i^*$ will reduce the local truncation error and remain the $p+1$-th order of accuracy.
\end{theorem}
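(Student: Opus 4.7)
The plan is to exploit a direct consequence of the searching criterion in~\eqref{equ:search}: the trivial choice $\rho_i=1$ recovers the baseline sampler exactly, while $\rho_i^*$ is by definition the minimizer of the same local-error functional over all $\rho_i$. Hence the DC-augmented step can never be strictly worse than the baseline, which already attains the stated $(p+1)$-th order of accuracy by hypothesis.

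First I would establish the key identity that at $\rho_i=1$ the compensation node $t_i'=\rho_i t_i+(1-\rho_i)t_{i-1}=t_i$ coincides with one of the Lagrange interpolation nodes. Plugged into~\eqref{equ:dynamic_compensation}, every term with $k\neq 0$ then contains the factor $(t_i'-t_i)=0$ inside its numerator product, while the $k=0$ term reduces to $\bs\epsilon_\theta(\tilde{\bs x}_{t_i},t_i)$. Therefore $\hat{\bs\epsilon}^{\,1}(\tilde{\bs x}_{t_i}^{\rm c},t_i)=\bs\epsilon_\theta(\tilde{\bs x}_{t_i},t_i)$, and the overwrite $Q^{1}\leftarrow[Q_{[:-1]},\hat{\bs\epsilon}^{\,1}(\tilde{\bs x}_{t_i}^{\rm c},t_i)]$ leaves $Q$ unchanged. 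Substituted into the general sampler template~\eqref{proof:general_corrector}, this shows that the DC-Solver step at $\rho_i=1$ coincides bit-for-bit with the underlying predictor-corrector update.

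Second, by the optimality of $\rho_i^*$ in~\eqref{equ:search} and the feasibility of $\rho_i=1$,
\begin{equation*}
\mathbb{E}\bigl\|\tilde{\bs x}_{t_{i+1}}^{\rm c}(\rho_i^*)-\bs x_{t_{i+1}}^{\rm GT}\bigr\|_2^2 \le \mathbb{E}\bigl\|\tilde{\bs x}_{t_{i+1}}^{\rm c}(1)-\bs x_{t_{i+1}}^{\rm GT}\bigr\|_2^2 \le C^2 h_i^{2(p+2)},
\end{equation*}
where the right-most bound is the baseline's assumed local-order estimate (viewing $\bs x_{t_{i+1}}^{\rm GT}$ as the exact one-step solution). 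Taking square roots delivers both conclusions of the theorem at once: the DC error is no larger than the baseline's (reduction), and it remains $\bigO(h_i^{p+2})$, preserving the $(p+1)$-th convergence order.

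The main obstacle I anticipate is not the logical skeleton, which is nearly immediate, but making Step 1 airtight in the paper's notation: I need to verify that the collapse of the Lagrange interpolant at $t_i'=t_i$ interacts correctly with~\eqref{proof:general_corrector}, so that neither the predictor coefficients $B_{t_{i-m}}^{t_{i+1}}$, $D_{t_{i-m}}^{t_{i+1}}$ nor any earlier buffer entry is silently altered when $\rho_i=1$. A secondary subtlety is the passage from the empirical expectation in~\eqref{equ:search} (taken over $N$ datapoints) to the statement of the theorem: this is handled by observing that the chain of inequalities above uses only the feasibility of $\rho_i=1$ inside whatever objective~\eqref{equ:search} employs, so it holds verbatim for the empirical and the population expectations alike.
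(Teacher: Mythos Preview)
Your proposal is essentially the paper's own proof: observe that $\rho_i=1$ recovers the baseline sampler exactly, then invoke the optimality of $\rho_i^*$ in~\eqref{equ:search} to conclude the local error cannot increase and hence stays $\bigO(h_i^{p+2})$. You go further than the paper by explicitly verifying the Lagrange collapse at $t_i'=t_i$, whereas the paper simply asserts ``$\rho_i=1.0$ is equivalent to the original updating formula.'' One small wrinkle worth tightening: the searching objective~\eqref{equ:search} is a \emph{squared} $\ell_2$ error while the theorem's hypothesis bounds $\mathbb{E}\|\cdot\|_2$, so the step $\mathbb{E}\|\tilde{\bs x}_{t_{i+1}}^{\rm c}(1)-\bs x_{t_{i+1}}^{\rm GT}\|_2^2\le C^2h_i^{2(p+2)}$ does not follow directly from the stated assumption; the paper's own proof glosses over the same passage between norms, so this is a shared cosmetic gap rather than a defect in your approach.
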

\begin{proof} Denote $\tilde{\bs x}_{t_{i+1}}^{{\rm c}, \rho_i}$ as the intermediate result at the next sampling step by using dynamic compensation ratio $\rho_i$. Observe that $\rho_i=1.0$ is equivalent to the original updating formula without the dynamic compensation, we have
\begin{align}
    \mathbb{E}\|\tilde{\bs x}_{t_{i+1}}^{{\rm c}, \rho_i^*} - \tilde{\bs x}_{t_{i+1}}\|_2 &\le  \mathbb{E}\|\tilde{\bs x}_{t_{i+1}}^{{\rm c}, 1.0} - \tilde{\bs x}_{t_{i+1}}\|_2\nonumber\\ &=\mathbb{E}\|\tilde{\bs x}_{t_{i+1}}^{\rm c} - \tilde{\bs x}_{t_{i+1}}\|_2 \le C h_i^{p+2}.
\end{align}
Therefore, the local truncation error is reduced and the order of accuracy after the DC is still $p+1$. 
\end{proof}
Note that the proof does not assume the detailed implementation of the sampler, indicating that the~\Cref{app:thmlocal} holds for both predictor-only samplers and predictor-corrector samplers.

\subsection{Global Convergence}
We first investigate the global convergence of Dynamic Compensation with a $p$-th order predictor-only sampler.
\begin{corollary}\label{app:coroPredictor}
    Assume that we have ${\{{\tilde{\bs x}}_{t_{i-k}}\}}_{k=1}^{p-1}$ and ${\{\bs \beta_{\theta}^{{\rho}_{i-k}^*}({\tilde{\bs x}}_{t_{i-k}},t_{i-k})\}}_{k=2}^{p-1}$ (denoted as ${\{\bs \beta_{\theta}^{{\rho}_{i-k}^*}\}}_{k=2}^{p-1}$) satisfying $\mathbb{E}\| {\tilde{\bs x}}_{t_{i-k}} - \bs x_{t_{i-k}} \|_2=\bigO(h^{p}),1\leq k \leq p-1$, and
$\mathbb{E}\| {\bs \beta}_{\theta}^{{\rho}_{i-k}^*} - {\bs \beta}_{\theta}(\bs x_{t_{i-k}},t_{i-k}) \|_2=\bigO(h^{p-1}),2\leq k \leq p-1$. If we use Predictor-$p$ together with Dynamic Compensation to estimate $\bs x_{t_i}$, we shall get ${\bs \beta}_{\theta}^{{\rho}_{i-1}^*} $ and ${\tilde{\bs x}}_{t_i}$ that satisfy
$\mathbb{E}\| {\bs \beta}_{\theta}^{{\rho}_{i-1}^*} - {\bs \beta}_{\theta}(\bs x_{t_{i-1}},t_{i-1}) \|_2 =\bigO(h^{p-1})$ and
$\mathbb{E}\| {\tilde{\bs x}}_{t_{i}} - \bs x_{t_{i}} \|_2=\bigO(h^{p})$.
\end{corollary}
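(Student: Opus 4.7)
The plan is to couple the optimality of $\rho_{i-1}^*$ (with respect to the trajectory error) with the standard global convergence argument for the Predictor-$p$ scheme. I would first verify that the choice $\rho_{i-1}=1.0$ is an admissible degenerate case that recovers the raw model output. Substituting $\rho_{i-1}=1.0$ into~\eqref{equ:dynamic_compensation} gives $t'_{i-1}=t_{i-1}$, so every Lagrange basis function with $k\ge 1$ contains the vanishing factor $(t'_{i-1} - t_{i-1})$ while the $k=0$ basis evaluates to one; hence $\bs\beta_\theta^{1.0} = \bs\beta_\theta(\tilde{\bs x}_{t_{i-1}}, t_{i-1})$. Combining this identity with~\Cref{app:Lip} and the inductive hypothesis $\mathbb{E}\|\tilde{\bs x}_{t_{i-1}} - \bs x_{t_{i-1}}\|_2 = \bigO(h^p)$ gives $\mathbb{E}\|\bs\beta_\theta^{1.0} - \bs\beta_\theta(\bs x_{t_{i-1}}, t_{i-1})\|_2 = \bigO(h^p)$, which is even stronger than the $\bigO(h^{p-1})$ target.

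Next, I would analyze the trajectory error at $\rho_{i-1}=1.0$ by subtracting the exact relation $\bs x_{t_i} = A_{t_{i-1}}^{t_i}\bs x_{t_{i-1}} + \sum_{m=1}^{p-1} B_{t_{i-m}}^{t_i}\bs\beta_\theta(\bs x_{t_{i-m}}, t_{i-m}) - R$, with local truncation $R = \bigO(h^{p+1})$ since the underlying sampler is of order $p$. Together with the coefficient bounds in~\Cref{app:coefforder} ($\|A_{t_{i-1}}^{t_i}\|_2 = \bigO(1)$ and $\|B_{t_{i-m}}^{t_i}\|_2 = \bigO(h)$) and the inductive hypotheses on the $\beta$-errors ($\bigO(h^{p-1})$ for $m\ge 2$, $\bigO(h^p)$ for $m=1$ from the first step), this yields $\mathbb{E}\|\tilde{\bs x}_{t_i}(1.0) - \bs x_{t_i}\|_2 = \bigO(h^p)$.

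The decisive step is the optimality comparison. Since $\rho_{i-1}^*$ is chosen to minimize $\mathbb{E}\|\tilde{\bs x}_{t_i}(\rho_{i-1}) - \bs x_{t_i}^{\mathrm{GT}}\|_2^2$ and $\rho_{i-1}=1.0$ is always admissible, I obtain (treating the ground-truth trajectory as exact) $\mathbb{E}\|\tilde{\bs x}_{t_i}(\rho_{i-1}^*) - \bs x_{t_i}\|_2 \le \mathbb{E}\|\tilde{\bs x}_{t_i}(1.0) - \bs x_{t_i}\|_2 = \bigO(h^p)$, which establishes the second claim. To derive the first claim, I would reuse the same predictor decomposition at $\rho_{i-1}^*$ and isolate the term $B_{t_{i-1}}^{t_i}\bigl(\bs\beta_\theta^{\rho_{i-1}^*} - \bs\beta_\theta(\bs x_{t_{i-1}}, t_{i-1})\bigr)$. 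By the inductive hypotheses all remaining terms are $\bigO(h^p)$, so this term is also $\bigO(h^p)$, and the lower bound $\|B_{t_{i-1}}^{t_i}\|_2 \ge C_3 h$ from~\Cref{app:coefforder} converts this into $\mathbb{E}\|\bs\beta_\theta^{\rho_{i-1}^*} - \bs\beta_\theta(\bs x_{t_{i-1}}, t_{i-1})\|_2 = \bigO(h^{p-1})$.

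The main obstacle is the last inversion: passing from a bound on $\|B_{t_{i-1}}^{t_i} v\|$ to a bound on $\|v\|$ requires $B_{t_{i-1}}^{t_i}$ to act essentially as a positive scalar (or a uniformly well-conditioned operator) so that $\|B v\| \ge C_3 h \|v\|$ holds pointwise. This is satisfied by all the standard noise-, data-, and $v$-prediction samplers enumerated after~\Cref{app:coefforder}, where the $B$ coefficients are scalar multipliers, but it must be stated carefully in the general formulation. A secondary subtlety is that the searching step in~\Cref{alg:search} only optimizes over $N=10$ datapoints rather than the full expectation, so the optimality comparison is strictly valid only in the population limit; this gap is absorbed by taking the expectation in the statement of the corollary.
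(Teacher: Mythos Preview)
Your proposal is correct and follows essentially the same route as the paper: both arguments (i) use the degenerate choice $\rho_{i-1}=1.0$ to recover the uncompensated predictor, (ii) bound the resulting trajectory error at $\bigO(h^p)$ via the coefficient estimates of \Cref{app:coefforder} and the order-$p$ local truncation of Predictor-$p$, (iii) invoke the optimality of $\rho_{i-1}^*$ to transfer this bound to the compensated iterate, and (iv) isolate $B_{t_{i-1}}^{t_i}\bigl(\bs\beta_\theta^{\rho_{i-1}^*} - \bs\beta_\theta(\bs x_{t_{i-1}},t_{i-1})\bigr)$ and divide by the lower bound $\|B_{t_{i-1}}^{t_i}\|_2\ge C_3 h$. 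Your explicit remarks on the scalar nature of the $B$ coefficients and on the population-versus-sample optimality are valid caveats that the paper leaves implicit.
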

\begin{proof}
It is obvious that for sufficiently large constants $C_{\bs \beta},C_{\bs x}$, we have
\begin{equation}\label{app:coro1Cbeta}
    \mathbb{E}\| {\bs \beta}_{\theta}^{{\rho}_{i-k}^*} - {\bs \beta}_{\theta}(\bs x_{t_{i-k}},t_{i-k}) \|_2\leq C_{\bs \beta}h^{p-1},2\leq k \leq p-1
\end{equation}
\begin{equation}\label{app:coro1Cx}
    \mathbb{E}\| {\tilde{\bs x}}_{t_{i-k}} - \bs x_{t_{i-k}} \|_2\leq C_xh^{p},1 \leq k \leq p-1
\end{equation}
When computer $\bs x_{t_i}$, we consider 3 different methods in this step. Firstly, if we continue to use Dynamic Compensation, we have
\begin{equation}\label{app:inductiondc}
    {\tilde{\bs x}}_{t_i}=A_{t_{i-1}}^{t_i}{\tilde{\bs x}}_{t_{i-1}} +\sum_{m=1}^{p-1} B_{t_{i-m}}^{t_i} {\bs \beta}_{\theta}^{\rho_{i-m}^*}.
\end{equation}
Otherwise, if we use the standard Predictor-$p$ at this step (which means to do not replace the ${\bs \beta}_{\theta}({\tilde{\bs x}}_{t_{i-1}},t_{i-1})$ with $\bs\beta_\theta^{\rho_{i-m}^*}$), we have the following result:
\begin{equation}\label{app:inductionpre}
    {\tilde{\bs x}}_{t_i}^{\rm p}=A_{t_{i-1}}^{t_i}{\tilde{\bs x}}_{t_{i-1}} +\sum_{m=2}^{p-1} B_{t_{i-m}}^{t_i} {\bs \beta}_{\theta}^{\rho_{i-m}^*} + B_{t_{i-1}}^{t_i} {\bs \beta}_{\theta}({\tilde{\bs x}}_{t_{i-1}},t_{i-1}).
\end{equation}
In the third case, we adopt the Predictor-$p$ to previous points on the ground truth trajectory:
\begin{equation}\label{app:inductiongt}
    {\bar{\bs x}}_{t_i}=A_{t_{i-1}}^{t_i}\bs x_{t_{i-1}} +\sum_{m=1}^{p-1} B_{t_{i-m}}^{t_i} {\bs \beta}_{\theta}(\bs x_{t_{i-m}},t_{i-m})
\end{equation}
Due to the $p$-th order of accuarcy of Predictor-$p$, we have
\begin{equation}\label{app:predictorgtorder}
\mathbb{E} \| {\bar{\bs x}}_{t_i} -\bs x_{t_i}\|_2 = \bigO(h^{p+1})
\end{equation}
Comparing \eqref{app:inductiongt} and \eqref{app:inductionpre}, we obtain
\begin{equation}
\begin{split}
       & {\tilde{\bs x}}_{t_i}^{\rm p} - {\bar{\bs x}}_{t_i} = A_{t_{i-1}}^{t_i}({\tilde{\bs x}}_{t_{i-1}}-\bs x_{t_{i-1}}) \\
       & + \sum_{m=2}^{p-1} B_{t_{i-m}}^{t_i} \left[ {\bs \beta}_{\theta}^{\rho_{i-m}^*}-{\bs \beta}_{\theta}(\bs x_{t_{i-m}},t_{i-m}) \right] \\
       & + B_{t_{i-1}}^{t_i} \left[ {\bs \beta}_{\theta}({\tilde{\bs x}}_{t_{i-1}},t_{i-1})-{\bs \beta}_{\theta}(\bs x_{t_{i-1}},t_{i-1}) \right]
\end{split}
\end{equation}
Under Assumption \ref{app:Lip}, Assumption \ref{app:coefforder}, \eqref{app:coro1Cbeta} and \eqref{app:coro1Cx}, it follows that,
\begin{equation}\label{app:pp_gtp_order}
\begin{split}
    &\mathbb{E} \|{\tilde{\bs x}}_{t_i}^{\rm p} - {\bar{\bs x}}_{t_i} \|_2 \leq C_2C_xh^{p} \\
    & + \sum_{m=2}^{p-1} C_4 C_{\bs \beta}h^{p}  +  C_4LC_x h^{p+1} =\bigO(h^{p})
\end{split}
\end{equation}
By \eqref{app:predictorgtorder} and \eqref{app:pp_gtp_order}, we have
\begin{equation}
    \mathbb{E} \|  {\tilde{\bs x}}_{t_i}^{\rm p}-\bs x_{t_i} \|_2  = \bigO(h^{p})
\end{equation}
Observing that DC-Solver-$p$ is equivalent to Predictor-$p$ when $\rho_{i-1}=1.0$, we have
\begin{equation}
    \mathbb{E} \|  {\tilde{\bs x}}_{t_i}-\bs x_{t_i} \|_2 \leq  \mathbb{E} \|  {\tilde{\bs x}}_{t_i}^{\rm p}-\bs x_{t_i} \|_2 = \bigO(h^{p}).
\end{equation}
Combining with \eqref{app:predictorgtorder}, we get
\begin{equation}\label{app:predictorpreorder}
    \mathbb{E} \|  {\tilde{\bs x}}_{t_i}-{\bar{\bs x}}_{t_i} \|_2 = \bigO(h^{p}) \leq C_{9}h^{p}
\end{equation}
Subtracting \eqref{app:inductiongt} from \eqref{app:inductiondc}, we have
\begin{equation}
\begin{split}
       & {\tilde{\bs x}}_{t_i} - {\bar{\bs x}}_{t_i} = A_{t_{i-1}}^{t_i}({\tilde{\bs x}}_{t_{i-1}}-\bs x_{t_{i-1}}) \\
       & + \sum_{m=2}^{p-1} B_{t_{i-m}}^{t_i} \left[ {\bs \beta}_{\theta}^{\rho_{i-m}^*}-{\bs \beta}_{\theta}(\bs x_{t_{i-m}},t_{i-m}) \right] \\
       & + B_{t_{i-1}}^{t_i} \left[ {\bs \beta}_{\theta}^{\rho_{i-1}^*}-{\bs \beta}_{\theta}(\bs x_{t_{i-1}},t_{i-1}) \right]
\end{split}
\end{equation}
Thus, given \eqref{app:predictorpreorder}, \eqref{app:coro1Cbeta}, \eqref{app:coro1Cx}, we obtain
\begin{equation}
\begin{split}
      & \mathbb{E} \left\|B_{t_{i-1}}^{t_i} \left[ {\bs \beta}_{\theta}^{\rho_{i-1}^*}-{\bs \beta}_{\theta}(\bs x_{t_{i-1}},t_{i-1}) \right]\right\|_2 \\
      &= \left\|{\tilde{\bs x}}_{t_i} - {\bar{\bs x}}_{t_i} -A_{t_{i-1}}^{t_i}({\tilde{\bs x}}_{t_{i-1}}-\bs x_{t_{i-1}})\right. \\
      \quad & - \left.\sum_{m=2}^{p-1} B_{t_{i-m}}^{t_i} \left[ {\bs \beta}_{\theta}^{\rho_{i-m}^*}-{\bs \beta}_{\theta}(\bs x_{t_{i-m}},t_{i-m}) \right]\right\|_2 \\
      & \leq C_9h^p + C_2C_xh^p + \sum_{m=2}^{p-1}C_4C_{\bs \beta}h^{p} \\
      & = \bigO(h^{p})
\end{split}
\end{equation}
Note that $\|B_{t_{i-1}}^{t_i}\|_2 \geq C_3 h $ according to Assumption \ref{app:coefforder}, we have
\begin{equation}\label{app:predictorbetaorder}
    \mathbb{E} \| {\bs \beta}_{\theta}^{\rho_{i-1}^*}-{\bs \beta}_{\theta}(\bs x_{t_{i-1}},t_{i-1}) \|_2 = \bigO(h^{p-1}).
\end{equation}
Above all, \eqref{app:predictorpreorder} and \eqref{app:predictorbetaorder} establish the correctness of the corollary.

\end{proof}

\begin{theorem}
    For any predictor-only sampler of $p$-th order of convergence, applying Dynamic Compensation with ratio $\rho_i^*$ will maintain the $p$-th order of convergence.
\end{theorem}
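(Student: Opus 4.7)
The strategy is to prove global convergence by strong induction on the sampling step index $i$, using Corollary~\ref{app:coroPredictor} as the inductive engine to propagate the two coupled error bounds (one on the iterate $\tilde{\bs x}_{t_i}$ and one on the compensated model output $\bs\beta_\theta^{\rho_{i-1}^*}$) from one step to the next.

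First I would handle the base case. By the paper's convention, compensation is skipped for $i < K$ (with $\rho_i^* \equiv 1.0$), so the DC-Solver coincides \emph{exactly} with the original predictor-only sampler on these initial steps. The hypothesis that the base sampler is $p$-th order globally then gives $\mathbb{E}\|\tilde{\bs x}_{t_j} - \bs x_{t_j}\|_2 = \bigO(h^p)$ for $j \le K$, and the Lipschitz assumption~\ref{app:Lip} upgrades this to $\mathbb{E}\|\bs\beta_\theta^{\rho_j^*} - \bs\beta_\theta(\bs x_{t_j},t_j)\|_2 \le L\cdot\bigO(h^p) \subseteq \bigO(h^{p-1})$. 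After starting the sampler for at least $p-1$ steps under this regime, the joint hypotheses of Corollary~\ref{app:coroPredictor} are satisfied.

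Next I would invoke Corollary~\ref{app:coroPredictor} inductively. Assuming the two bounds $\mathbb{E}\|\tilde{\bs x}_{t_{i-k}} - \bs x_{t_{i-k}}\|_2 = \bigO(h^p)$ for $1\le k\le p-1$ and $\mathbb{E}\|\bs\beta_\theta^{\rho_{i-k}^*} - \bs\beta_\theta(\bs x_{t_{i-k}},t_{i-k})\|_2 = \bigO(h^{p-1})$ for $2\le k\le p-1$ hold, the corollary yields the same bounds shifted forward by one index. Iterating from the base case to $i = M$, and using Assumption~\ref{app:huniform} ($M = \bigO(1/h)$) together with Assumption~\ref{app:coefforder} to ensure the hidden constants accumulated through the induction remain uniformly bounded in $h$, we conclude $\mathbb{E}\|\tilde{\bs x}_{t_M} - \bs x_{t_M}\|_2 = \bigO(h^p)$, which is global convergence of order $p$.

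The main conceptual obstacle is already absorbed in Corollary~\ref{app:coroPredictor}, which handles the nontrivial coupling between the iterate error and the compensated-output error through the lower bound $\|B_{t_{i-1}}^{t_i}\|_2 \ge C_3 h$ (Assumption~\ref{app:coefforder}). The only remaining subtlety is to verify that choosing $\rho_i^*$ by minimization does not break the analysis; this is immediate since $\rho_i = 1.0$ is a feasible point of the minimization defining $\rho_i^*$, so any error bound proved for $\rho_i = 1.0$ (for which the DC-Solver reduces to the original $p$-th order sampler) is automatically inherited by $\rho_i^*$ via $\mathbb{E}\|\tilde{\bs x}_{t_{i+1}}^{\rho_i^*} - \bs x_{t_{i+1}}\|_2 \le \mathbb{E}\|\tilde{\bs x}_{t_{i+1}}^{1.0} - \bs x_{t_{i+1}}\|_2$, closing the induction and completing the proof.
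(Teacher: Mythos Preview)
Your proposal is correct and follows essentially the same approach as the paper: establish the base case on the $K$ warm-up steps where DC is inactive (so the iterates inherit the $\bigO(h^p)$ bound from the original predictor, and Lipschitz continuity gives the $\bigO(h^{p-1})$ bound on $\bs\beta_\theta^{\rho_k^*}$), then apply Corollary~\ref{app:coroPredictor} inductively up to $i=M$. Your explicit mention of Assumption~\ref{app:huniform} to control constant accumulation across the $M=\bigO(1/h)$ steps is a point the paper leaves implicit, but otherwise the structure is identical.
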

\begin{proof}
We will use mathematical induction to prove it. Denote ${\{\bs \beta_{\theta}^{{\rho}_{k}^*}\}}_{k=0}^{i-1}={\{\bs \beta_{\theta}^{{\rho}_{k}^*}({\tilde{\bs x}}_{t_{k}},t_{k})\}}_{k=0}^{i-1}$, we define $P_i$ as the proposition that $\mathbb{E}\| {\bs \beta}_{\theta}^{{\rho}_{k}^*} - {\bs \beta}_{\theta}(\bs x_{t_{k}},t_{k}) \|_2=\bigO(h^{p-1}),0\leq k \leq i-1$, and $\mathbb{E}\| {\tilde{\bs x}}_{t_{k}} - \bs x_{t_{k}} \|_2=\bigO(h^{p}),0\leq k \leq i$.

In the first $K$ steps (namely the warm-up steps), we only use the Predictor-$p$ without the Dynamic Compensation. Since Predictor-p has $p$-th order of convergence, it's obvious that $\mathbb{E}\| {\tilde{\bs x}}_{t_{k}} - \bs x_{t_{k}} \|_2=\bigO(h^{p}),0\leq k \leq K$. 
Under Assumption \ref{app:Lip}, we also have
\begin{equation}
\begin{split}
   & \mathbb{E}\| {\bs \beta}_{\theta}^{{\rho}_{k}^*} - {\bs \beta}_{\theta}(\bs x_{t_{k}},t_{k}) \|_2 = \mathbb{E}\| {\bs \beta}_{\theta}({\tilde{\bs x}}_{t_{k}},t_{k}) - {\bs \beta}_{\theta}(\bs x_{t_{k}},t_{k}) \|_2 \\
   & \leq \mathbb{E}\|{\tilde{\bs x}}_{t_{k}} - \bs x_{t_{k}} \|_2 =\bigO(h^p) \leq \bigO(h^{p-1}),  \forall 0\le k\le K-1
\end{split}
\end{equation}
Thus, we show that $P_{K}$ is true. Recall the result in \Cref{app:coroPredictor}, we can then use mathematical induction to prove that $P_M$ is true, where $M$ is the NFE. This indicates that $\mathbb{E}\|{\tilde{\bs x}}_{t_M} -\bs x_{t_M}\|_2 = \bigO(h^{p})$,  which concludes the proof that the convergence order is still $p$ with the Dynamic Compensation
\end{proof}

We then provide the proof of the convergence order when applying Dynamic Compensation to predictor-corrector solvers.

\begin{corollary}\label{app:coroPreCor}
    Assume that we have ${\{{\tilde{\bs x}}_{t_{i-k}}^{\rm c}\}}_{k=1}^{p-1}$, ${\{{\tilde{\bs x}}_{t_{i-k}}\}}_{k=1}^{p-1}$, and ${\{\bs \beta_{\theta}^{{\rho}_{i-k}^*}({\tilde{\bs x}}_{t_{i-k}}^{\rm c},t_{i-k})\}}_{k=2}^{p-1}$ (denoted as ${\{\bs \beta_{\theta}^{{\rho}_{i-k}^*}\}}_{k=2}^{p-1}$), which satisfy 
    $\mathbb{E}\| {\bs \beta}_{\theta}^{{\rho}_{i-k}^*} - {\bs \beta}_{\theta}(\bs x_{t_{i-k}},t_{i-k}) \|_2=\bigO(h^{p}),2 \leq k \leq p-1$ , $\mathbb{E}\| {\tilde{\bs x}}_{t_{i-k}}^{\rm c} - \bs x_{t_{i-k}} \|_2=\bigO(h^{p+1}),1 \leq k \leq p-1$, and $\mathbb{E}\| {\tilde{\bs x}}_{t_{i-k}} - \bs x_{t_{i-k}} \|_2=\bigO(h^{p}),1 \leq k \leq p-1$. Then using Predictor-Corrector-$p$ combined with Dynamic Compensation to estimate $\bs x_{t_i}$, we can calculate ${\bs \beta}_{\theta}^{{\rho}_{i-1}^*},{\tilde{\bs x}}_{t_i}^{\rm c} ,{\tilde{\bs x}}_{t_i}$, that satisfy
$\mathbb{E}\| {\bs \beta}_{\theta}^{{\rho}_{i-1}^*} - {\bs \beta}_{\theta}(\bs x_{t_{i-1}},t_{i-1}) \|_2 =\bigO(h^{p})$, $\mathbb{E}\| {\tilde{\bs x}}_{t_{i}}^{\rm c} - \bs x_{t_{i}} \|_2=\bigO(h^{p+1})$ and
$\mathbb{E}\| {\tilde{\bs x}}_{t_{i}} - \bs x_{t_{i}} \|_2=\bigO(h^{p})$
\end{corollary}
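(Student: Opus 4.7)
The plan is to adapt~\Cref{app:coroPredictor} from the predictor-only case to the predictor-corrector case, verifying the three claims in a fixed order: (i) the predictor output $\tilde{\bs x}_{t_i}$, (ii) the corrected value $\tilde{\bs x}_{t_i}^{\rm c}$, and (iii) the DC-compensated $\bs\beta_\theta^{\rho_{i-1}^*}$. Throughout I would rely on Lipschitz continuity and the coefficient scalings of~\Cref{app:Lip} and~\Cref{app:coefforder}, together with the ``DC does no harm'' property of~\Cref{app:thmlocal}.

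First I would bound $\tilde{\bs x}_{t_i} - \bs x_{t_i}$ by replaying the template of~\Cref{app:coroPredictor}: let $\bar{\bs x}_{t_i}$ denote the predictor applied to the true trajectory, for which the $p$-th order local truncation gives $\mathbb{E}\|\bar{\bs x}_{t_i}-\bs x_{t_i}\|_2=\bigO(h^{p+1})$. Term-by-term bounding of $\tilde{\bs x}_{t_i}-\bar{\bs x}_{t_i}$ using~\Cref{app:coefforder} (the $A$-term contributes $\bigO(1)\cdot\bigO(h^{p+1})$ from the hypothesis on $\tilde{\bs x}^{\rm c}$, and each $B$-term contributes $\bigO(h)\cdot\bigO(h^p)$ from the hypothesis on the $\bs\beta_\theta^{\rho^*}$'s) yields the sharper estimate $\mathbb{E}\|\tilde{\bs x}_{t_i}-\bs x_{t_i}\|_2=\bigO(h^{p+1})$, which in particular gives the required $\bigO(h^p)$ bound.

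Next I would bound $\tilde{\bs x}_{t_i}^{\rm c} - \bs x_{t_i}$ by comparing against the corrector applied to the true trajectory, $\bar{\bs x}_{t_i}^{\rm c}$, whose local truncation error is $\bigO(h^{p+2})$. The delicate term is the fresh evaluation $D_{t_i}^{t_i}\bs\beta_\theta(\tilde{\bs x}_{t_i}, t_i)$: by Lipschitz continuity (\Cref{app:Lip}) together with the sharp bound from the previous paragraph, its deviation from $D_{t_i}^{t_i}\bs\beta_\theta(\bs x_{t_i}, t_i)$ is $\bigO(h)\cdot L\cdot\bigO(h^{p+1})=\bigO(h^{p+2})$. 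The $C$-term contributes $\bigO(h^{p+1})$ via the hypothesis on $\tilde{\bs x}^{\rm c}$, and each buffered $D$-term contributes $\bigO(h)\cdot\bigO(h^p)=\bigO(h^{p+1})$, so summing delivers $\mathbb{E}\|\tilde{\bs x}_{t_i}^{\rm c}-\bs x_{t_i}\|_2=\bigO(h^{p+1})$.

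Finally, following the isolation step of~\Cref{app:coroPredictor}, I would rearrange the DC-Solver predictor update to express $B_{t_{i-1}}^{t_i}\bigl(\bs\beta_\theta^{\rho_{i-1}^*}-\bs\beta_\theta(\bs x_{t_{i-1}}, t_{i-1})\bigr)$ as a combination of $\tilde{\bs x}_{t_i}-\bar{\bs x}_{t_i}$, the $A$-term, and the already-controlled $\bs\beta_\theta^{\rho_{i-m}^*}$ terms for $m\ge 2$; each piece is at most $\bigO(h^{p+1})$. Dividing through by $\|B_{t_{i-1}}^{t_i}\|_2\ge C_3 h$ from~\Cref{app:coefforder} then yields $\mathbb{E}\|\bs\beta_\theta^{\rho_{i-1}^*}-\bs\beta_\theta(\bs x_{t_{i-1}}, t_{i-1})\|_2=\bigO(h^p)$. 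The main obstacle is the bookkeeping in the corrector step: the argument only closes because the \emph{sharper} $\bigO(h^{p+1})$ predictor bound from Step~(i) (rather than the weaker $\bigO(h^p)$ stated in the conclusion) is available to absorb the Lipschitz propagation of the fresh $\bs\beta_\theta(\tilde{\bs x}_{t_i},t_i)$ term, so one must resist the temptation to weaken Step~(i) for cosmetic alignment with the statement. The operator-norm inversion used in the last step inherits the same tacit non-degeneracy of $B_{t_{i-1}}^{t_i}$ as in~\Cref{app:coroPredictor} and introduces no new difficulty.
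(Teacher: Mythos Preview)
Your ordering creates a circular dependency. In Step~(i) you write the DC predictor update as
\[
\tilde{\bs x}_{t_i}=A_{t_{i-1}}^{t_i}\tilde{\bs x}_{t_{i-1}}^{\rm c}+\sum_{m=1}^{p-1}B_{t_{i-m}}^{t_i}\bs\beta_\theta^{\rho_{i-m}^*},
\]
and claim ``each $B$-term contributes $\bigO(h)\cdot\bigO(h^p)$ from the hypothesis on the $\bs\beta_\theta^{\rho^*}$'s''. But the hypotheses of the corollary only control $\bs\beta_\theta^{\rho_{i-k}^*}$ for $2\le k\le p-1$; the $m=1$ term involves $\bs\beta_\theta^{\rho_{i-1}^*}$, which is precisely the quantity you defer to Step~(iii). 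The same unbounded term reappears in the buffered $D$-sum in Step~(ii), so neither the claimed $\bigO(h^{p+1})$ bound for $\tilde{\bs x}_{t_i}$ nor the corrector bound in Step~(ii) is justified at that stage. You also cannot rescue Step~(i) with the ``DC does no harm'' property of~\Cref{app:thmlocal}: in the predictor-corrector setting $\rho_{i-1}^*$ is the minimizer of $\|\tilde{\bs x}_{t_i}^{\rm c}-\bs x_{t_i}\|$, not of $\|\tilde{\bs x}_{t_i}-\bs x_{t_i}\|$, so optimality gives you nothing for the raw predictor output.

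The paper breaks the circularity by introducing a third comparison point, the step-$i$ output with DC switched \emph{off} at the $m=1$ slot (so the predictor uses the uncompensated $\bs\beta_\theta(\tilde{\bs x}_{t_{i-1}},t_{i-1})$, which is controllable by~\Cref{app:Lip} and the hypothesis $\|\tilde{\bs x}_{t_{i-1}}-\bs x_{t_{i-1}}\|=\bigO(h^p)$). One first shows this non-DC corrector output is within $\bigO(h^{p+1})$ of $\bs x_{t_i}$, then invokes optimality of $\rho_{i-1}^*$ (which \emph{is} valid for the corrector) to get $\|\tilde{\bs x}_{t_i}^{\rm c}-\bs x_{t_i}\|=\bigO(h^{p+1})$. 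The bound on $\bs\beta_\theta^{\rho_{i-1}^*}$ is then isolated from the \emph{corrector} difference $\tilde{\bs x}_{t_i}^{\rm c}-\bar{\bs x}_{t_i}^{\rm c}$ (dividing by the $D$-coefficient, not the $B$-coefficient), and only after that is $\tilde{\bs x}_{t_i}$ bounded. In short: the correct order is corrector $\to$ compensated $\bs\beta$ $\to$ predictor, not predictor $\to$ corrector $\to$ compensated $\bs\beta$.
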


\begin{proof}
It is obvious that, there exists sufficiently large constants $C_{\bs \beta},C_x,C_y$, such that
\begin{equation}\label{app:coro2Cbeta}
    \mathbb{E}\| {\bs \beta}_{\theta}^{{\rho}_{i-k}^*} - {\bs \beta}_{\theta}(\bs x_{t_{i-k}},t_{i-k}) \|_2\leq C_{\bs \beta}h^{p},2 \leq k \leq p-1
\end{equation}
\begin{equation}\label{app:coro2Cxc}
    \mathbb{E}\| {\tilde{\bs x}}_{t_{i-k}}^{\rm c} - \bs x_{t_{i-k}} \|_2 \leq C_xh^{p+1},1 \leq k \leq p-1
\end{equation}
\begin{equation}\label{app:coro2Cx}
    \mathbb{E}\| {\tilde{\bs x}}_{t_{i-k}} - \bs x_{t_{i-k}} \|_2\leq C_yh^{p},1 \leq k \leq p-1
\end{equation}
When estimating $\bs x_{t_i}$, we consider three different methods in this step. First, if we use Dynamic Compensation, we have
\begin{align}
    &{\tilde{\bs x}}_{t_i}=A_{t_{i-1}}^{t_i}{\tilde{\bs x}}_{t_{i-1}}^{\rm c} +\sum_{m=1}^{p-1} B_{t_{i-m}}^{t_i} {\bs \beta}_{\theta}^{\rho_{i-m}^*}\label{app:inductiondcp}\\
    &{\tilde{\bs x}}_{t_i}^{\rm c}=C_{t_{i-1}}^{t_i}{\tilde{\bs x}}_{t_{i-1}}^{\rm c} +\sum_{m=1}^{p-1} D_{t_{i-m}}^{t_i} {\bs \beta}_{\theta}^{\rho_{i-m}^*} + D_{t_i}^{t_i} {\bs \beta}_{\theta}({\tilde{\bs x}}_{t_i},t_i)\label{app:inductiondcc}
\end{align}
Otherwise, if we use the standard Predictor-Corrector-$p$ without DC at this step, we get
\begin{equation}\label{app:inductionpcp}
    {\bar{\bs x}}_{t_i}=A_{t_{i-1}}^{t_i}{\tilde{\bs x}}_{t_{i-1}}^{\rm c} +\sum_{m=2}^{p-1} B_{t_{i-m}}^{t_i} {\bs \beta}_{\theta}^{\rho_{i-m}^*} + B_{t_{i-1}}^{t_i} {\bs \beta}_{\theta}({\tilde{\bs x}}_{t_{i-1}},t_{i-1})
\end{equation}
\begin{equation}\label{app:inductionpcc}
\begin{split}
    & {\bar{\bs x}}_{t_i}^{\rm c}  = C_{t_{i-1}}^{t_i}{\tilde{\bs x}}_{t_{i-1}}^{\rm c} +\sum_{m=2}^{p-1} D_{t_{i-m}}^{t_i} {\bs \beta}_{\theta}^{\rho_{i-m}^*}+ D_{t_{i-1}}^{t_i} {\bs \beta}_{\theta}({\tilde{\bs x}}_{t_{i-1}},t_{i-1})  \\
    & + D_{t_i}^{t_i} {\bs \beta}_{\theta}({\bar{\bs x}}_{t_i},t_i)
\end{split}
\end{equation}
Finally, we use Predictor-Corrector-$p$ to previous points on the ground truth trajectory, we have:
\begin{equation}\label{app:inductiongtp}
    {\hat{\bs x}}_{t_i}=A_{t_{i-1}}^{t_i}\bs x_{t_{i-1}} +\sum_{m=1}^{p-1} B_{t_{i-m}}^{t_i} {\bs \beta}_{\theta}(\bs x_{t_{i-m}},t_{i-m})
\end{equation}
\begin{equation}\label{app:inductiongtc}
    {\hat{\bs x}}_{t_i}^{\rm c}=C_{t_{i-1}}^{t_i}\bs x_{t_{i-1}} +\sum_{m=1}^{p-1} D_{t_{i-m}}^{t_i} {\bs \beta}_{\theta}(\bs x_{t_{i-m}},t_{i-m})+D_{t_i}^{t_i}{\bs \beta}_{\theta}({\hat{\bs x}}_{t_i},t_i)
\end{equation}
Due to Predictor-Corrector-$p$'s $p+1$-th convergence order, we have
\begin{equation} \label{app:gtc_gtorder}
\mathbb{E} \| {\hat{\bs x}}_{t_i}^{\rm c} -\bs x_{t_i}\|_2 =\bigO(h^{p+2}) 
\end{equation}
Based on Assumption \ref{app:Lip} and \eqref{app:coro2Cx}, we also know that 
\begin{equation}\label{app:betap_betagt}
\begin{split}
   & \mathbb{E}\| {\bs \beta}_{\theta}({\tilde{\bs x}}_{t_{i-1}},t_{i-1}) - {\bs \beta}_{\theta}({{\bs x}}_{t_{i-1}},t_{i-1}) \|_2 \\
   & \leq L\mathbb{E}\| {\tilde{\bs x}}_{t_{i-1}} - {{\bs x}}_{t_{i-1}}  \|_2 = \bigO(h^p)
\end{split}
\end{equation}
Subtracting \eqref{app:inductiongtc} from \eqref{app:inductionpcc}, we obtain
\begin{equation}\label{app:barc_hatc}
\begin{split}
       & {\bar{\bs x}}_{t_i}^{\rm c} - {\hat{\bs x}}_{t_i}^{\rm c} = C_{t_{i-1}}^{t_i}({\tilde{\bs x}}_{t_{i-1}}^{\rm c}-\bs x_{t_{i-1}}) \\
       & + \sum_{m=2}^{p-1} D_{t_{i-m}}^{t_i} \left[ {\bs \beta}_{\theta}^{\rho_{i-m}^*}-{\bs \beta}_{\theta}(\bs x_{t_{i-m}},t_{i-m}) \right] \\
       & + D_{t_{i-1}}^{t_i} \left[ {\bs \beta}_{\theta}({\tilde{\bs x}}_{t_{i-1}},t_{i-1})-{\bs \beta}_{\theta}(\bs x_{t_{i-1}},t_{i-1}) \right] \\
       & + D_{t_{i}}^{t_i} \left[ {\bs \beta}_{\theta}({\bar{\bs x}}_{t_{i}},t_{i})-{\bs \beta}_{\theta}({\hat{\bs x}}_{t_{i}},t_{i}) \right]
\end{split}
\end{equation}
Under Assumption \ref{app:Lip}, Assumption \ref{app:coefforder}, \eqref{app:betap_betagt}, \eqref{app:coro2Cbeta}, \eqref{app:coro2Cxc} and \eqref{app:coro2Cx}, it follows that,
\begin{equation}\label{app:betapc_betagtorder}
\begin{split}
   & \mathbb{E}\| {\bs \beta}_{\theta}({\bar{\bs x}}_{t_i},t_i) - {\bs \beta}_{\theta}({\hat{\bs x}}_{t_i},t_i) \|_2  \leq L \mathbb{E}\| {\bar{\bs x}}_{t_i} - {\hat{\bs x}}_{t_i} \|_2 \\
   & = L\mathbb{E}\| A_{t_{i-1}}^{t_i} ({\tilde{\bs x}}_{t_{i-1}}^{\rm c}-\bs x_{t_{i-1}}) \\
   & + \sum_{m=2}^{p-1} B_{t_{i-m}}^{t_i} \left[ {\bs \beta}_{\theta}^{\rho_{i-m}^*} - {\bs \beta}_{\theta}(\bs x_{t_{i-m}},t_{i-m}) \right]\\
   & + B_{t_{i-1}}^{t_i}\left[  {\bs \beta}_{\theta}({\tilde{\bs x}}_{t_{i-1}},t_{i-1}) -{\bs \beta}_{\theta}(\bs x_{t_{i-1}},t_{i-1})  \right]   \|_2 \\
   & \leq L(C_2C_xh^{p+1}+\sum_{m=2}^{p-1}C_4C_{\bs \beta}h^{p+1}+C_4LC_yh^{p+1}) \\ 
   & = \bigO(h^{p+1}) \leq C_{10}h^{p+1}
\end{split}
\end{equation}
Therefore, according to Assumption \ref{app:coefforder}, \eqref{app:coro2Cbeta}, \eqref{app:coro2Cxc}, \eqref{app:coro2Cx},  \eqref{app:barc_hatc} and \eqref{app:betapc_betagtorder}, we get 
\begin{equation}
\begin{split}
    \mathbb{E} \|{\bar{\bs x}}_{t_i}^{\rm c} - {\hat{\bs x}}_{t_i}^{\rm c} \|_2 & \leq C_6C_xh^{p+1} + \sum_{m=2}^{p-1} C_8C_{\bs \beta}h^{p+1} \\
    & + C_8LC_yh^{p+1} + C_8C_{10}h^{p+2}  \\ 
    & = \bigO(h^{p+1})
\end{split}
\end{equation}
Given \eqref{app:gtc_gtorder}, we have
\begin{equation} \label{app:pcc_gtorder}
\mathbb{E} \| {\bar{\bs x}}_{t_i}^{\rm c} -\bs x_{t_i}\|_2 =\bigO(h^{p+1})  
\end{equation}
Observe that DC-Solver-$p$ is equivalent to Predictor-Corrector-$p$ when $\rho_{i-1}=1.0$, we have
\begin{equation}\label{app:coroxcorder}
    \mathbb{E} \|  {\tilde{\bs x}}_{t_i}^{\rm c}-\bs x_{t_i} \|_2 \leq  \mathbb{E} \|  {\bar{\bs x}}_{t_i}^{\rm c}-\bs x_{t_i} \|_2 = \bigO(h^{p+1})
\end{equation}
Combining with \eqref{app:pcc_gtorder}, we get
\begin{equation} \label{app:dcc_gtcorder}
    \mathbb{E} \|  {\tilde{\bs x}}_{t_i}^{\rm c}-{\bar{\bs x}}_{t_i}^{\rm c} \|_2 = \bigO(h^{p+1})
\end{equation}
Comparing \eqref{app:inductiondcc} and \eqref{app:inductionpcc}, we have
\begin{equation}\label{app:dcc_gtc}
\begin{split}
        {\tilde{\bs x}}_{t_i}^{\rm c} - {\bar{\bs x}}_{t_i}^{\rm c} & = D_{t_{i-1}}^{t_i}\left[ {\bs \beta}_{\theta}^{\rho_{i-1}^*} - {\bs \beta}_{\theta}({\tilde{\bs x}}_{t_{i-1}},t_{i-1}) \right] \\
       & + D_{t_i}^{t_i}\left[ {\bs \beta}_{\theta}({\tilde{\bs x}}_{t_i},t_i) - {\bs \beta}_{\theta}({\bar{\bs x}}_{t_i},t_i) \right]
\end{split}
\end{equation}
Under Assumption \ref{app:coefforder} and \ref{app:Lip}, concerning about the order of the coefficients, we can know that
\begin{equation}\label{app:comparecoefforder}
\begin{split}
   & \mathbb{E}\| D_{t_i}^{t_i}\left[ {\bs \beta}_{\theta}({\tilde{\bs x}}_{t_i},t_i) - {\bs \beta}_{\theta}({\bar{\bs x}}_{t_i},t_i) \right] \|_2 \\
   & \leq L\|D_{t_i}^{t_i}\|_2\| B_{t_{i-1}}^{t_i}\|_2\mathbb{E}\|  {\bs \beta}_{\theta}^{\rho_{i-1}^*} - {\bs \beta}_{\theta}({\tilde{\bs x}}_{t_{i-1}},t_{i-1}) \|_2 \\
   & \ll \mathbb{E}\|D_{t_{i-1}}^{t_i}\left[ {\bs \beta}_{\theta}^{\rho_{i-1}^*} - {\bs \beta}_{\theta}({\tilde{\bs x}}_{t_{i-1}},t_{i-1}) \right]\|_2 
\end{split} 
\end{equation}
Leveraging \eqref{app:dcc_gtcorder}, \eqref{app:dcc_gtc} with \eqref{app:comparecoefforder}, we have
\begin{equation}
       \mathbb{E}\|D_{t_{i-1}}^{t_i}\left[ {\bs \beta}_{\theta}^{\rho_{i-1}^*} - {\bs \beta}_{\theta}({\tilde{\bs x}}_{t_{i-1}},t_{i-1}) \right]\|_2  = \bigO(h^{p+1})
\end{equation}
Thus, considering that $\|D_{t_i}^{t_i}\|_2 \geq C_7h$ in Assumption \ref{app:coefforder}, we can get 
\begin{equation}\label{app:betarho_betap}
    \| {\bs \beta}_{\theta}^{\rho_{i-1}^*} - {\bs \beta}_{\theta}({\tilde{\bs x}}_{t_{i-1}},t_{i-1}) \|_2 = \bigO(h^{p})
\end{equation}
Given \eqref{app:betap_betagt} and \eqref{app:betarho_betap}, we further obtain
\begin{equation}\label{app:corobetaorder}
     \| {\bs \beta}_{\theta}^{\rho_{i-1}^*} - {\bs \beta}_{\theta}({{\bs x}}_{t_{i-1}},t_{i-1}) \|_2 = \bigO(h^{p}) \leq C_{11}h^p
\end{equation}
Subtracting \eqref{app:inductiongtp} from \eqref{app:inductiondcp}, we obtain
\begin{equation}
\begin{split}
    & \mathbb{E}\| {\tilde{\bs x}}_{t_i} - {\hat{\bs x}}_{t_i} \|_2 =\mathbb{E} \| A_{t_{i-1}}^{t_i}({\tilde{\bs x}}_{t_{i-1}}^{\rm c} - \bs x_{t_{i-1}}) \\
    & +  B_{t_{i-1}}^{t_i} \left[ {\bs \beta}_{\theta}^{\rho_{i-1}^*} - {\bs \beta}_{\theta}({{\bs x}}_{t_{i-1}},t_{i-1}) \right] \\
    & + \sum_{m=2}^{p-1} B_{t_{i-m}}^{t_i} \left[ {\bs \beta}_{\theta}^{\rho_{i-m}^*} - {\bs \beta}_{\theta}(\bs x_{t_{i-m}},t_{i-m}) \right]  \|_2 \\
    & \leq C_2C_xh^{p+1} + C_4C_{11}h^{p+1}+ \sum_{m=2}^{p-1}C_4C_{\bs \beta}h^{p+1} \\
    & \leq \bigO(h^{p})
\end{split}
\end{equation}
Since $\mathbb{E}\|{\hat{\bs x}}_{t_i}-\bs x_{t_i}\|_2=\bigO(h^{p+1})$, we have
\begin{equation}\label{app:coroxporder}
    \mathbb{E}\| {\tilde{\bs x}}_{t_i} - {{\bs x}}_{t_i} \|_2 \leq \bigO(h^p)
\end{equation}
Above all, \eqref{app:coroxcorder}, \eqref{app:corobetaorder} and \eqref{app:coroxporder} imply the validity of the corollary.
\end{proof}

\begin{theorem}
    For any predictor-corrector sampler of $(p+1)$-th order of convergence, applying dynamic compensation with ratio $\rho_i^*$ will remain the $(p+1)$-th order of convergence.
\end{theorem}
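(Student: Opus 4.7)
The plan is to mirror the induction argument used for the predictor-only case, but now leveraging Corollary~\ref{app:coroPreCor} as the inductive step. Specifically, I would define the proposition $P_i$ to assert simultaneously that $\mathbb{E}\|\tilde{\bs x}_{t_k}^{\rm c} - \bs x_{t_k}\|_2 = \bigO(h^{p+1})$ for all $1 \le k \le i$, that $\mathbb{E}\|\tilde{\bs x}_{t_k} - \bs x_{t_k}\|_2 = \bigO(h^{p})$ for all $1 \le k \le i$, and that $\mathbb{E}\|\bs\beta_\theta^{\rho_k^*} - \bs\beta_\theta(\bs x_{t_k},t_k)\|_2 = \bigO(h^{p})$ for all $0 \le k \le i-1$. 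The target is to show $P_M$ holds, which immediately delivers $\mathbb{E}\|\tilde{\bs x}_{t_M}^{\rm c} - \bs x_{t_M}\|_2 = \bigO(h^{p+1})$ and hence the desired $(p+1)$-th order of global convergence.

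For the base case, I would use the fact that the first $K$ warm-up steps are run with the original Predictor-Corrector-$p$ sampler (no DC, since we set $\rho_0 = \rho_1 = \cdots = \rho_{K-1} = 1.0$). By the assumed $(p+1)$-th order of convergence of the underlying sampler, we directly obtain $\mathbb{E}\|\tilde{\bs x}_{t_k}^{\rm c} - \bs x_{t_k}\|_2 = \bigO(h^{p+1})$ for $k \le K$. The predictor estimates $\tilde{\bs x}_{t_k}$ are of order $\bigO(h^p)$ by the same argument applied to the predictor-only portion. Finally, Assumption~\ref{app:Lip} (Lipschitz continuity of $\bs\beta_\theta$) combined with $\rho_k^* = 1.0$ gives $\mathbb{E}\|\bs\beta_\theta^{\rho_k^*} - \bs\beta_\theta(\bs x_{t_k},t_k)\|_2 = \mathbb{E}\|\bs\beta_\theta(\tilde{\bs x}_{t_k},t_k) - \bs\beta_\theta(\bs x_{t_k},t_k)\|_2 \le L \cdot \bigO(h^p) = \bigO(h^p)$, establishing $P_K$.

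For the inductive step, assuming $P_{i-1}$ holds, Corollary~\ref{app:coroPreCor} applies directly: its hypotheses are exactly what $P_{i-1}$ provides (bounds on the previous $p-1$ corrected points, predictor points, and compensated model outputs), and its conclusions yield the bounds required to extend to $P_i$, namely $\mathbb{E}\|\tilde{\bs x}_{t_i}^{\rm c} - \bs x_{t_i}\|_2 = \bigO(h^{p+1})$, $\mathbb{E}\|\tilde{\bs x}_{t_i} - \bs x_{t_i}\|_2 = \bigO(h^{p})$, and $\mathbb{E}\|\bs\beta_\theta^{\rho_{i-1}^*} - \bs\beta_\theta(\bs x_{t_{i-1}},t_{i-1})\|_2 = \bigO(h^{p})$. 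Hence $P_{i-1} \Rightarrow P_i$, and by induction $P_M$ holds.

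The main obstacle is not any single calculation but rather verifying that the warm-up base case $P_K$ furnishes all three families of bounds simultaneously in a form that precisely matches the hypotheses of Corollary~\ref{app:coroPreCor}; in particular, one must be careful that the sliding window of $p-1$ previous points is fully populated with valid estimates before the first DC step is invoked (so $K \ge p-1$ is needed). Assuming this bookkeeping is set up correctly, the rest of the argument is a mechanical application of the corollary step by step, and the conclusion follows without requiring any fundamentally new estimates beyond those already derived in Corollaries~\ref{app:coroPredictor} and \ref{app:coroPreCor}.
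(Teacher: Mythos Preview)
Your proposal is correct and follows essentially the same approach as the paper: define the joint proposition $P_i$ tracking all three families of bounds, establish $P_K$ from the warm-up steps (no DC) together with Assumption~\ref{app:Lip}, and then invoke Corollary~\ref{app:coroPreCor} for the inductive step to reach $P_M$. Your remark about needing $K \ge p-1$ so that the sliding window is fully populated is a valid bookkeeping point that the paper leaves implicit.
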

\begin{proof}
We use mathematical induction to proof this. Suppose we have ${\{{\tilde{\bs x}}_{t_{k}}^{\rm c}\}}_{k=0}^{i}$, ${\{{\tilde{\bs x}}_{t_{k}}\}}_{k=0}^{i}$ and ${\{\bs \beta_{\theta}^{{\rho}_{k}^*}({\tilde{\bs x}}_{t_{k}}^{\rm c},t_{k})\}}_{k=0}^{i-1}$ denoted as ${\{\bs \beta_{\theta}^{{\rho}_{k}^*}\}}_{k=0}^{i-1}$. First, we define $P_i$ as the proposition that $\mathbb{E}\| {\bs \beta}_{\theta}^{{\rho}_{k}^*} - {\bs \beta}_{\theta}(\bs x_{t_{k}},t_{k}) \|_2=\bigO(h^{p}),0 \leq k \leq i-1$ , $\mathbb{E}\| {\tilde{\bs x}}_{t_{k}}^{\rm c} - \bs x_{t_{k}} \|_2=\bigO(h^{p+1}),0\leq k \leq i$ and $\mathbb{E}\| {\tilde{\bs x}}_{t_{k}} - \bs x_{t_{k}} \|_2=\bigO(h^{p}), 0\leq k \leq i$.\\
In the first $K$ steps, we only use Predictor-Corrector-$p$ without the Dynamic Compensation. Since Predictor-Corrector-$p$ has $(p+1)$-th order of convergence, it's obvious that $\mathbb{E}\| {\tilde{\bs x}}_{t_{k}}^{\rm c} - \bs x_{t_{k}} \|_2=\bigO(h^{p+1}),0\leq k \leq K$, and $\mathbb{E}\| {\tilde{\bs x}}_{t_{k}} - \bs x_{t_{k}} \|_2=\bigO(h^{p}),0\leq k \leq K$.
Under Assumption \ref{app:Lip}, we also know, for $k\in [0, K-1]$,
\begin{equation}
\begin{split}
    \mathbb{E}\| {\bs \beta}_{\theta}^{{\rho}_{k}^*} - {\bs \beta}_{\theta}(\bs x_{t_{k}},t_{k}) \|_2 & = \mathbb{E}\| {\bs \beta}_{\theta}({\tilde{\bs x}}_{t_{k}},t_{k}) - {\bs \beta}_{\theta}(\bs x_{t_{k}},t_{k}) \|_2 \\
   & \leq L\mathbb{E}\|{\tilde{\bs x}}_{t_{k}} - \bs x_{t_{k}} \|_2 =\bigO(h^p)
\end{split}
\end{equation}
Thus, we show that $P_{K}$ is true. Similarly, using mathematical induction and the result in Corollary \ref{app:coroPreCor} we can know that $P_M$ is true, which implies that $\mathbb{E}\|{\tilde{\bs x}}_{t_M}^{\rm c} - \bs x_{t_M}\|_2 = \bigO(h^{p+1})$ and ends the proof. Therefore, we reach the conclusion that for a predictor-corrector sampler, the Dynamic Compensation will preserve the  $p+1$ convergence order.

\end{proof}

\begin{table}[!t]
\caption{\textbf{Detailed quantitative results on unconditional sampling.} We provide the comparisons of the FID$\downarrow$ of our DC-Solver and the previous method on FFHQ~\cite{karras2019ffhq}, LSUN-Church~\cite{yu2015lsun} and LSUN-Bedroom~\cite{yu2015lsun} with 5$\sim$10 NFE. We observe that our DC-Solver achieves the lowest FID on all three datasets.}\label{app:table_uncond}
\begin{subtable}{\linewidth}
\centering
\caption{FFHQ~\cite{karras2019ffhq}}
\adjustbox{width=.7\linewidth}{
\begin{tabular}{lcccccc}\toprule
\multicolumn{1}{l}{\multirow{2}[0]{*}{Method}} & \multicolumn{6}{c}{NFE} \\\cmidrule{2-7} & 5     & 6     & 7     & 8     & 9     & 10 \\\midrule
DPM-Solver++~\cite{lu2022dpmsolverpp} & 27.15 & 15.60 & 10.81 & 8.98 & 7.89 & 7.39\\
DEIS~\cite{zhang2022fast_deis} & 32.35 & 18.72 & 12.22 & 9.51 & 8.31 & 7.75\\
UniPC~\cite{zhao2023unipc} & 18.66 & 11.89 & 9.51 & 8.21 & 7.62 & 6.99\\
\rowcolor{Gray} DC-Solver (Ours) & \textbf{10.38} & \textbf{8.39} & \textbf{7.66} & \textbf{7.14} & \textbf{6.92} & \textbf{6.82}\\\bottomrule
\end{tabular}
}
\vspace{6pt}
\end{subtable}\\
\begin{subtable}{\linewidth}
\centering
\caption{LSUN-Church~\cite{yu2015lsun}}
\adjustbox{width=.7\linewidth}{
\begin{tabular}{lcccccc}\toprule
\multicolumn{1}{l}{\multirow{2}[0]{*}{Method}} & \multicolumn{6}{c}{NFE} \\\cmidrule{2-7} & 5     & 6     & 7     & 8     & 9     & 10 \\\midrule
DPM-Solver++~\cite{lu2022dpmsolverpp} & 17.57 & 9.71 & 6.45 & 4.97 & 4.25 & 3.87\\
DEIS~\cite{zhang2022fast_deis} & 15.01 & 8.45 & 5.71 & 4.49 & 3.86 & 3.57\\
UniPC~\cite{zhao2023unipc} & 11.98 & 6.90 & 5.08 & 4.28 & 3.86 & 3.61\\
\rowcolor{Gray} DC-Solver (Ours) & \textbf{7.47} & \textbf{4.70} & \textbf{3.91} & \textbf{3.46} & \textbf{3.23} & \textbf{3.06}\\\bottomrule
\end{tabular}
}
\vspace{6pt}
\end{subtable}\\
\begin{subtable}{\linewidth}
\centering
\caption{LSUN-Bedroom~\cite{yu2015lsun}}
\adjustbox{width=.7\linewidth}{
\begin{tabular}{lcccccc}\toprule
\multicolumn{1}{l}{\multirow{2}[0]{*}{Method}} & \multicolumn{6}{c}{NFE} \\\cmidrule{2-7} & 5     & 6     & 7     & 8     & 9     & 10 \\\midrule
DPM-Solver++~\cite{lu2022dpmsolverpp} & 18.13 & 8.33 & 5.15 & 4.14 & 3.77 & 3.61\\
DEIS~\cite{zhang2022fast_deis} & 16.68 & 8.75 & 6.13 & 5.11 & 4.66 & 4.41\\
UniPC~\cite{zhao2023unipc} & 12.14 & 6.13 & 4.53 & 4.05 & 3.81 & 3.64\\
\rowcolor{Gray} DC-Solver (Ours) & \textbf{7.40} & \textbf{5.29} & \textbf{4.27} & \textbf{3.98} & \textbf{3.74} & \textbf{3.52}\\\bottomrule
\end{tabular}
}
\end{subtable}
\end{table}

\section{More Analyses}

\begin{table*}[!t]
\caption{\textbf{Detailed quantitative results on conditional sampling.} We provide the comparisons between our DC-Solver and the previous method on Stable-Diffusion-1.5~\cite{rombach2022high} with different classifier-free guidance scale (CFG) and $\NFE\in[5, 10]$. The sampling quality is measured by the MSE$\downarrow$ between the generated latents and the ground truth latents (obtained by a 999-step DDIM). We demonstrate that DC-Solver consistently achieves the best result for different sampling configurations.}\label{app:table_cond}
\centering
\begin{subtable}{.49\textwidth}
\caption{$\CFG=1.0$}
\adjustbox{width=\textwidth}{
\begin{tabular}{lcccccc}\toprule
\multicolumn{1}{l}{\multirow{2}[0]{*}{Method}} & \multicolumn{6}{c}{NFE} \\\cmidrule{2-7} & 5     & 6     & 7     & 8     & 9     & 10 \\\midrule
DPM-Solver++~\cite{lu2022dpmsolverpp} & 0.277 & 0.232 & 0.204 & 0.188 & 0.177 & 0.169\\
DEIS~\cite{zhang2022fast_deis} & 0.299 & 0.252 & 0.223 & 0.203 & 0.191 & 0.181\\
UniPC~\cite{zhao2023unipc} & 0.245 & 0.206 & 0.184 & 0.172 & 0.166 & 0.161\\
\rowcolor{Gray} DC-Solver (Ours) & \textbf{0.176} & \textbf{0.163} & \textbf{0.150} & \textbf{0.150} & \textbf{0.147} & \textbf{0.144}\\\bottomrule
\end{tabular}
}
\end{subtable}\hfill
\begin{subtable}{.49\textwidth}
\caption{$\CFG=1.5$}
\adjustbox{width=\textwidth}{
\begin{tabular}{lcccccc}\toprule
\multicolumn{1}{l}{\multirow{2}[0]{*}{Method}} & \multicolumn{6}{c}{NFE} \\\cmidrule{2-7} & 5     & 6     & 7     & 8     & 9     & 10 \\\midrule
DPM-Solver++~\cite{lu2022dpmsolverpp} & 0.288 & 0.242 & 0.213 & 0.195 & 0.182 & 0.173\\
DEIS~\cite{zhang2022fast_deis} & 0.307 & 0.260 & 0.229 & 0.209 & 0.194 & 0.184\\
UniPC~\cite{zhao2023unipc} & 0.260 & 0.219 & 0.194 & 0.180 & 0.170 & 0.163\\
\rowcolor{Gray} DC-Solver (Ours) & \textbf{0.213} & \textbf{0.188} & \textbf{0.169} & \textbf{0.158} & \textbf{0.153} & \textbf{0.149}\\\bottomrule
\end{tabular}
}
\end{subtable}\\
\vspace{6pt}
\begin{subtable}{.49\textwidth}
\caption{$\CFG=2.5$}
\adjustbox{width=\textwidth}{
\begin{tabular}{lcccccc}\toprule
\multicolumn{1}{l}{\multirow{2}[0]{*}{Method}} & \multicolumn{6}{c}{NFE} \\\cmidrule{2-7} & 5     & 6     & 7     & 8     & 9     & 10 \\\midrule
DPM-Solver++~\cite{lu2022dpmsolverpp} & 0.339 & 0.293 & 0.262 & 0.239 & 0.221 & 0.208\\
DEIS~\cite{zhang2022fast_deis} & 0.354 & 0.307 & 0.274 & 0.250 & 0.231 & 0.217\\
UniPC~\cite{zhao2023unipc} & 0.321 & 0.277 & 0.247 & 0.226 & 0.208 & 0.195\\
\rowcolor{Gray} DC-Solver (Ours) & \textbf{0.293} & \textbf{0.257} & \textbf{0.231} & \textbf{0.212} & \textbf{0.194} & \textbf{0.186}\\\bottomrule
\end{tabular}
}
\end{subtable}\hfill
\begin{subtable}{.49\textwidth}
\caption{$\CFG=3.5$}
\adjustbox{width=\textwidth}{
\begin{tabular}{lcccccc}\toprule
\multicolumn{1}{l}{\multirow{2}[0]{*}{Method}} & \multicolumn{6}{c}{NFE} \\\cmidrule{2-7} & 5     & 6     & 7     & 8     & 9     & 10 \\\midrule
DPM-Solver++~\cite{lu2022dpmsolverpp} & 0.409 & 0.360 & 0.323 & 0.295 & 0.272 & 0.255\\
DEIS~\cite{zhang2022fast_deis} & 0.419 & 0.369 & 0.332 & 0.303 & 0.280 & 0.262\\
UniPC~\cite{zhao2023unipc} & 0.397 & 0.349 & 0.312 & 0.285 & 0.262 & 0.245\\
\rowcolor{Gray} DC-Solver (Ours) & \textbf{0.375} & \textbf{0.331} & \textbf{0.299} & \textbf{0.270} & \textbf{0.251} & \textbf{0.239}\\\bottomrule
\end{tabular}
}
\end{subtable}\\
\vspace{6pt}
\begin{subtable}{.49\textwidth}
\caption{$\CFG=4.5$}
\adjustbox{width=\textwidth}{
\begin{tabular}{lcccccc}\toprule
\multicolumn{1}{l}{\multirow{2}[0]{*}{Method}} & \multicolumn{6}{c}{NFE} \\\cmidrule{2-7} & 5     & 6     & 7     & 8     & 9     & 10 \\\midrule
DPM-Solver++~\cite{lu2022dpmsolverpp} & 0.490 & 0.437 & 0.392 & 0.358 & 0.330 & 0.308\\
DEIS~\cite{zhang2022fast_deis} & 0.496 & 0.441 & 0.397 & 0.364 & 0.336 & 0.314\\
UniPC~\cite{zhao2023unipc} & 0.483 & 0.430 & 0.386 & 0.352 & 0.324 & 0.302\\
\rowcolor{Gray} DC-Solver (Ours) & \textbf{0.461} & \textbf{0.412} & \textbf{0.369} & \textbf{0.337} & \textbf{0.314} & \textbf{0.291}\\\bottomrule
\end{tabular}
}
\end{subtable}\hfill
\begin{subtable}{.49\textwidth}
\caption{$\CFG=5.5$}
\adjustbox{width=\textwidth}{
\begin{tabular}{lcccccc}\toprule
\multicolumn{1}{l}{\multirow{2}[0]{*}{Method}} & \multicolumn{6}{c}{NFE} \\\cmidrule{2-7} & 5     & 6     & 7     & 8     & 9     & 10 \\\midrule
DPM-Solver++~\cite{lu2022dpmsolverpp} & 0.580 & 0.517 & 0.468 & 0.427 & 0.395 & 0.368\\
DEIS~\cite{zhang2022fast_deis} & 0.581 & 0.519 & 0.469 & 0.430 & 0.398 & 0.372\\
UniPC~\cite{zhao2023unipc} & 0.577 & 0.516 & 0.468 & 0.428 & 0.395 & 0.367\\
\rowcolor{Gray} DC-Solver (Ours) & \textbf{0.551} & \textbf{0.492} & \textbf{0.446} & \textbf{0.406} & \textbf{0.381} & \textbf{0.355}\\\bottomrule
\end{tabular}
}
\end{subtable}\\
\vspace{6pt}

\begin{subtable}{.49\textwidth}
\caption{$\CFG=6.5$}
\adjustbox{width=\textwidth}{
\begin{tabular}{lcccccc}\toprule
\multicolumn{1}{l}{\multirow{2}[0]{*}{Method}} & \multicolumn{6}{c}{NFE} \\\cmidrule{2-7} & 5     & 6     & 7     & 8     & 9     & 10 \\\midrule
DPM-Solver++~\cite{lu2022dpmsolverpp} & 0.687 & 0.612 & 0.556 & 0.512 & 0.474 & 0.441\\
DEIS~\cite{zhang2022fast_deis} & 0.684 & 0.610 & 0.554 & 0.511 & 0.474 & 0.442\\
UniPC~\cite{zhao2023unipc} & 0.691 & 0.618 & 0.563 & 0.517 & 0.479 & 0.445\\
\rowcolor{Gray} DC-Solver (Ours) & \textbf{0.654} & \textbf{0.587} & \textbf{0.531} & \textbf{0.488} & \textbf{0.457} & \textbf{0.426}\\\bottomrule
\end{tabular}
}
\end{subtable}\hfill
\begin{subtable}{.49\textwidth}
\caption{$\CFG=7.5$}
\adjustbox{width=\textwidth}{
\begin{tabular}{lcccccc}\toprule
\multicolumn{1}{l}{\multirow{2}[0]{*}{Method}} & \multicolumn{6}{c}{NFE} \\\cmidrule{2-7} & 5     & 6     & 7     & 8     & 9     & 10 \\\midrule
DPM-Solver++~\cite{lu2022dpmsolverpp} & 0.812 & 0.719 & 0.648 & 0.597 & 0.554 & 0.518\\
DEIS~\cite{zhang2022fast_deis} & 0.802 & 0.712 & 0.643 & 0.592 & 0.552 & 0.517\\
UniPC~\cite{zhao2023unipc} & 0.825 & 0.733 & 0.666 & 0.612 & 0.570 & 0.530\\
\rowcolor{Gray} DC-Solver (Ours) & \textbf{0.766} & \textbf{0.689} & \textbf{0.620} & \textbf{0.573} & \textbf{0.537} & \textbf{0.501}\\\bottomrule
\end{tabular}
}
\end{subtable}
\end{table*}

\subsection{Quantitative Results}
We now provide detailed quantitative results on both unconditional sampling and conditional sampling. For unconditional sampling, we list the numerical results on FFHQ~\cite{karras2019ffhq}, LSUN-Church~\cite{yu2015lsun} and LSUN-Bedroom~\cite{yu2015lsun} in~\Cref{app:table_uncond}. All the pre-trained DPMs are from Latent-Diffusion~\cite{rombach2022high} and we use  FID$\downarrow$ as the evaluation metric. We demonstrate that our DC-Solver consistently attains the lowest FID on all three datasets. For conditional sampling, we summarize the results in~\Cref{app:table_cond}, where we compare the sampling quality of different methods on various configurations of classifier-free guidance scale (CFG). Our results indicate that DC-Solver can outperform previous methods by large margins with different choices of CFG and NFE. 

\begin{figure*}[!t]\label{fig:viz:uncond}
    \centering
     \begin{adjustbox}{width=.72\textwidth}
    \begin{tabu}to .75\textwidth{*{4}{X[C]}}
    \multicolumn{4}{c}{\makecell[c]{FFHQ}}\\\toprule
      DPM++~\cite{lu2022dpmsolverpp} & DEIS~\cite{zhang2022fast_deis} & UniPC~\cite{zhao2023unipc} & \textbf{DC-Solver} \\\midrule
    \raisebox{-.5\height}{\includegraphics[width=.18\textwidth]{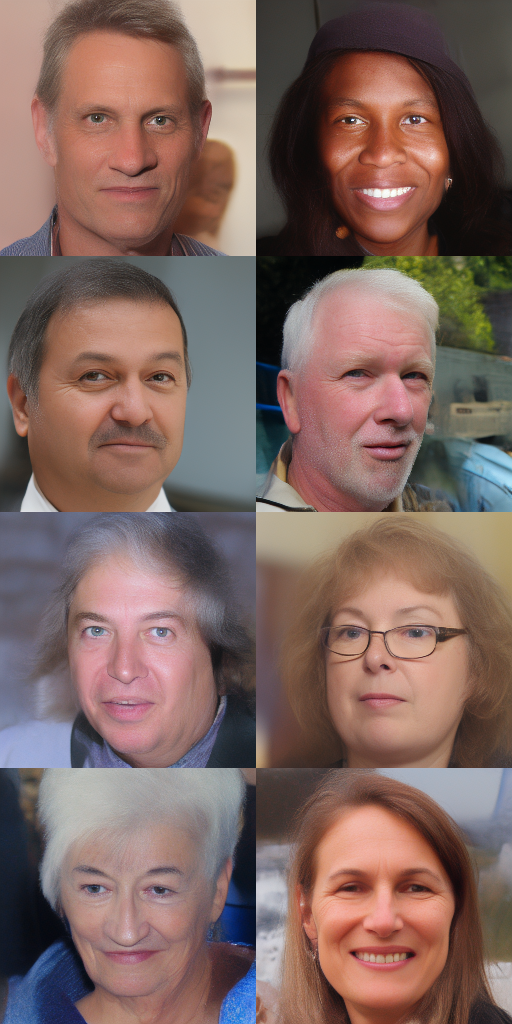}}
    & \raisebox{-.5\height}{\includegraphics[width=.18\textwidth]{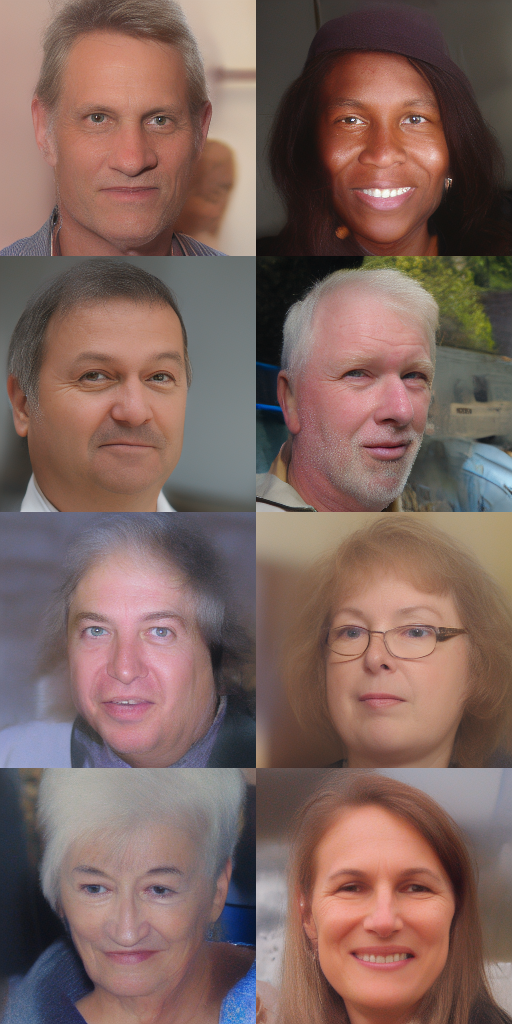}}
    & \raisebox{-.5\height}{\includegraphics[width=.18\textwidth]{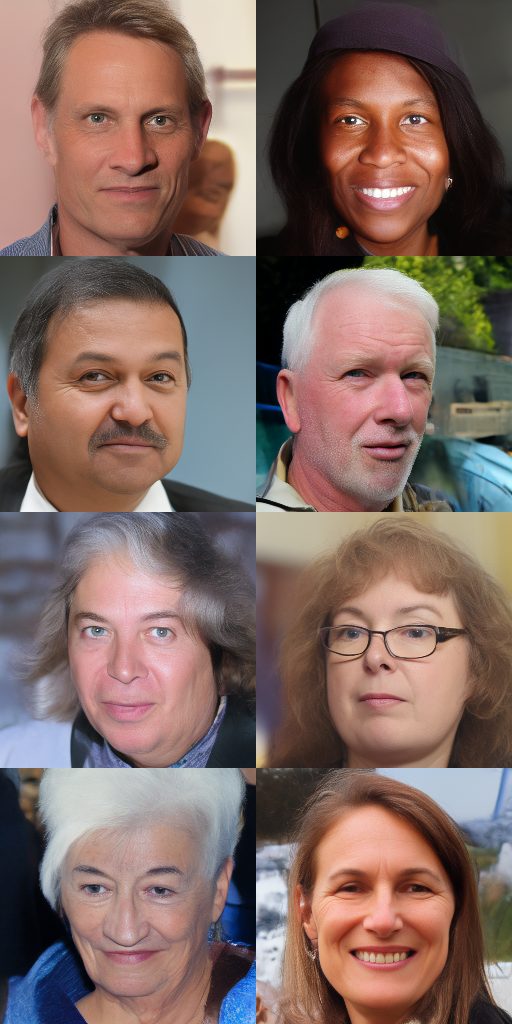}}
    & \raisebox{-.5\height}{\includegraphics[width=.18\textwidth]{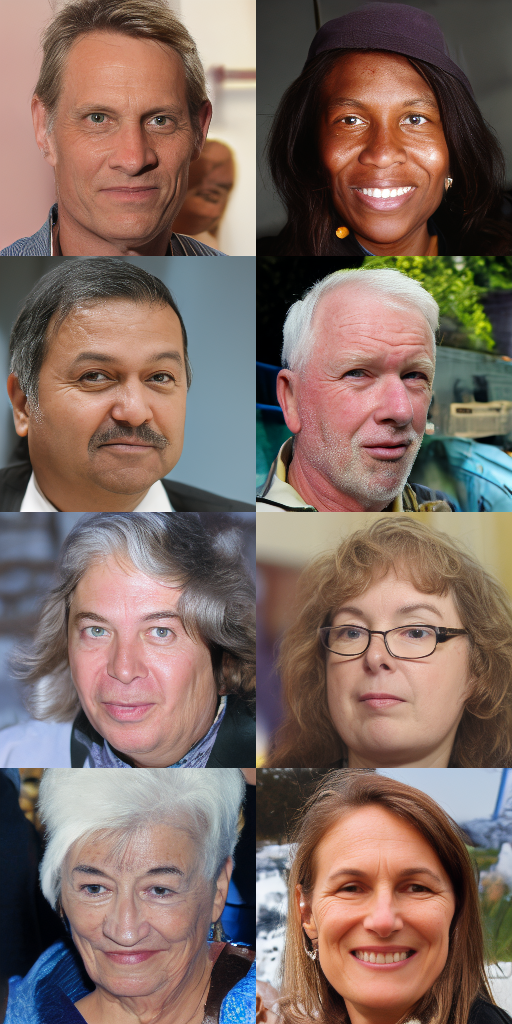}} \\\bottomrule
    \end{tabu}
    \end{adjustbox}

     \begin{adjustbox}{width=.72\textwidth}
    \begin{tabu}to.75\textwidth{*{4}{X[C]}}
    \multicolumn{4}{c}{\makecell[c]{LSUN Bedroom}}\\\toprule
      DPM++~\cite{lu2022dpmsolverpp} & DEIS~\cite{zhang2022fast_deis} & UniPC~\cite{zhao2023unipc} & \textbf{DC-Solver}  \\\midrule
    \raisebox{-.5\height}{\includegraphics[width=.18\textwidth]{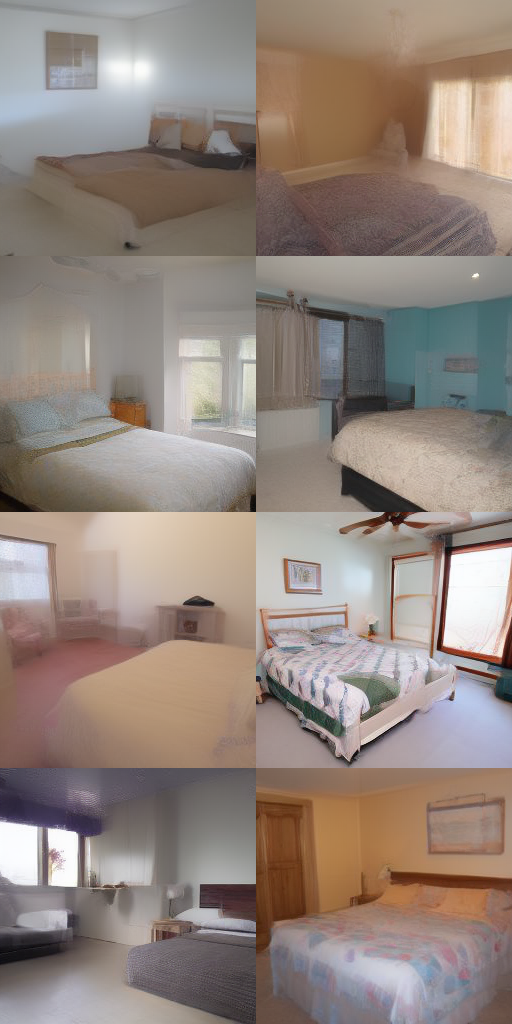}}
    & \raisebox{-.5\height}{\includegraphics[width=.18\textwidth]{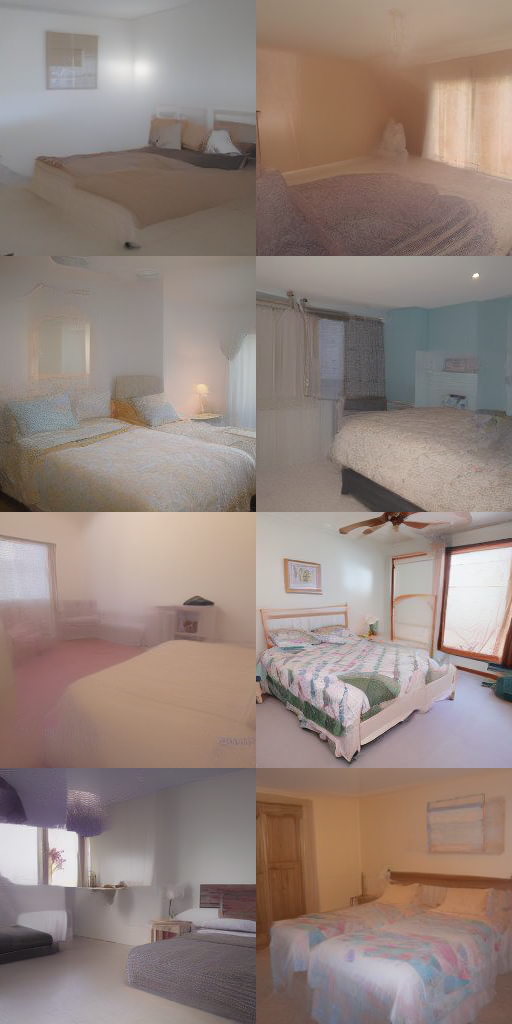}}
    & \raisebox{-.5\height}{\includegraphics[width=.18\textwidth]{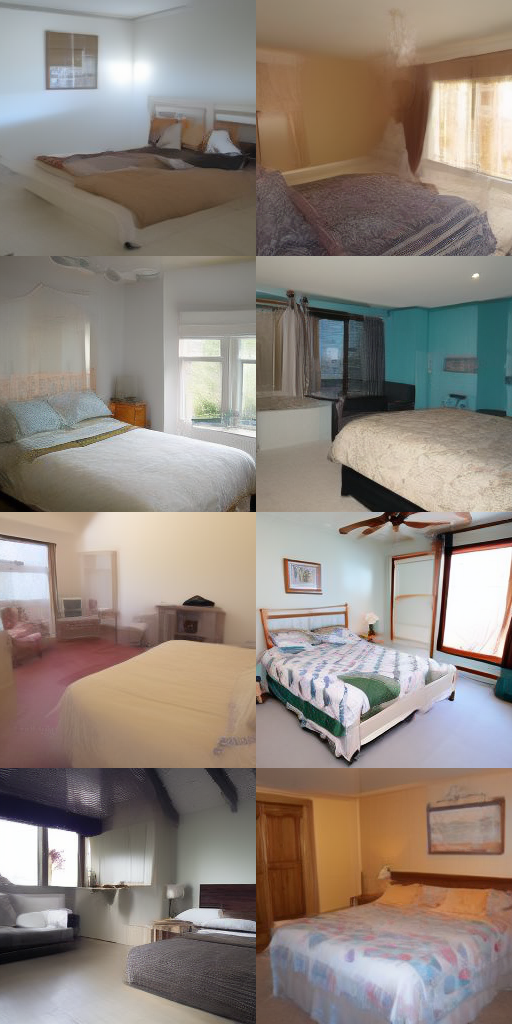}}
    & \raisebox{-.5\height}{\includegraphics[width=.18\textwidth]{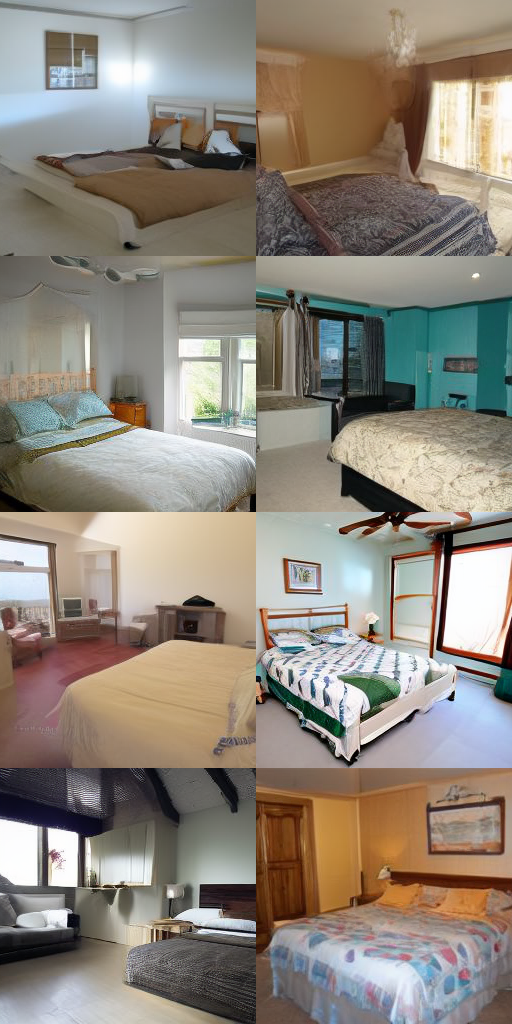}} \\\bottomrule
    \end{tabu}
    \end{adjustbox}
    
     \begin{adjustbox}{width=.72\textwidth}
    \begin{tabu}to.75\textwidth{*{4}{X[C]}}
    \multicolumn{4}{c}{\makecell[c]{LSUN Church}}\\\toprule
      DPM++~\cite{lu2022dpmsolverpp} & DEIS~\cite{zhang2022fast_deis} & UniPC~\cite{zhao2023unipc} & \textbf{DC-Solver} \\\midrule
    \raisebox{-.5\height}{\includegraphics[width=.18\textwidth]{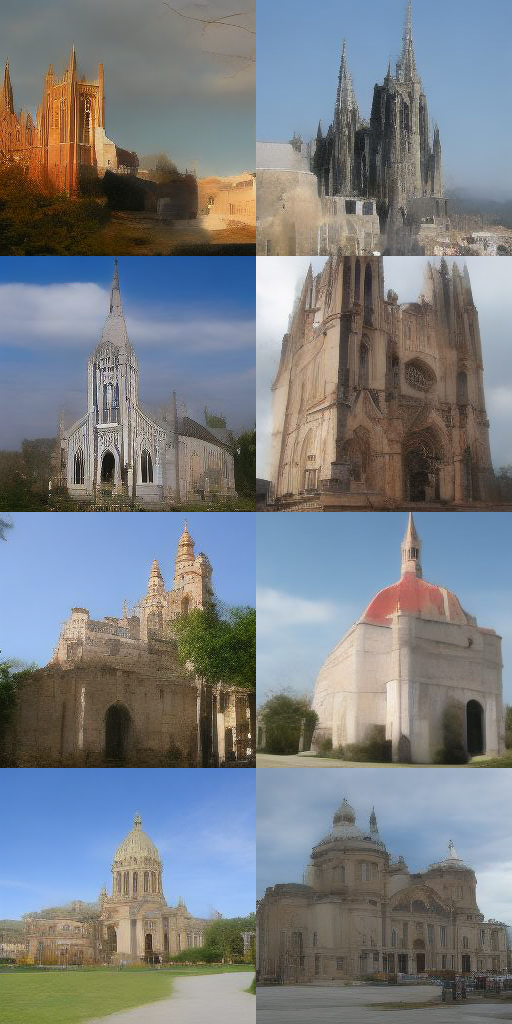}}
    & \raisebox{-.5\height}{\includegraphics[width=.18\textwidth]{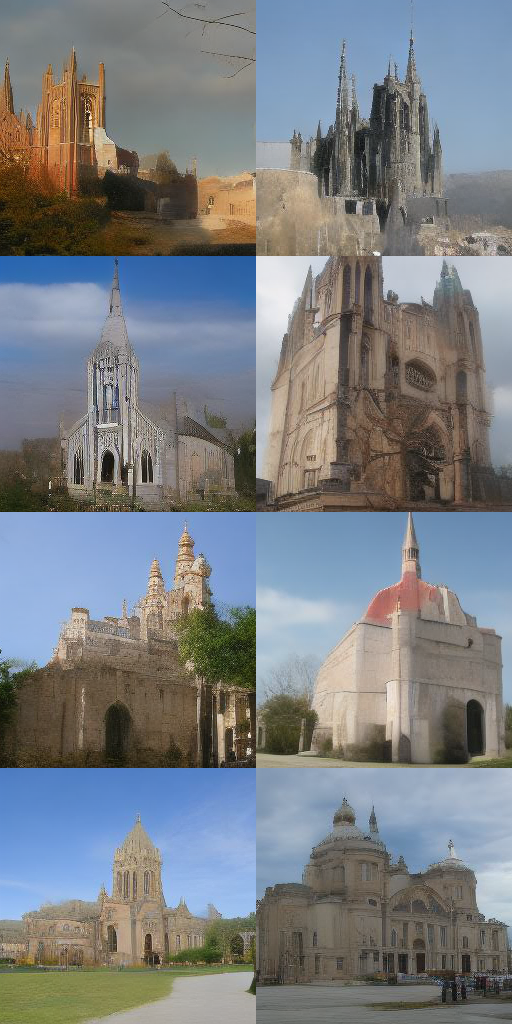}}
    & \raisebox{-.5\height}{\includegraphics[width=.18\textwidth]{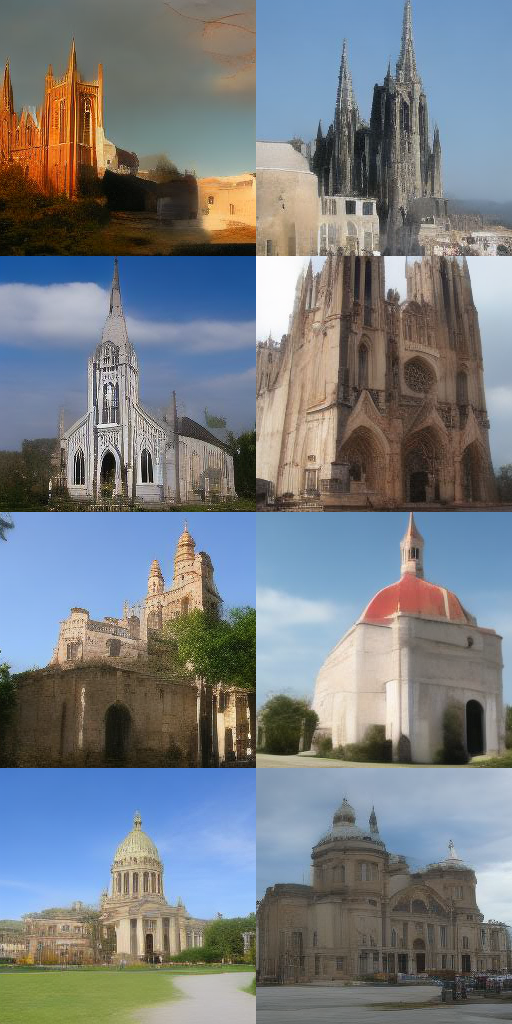}}
    & \raisebox{-.5\height}{\includegraphics[width=.18\textwidth]{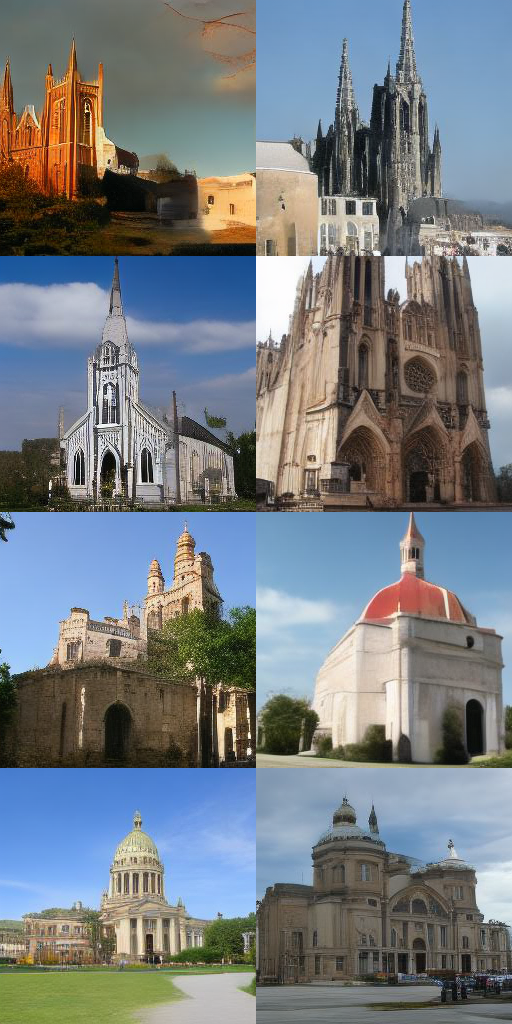}} \\\bottomrule
    \end{tabu}
    \end{adjustbox}
    \caption{Comparisons of unconditional sampling results across different datasets employing DC-Solver, UniPC~\cite{zhao2023unipc}, DPM-Solver++~\cite{lu2022dpmsolverpp} and DEIS~\cite{zhang2022fast_deis}. Images are sampled using only 5 NFE.} 
    \label{fig:unconditional_image}
\end{figure*}

\begin{figure*}
    \centering
    \begin{adjustbox}{width=\textwidth}
  \begin{tabular}{C{.40\textwidth}*{4}{C{.18\textwidth}}}
    \multicolumn{5}{c}{\makecell[c]{\textbf{Stable-Diffusion 1.5}}} \\ \toprule
    Text Prompts & DPM++~\cite{lu2022dpmsolverpp} & DEIS~\cite{zhang2022fast_deis} & UniPC~\cite{zhao2023unipc} & \textbf{DC-Solver} \\\midrule

    \makecell[c]{\textit{``A realistic photo of a tropical} \\ \textit{rainforest with diverse wildlife.''}}
    & \raisebox{-.5\height}{\includegraphics[width=.74\textwidth]{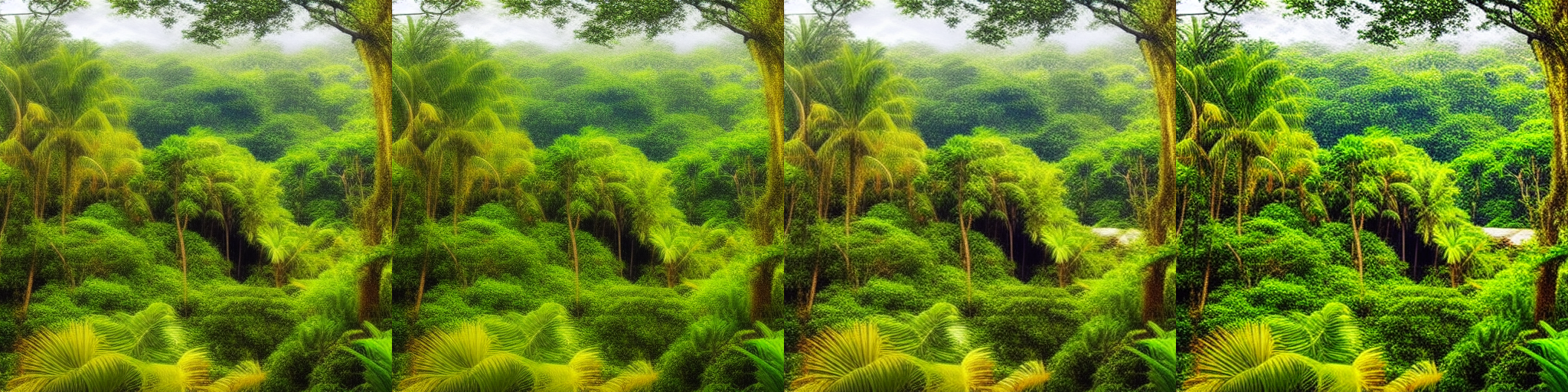}} \\ \midrule

    \makecell[c]{\textit{``Close up of a teddy bear} \\ \textit{sitting on top of it.''}}
    & \raisebox{-.5\height}{\includegraphics[width=.74\textwidth]{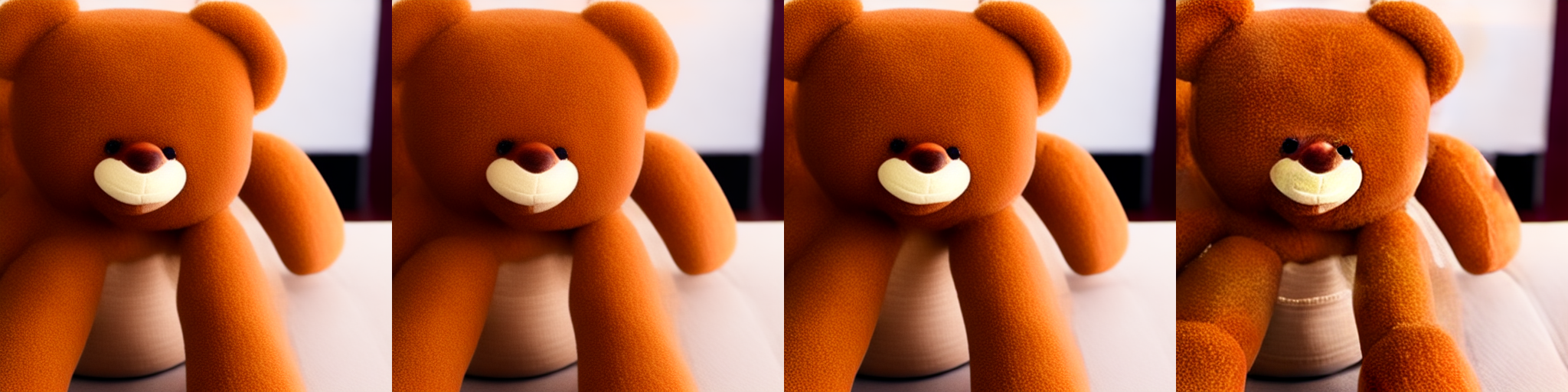}} \\ \bottomrule
  \end{tabular}
  \end{adjustbox}
  
  \begin{adjustbox}{width=\textwidth}
  \begin{tabular}{C{.40\textwidth}*{4}{C{.18\textwidth}}}
    \multicolumn{5}{c}{\makecell[c]{\textbf{Stable-Diffusion 2.1}}} \\ \toprule
    Text Prompts & DPM++~\cite{lu2022dpmsolverpp} & DEIS~\cite{zhang2022fast_deis} & UniPC~\cite{zhao2023unipc} & \textbf{DC-Solver} \\\midrule

    \makecell[c]{\textit{``Group of people standing on} \\ \textit{top of a snow covered slope.''}}
     &\raisebox{-.5\height}{\includegraphics[width=.74\textwidth]{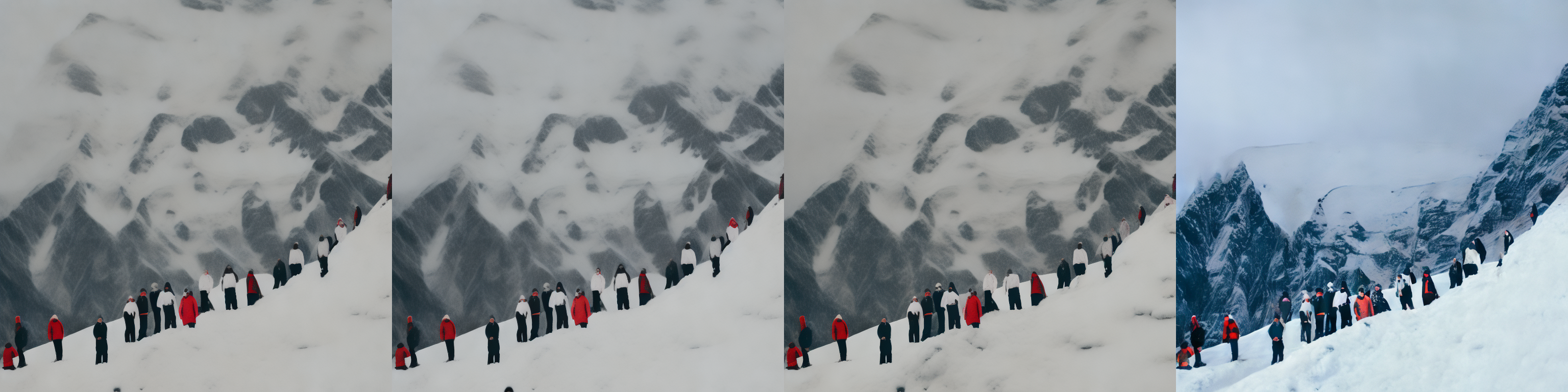}}\\\midrule
   
    \makecell[c]{\textit{``Close up of a bird perched} \\ \textit{on top of a tree.''}}
    &\raisebox{-.5\height}{\includegraphics[width=.74\textwidth]{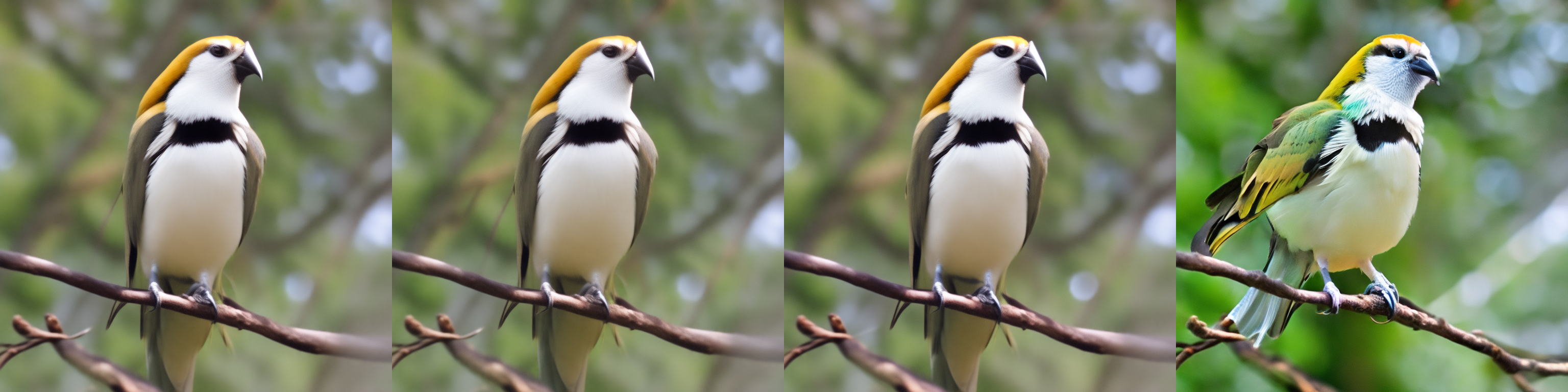}}\\\bottomrule
    \end{tabular}
    \end{adjustbox}
    
    \begin{adjustbox}{width=\textwidth}
  \begin{tabular}{C{.40\textwidth}*{4}{C{.18\textwidth}}}
    \multicolumn{5}{c}{\makecell[c]{\textbf{Stable-Diffusion XL}}} \\ \toprule
    Text Prompts & DPM++~\cite{lu2022dpmsolverpp} & DEIS~\cite{zhang2022fast_deis} & UniPC~\cite{zhao2023unipc} & \textbf{DC-Solver} \\\midrule

    \makecell[c]{\textit{``Pizza that is sitting on} \\ \textit{top of a plate.''}}
     &\raisebox{-.5\height}{\includegraphics[width=.74\textwidth]{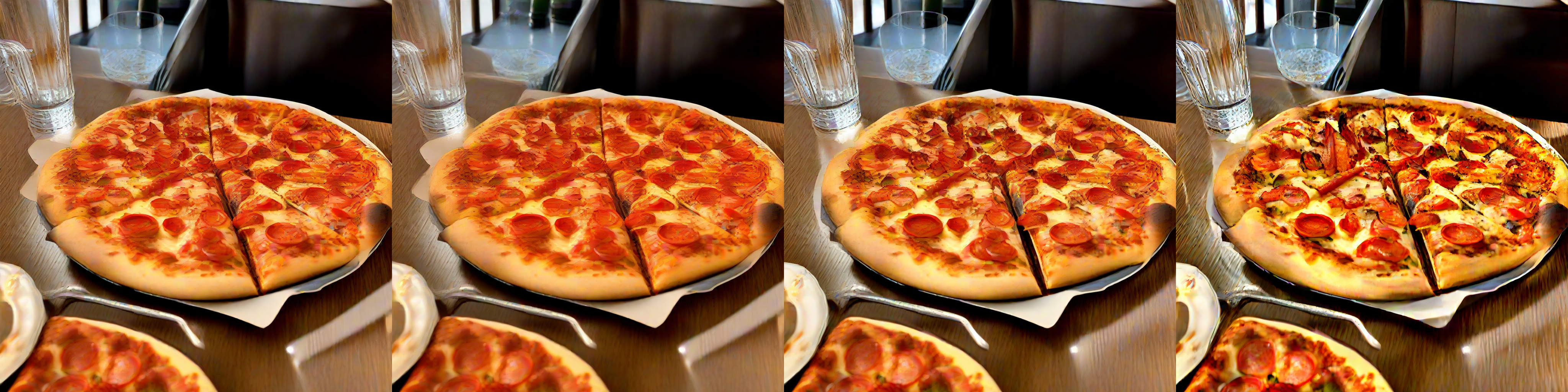}}\\\midrule
   
    \makecell[c]{\textit{``A serene waterfall in a} \\ \textit{lush green forest.''}}
    &\raisebox{-.5\height}{\includegraphics[width=.74\textwidth]{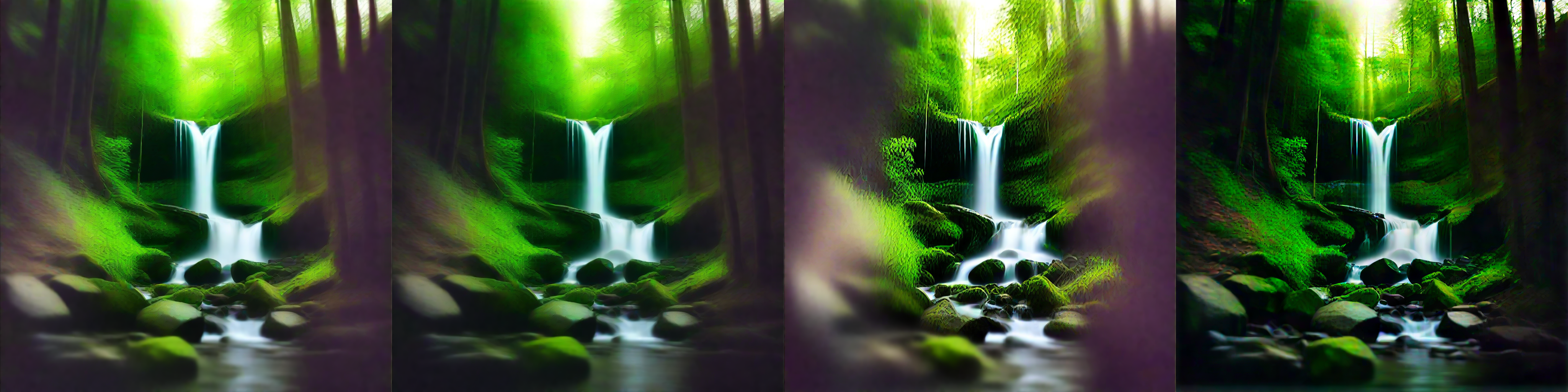}}\\\bottomrule
    \end{tabular}
    \end{adjustbox}

    \caption{Comparisons of text-to-image results on different pre-trained Stable-Diffusion models using DC-Solver, UniPC~\cite{zhao2023unipc}, DPM-Solver++~\cite{lu2022dpmsolverpp} and DEIS~\cite{zhang2022fast_deis}. Images are sampled with a classifier-free guidance scale 7.5, using only 5 NFE.} 
    \label{fig:conditional_image}
\end{figure*}

\begin{figure*}[!t]
    \includegraphics[width=\linewidth]{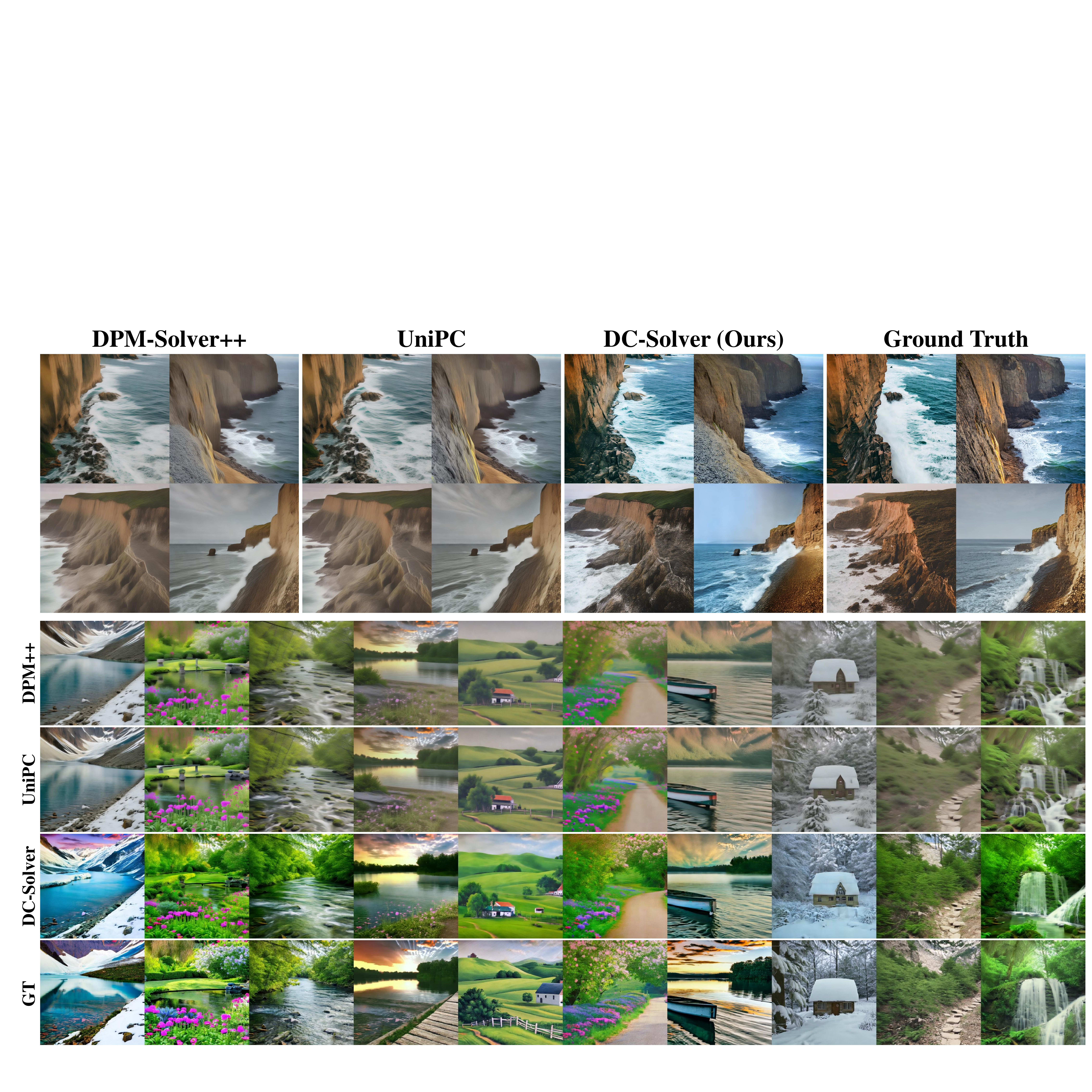}
    \caption{Comparison with GT (upper part) and more uncurated results (lower part). For all the compared methods, we adopt NFE=5 and use the same initial noise. We can clearly find that DC-Solver outperforms other methods.}
\end{figure*}

\subsection{Qualitative Results}\label{app:more_viz}
We present additional visualizations to showcase the superior qualitative performance of DC-Solver in both unconditional sampling and conditional sampling. Initially, we compare the unconditional sampling quality of four different methods on FFHQ~\cite{karras2019ffhq}, LSUN-Church~\cite{yu2015lsun} and LSUN-Bedroom~\cite{yu2015lsun} in~\Cref{fig:unconditional_image}, employing only 5 NFE. We show that DC-Solver can produce the clearest and most realistic images across all three datasets. Furthermore, we explore conditional sampling on different pre-trained Stable-Diffusion(SD) models, including SD1.5, SD2.1 and SDXL, with only 5 NFE. The reuslts in \Cref{fig:conditional_image} demonstrate that our DC-Solver is able to generate more realistic images with more details, consistently outperforming other methods on all three SD models.

\section{Implementation Details}
Our DC-Solver is built on the predictor-corrector framework UniPC~\cite{zhao2023unipc} by default. We set the order of the dynamic compensation $K=2$ and skip the compensation when $i<K$, which is equivalent to $\rho_0=\rho_1=1.0$. $K=2$ also implies a parabola-like interpolation trajectory. During the searching stage, we set the number of datapoints $N=10$. We use a 999-step DDIM~\cite{song2020denoising_ddim} to generate the ground truth trajectory $\bs x_t^{\rm GT}$ in the conditional sampling while we found a 200-step DDIM is enough for unconditional sampling. We use AdamW~\cite{adamw} to optimize the compensation ratios for only $L=40$ iterations and set the learning rate of learnable parameters as $\alpha=0.1$. For the cascade polynomial regression, we use $p_1=p_2=2$ and $p_3=3$. For the experiments on Latent-Diffusion~\cite{rombach2022high}, we adopt their original checkpoints and use the default latent size 64$\times$64. For the experiments of conditional sampling using Stable-Diffusion~\cite{rombach2022high}, we use the default latent size of 64$\times$64, 64$\times$64,  96$\times$96, 128$\times$128 for SD1.4, SD1.5, SD2.1, SDXL, respectively. It is worth noting that our method can be scaled up to larger latent sizes and pre-trained DPMs mainly because of the effectiveness of the designed dynamic compensation, which can be controlled by several scalar parameters.


\end{appendix}

\bibliographystyle{splncs04}
\bibliography{main}

\end{document}